\setlist{noitemsep, leftmargin=*, topsep=0pt}
\setlist[1]{labelindent=\parindent}
\declaretheoremstyle[mdframed={outerlinewidth=1pt,innertopmargin=5pt, innerbottommargin=2pt, backgroundcolor=gray!5, linecolor=gray!30}]{thmstyle}
\declaretheoremstyle[mdframed={outerlinewidth=1pt,innertopmargin=5pt, innerbottommargin=2pt, backgroundcolor=blue!5, linecolor=gray!30}]{assumptionstyle}
\declaretheorem[style=thmstyle]{theorem}
\declaretheorem[style=thmstyle]{proposition}
\newcommand{\thelink}{\href{https://github.com/antonior92/adversarial_training_kernel}{\texttt{github.com/antonior92/adversarial\_training\_kernel}}}
\title{Kernel Learning with Adversarial Features: Numerical Efficiency and Adaptive Regularization}
\author{
  Antônio H. Ribeiro \\
  Uppsala University\\
  antonio.horta.ribeiro@it.uu.se
  \And
  David Vävinggren \\
  Uppsala University\\
  david.vavinggren@it.uu.se
  \And
  Dave Zachariah \\
  Uppsala University\\
  dave.zachariah@it.uu.se
  \And
  Thomas B. Schön \\
  Uppsala University \\
  thomas.schon@uu.se
  \And
  Francis Bach \\
  PSL Research University / INRIA \\
  francis.bach@inria.fr
}
\begin{document}

\maketitle
\begin{abstract}
Adversarial training has emerged as a key technique to enhance model robustness against adversarial input perturbations. Many of the existing methods rely on computationally expensive min-max problems that limit their application in practice. We propose a novel formulation of adversarial training in reproducing kernel Hilbert spaces, shifting from input to feature-space perturbations.  This reformulation enables the exact solution of inner maximization and efficient optimization. It also provides a regularized estimator that naturally adapts to the noise level and the smoothness of the underlying function. We establish conditions under which the feature-perturbed formulation is a relaxation of the original problem and propose an efficient optimization algorithm based on iterative kernel ridge regression. We provide generalization bounds that help to understand the properties of the method. We also extend the formulation to multiple kernel learning. Empirical evaluation shows good performance in both clean and adversarial settings.
\end{abstract}

\part{}
\vspace{-30pt}

\section{Introduction}

Adversarial training can be used to learn models that are robust against input perturbations~\citep{madry_towards_2018}. It considers training samples that have been modified by an adversary, with the goal of obtaining a model that will be more robust when faced with newly perturbed samples. Consider a training dataset $\{(\x_i, y_i)\}_{i =1}^n$ consisting of $\ntrain$ samples of dimension $\dom \times \R$.  The training procedure is formulated as the following min-max optimization problem:\vspace{-4pt}
\begin{equation}  \label{eq:original_formulation} \min_{\myf\in{\RKHS}} \frac{1}{\ntrain}\sum_{i=1}^{\ntrain}\max_{\Delta \x_i \in \Omega_\dom} \ell(y_i, \myf(\x_i + \Delta \x_i)),
\end{equation}
finding the best function $\myf \in \RKHS$ subject to the worst disturbances in a predefined set of admissible input perturbations $\Omega_\dom$, for instance, $\Omega_\dom = \{\Delta \x: \|\Delta \x\| \le \delta\}$.
For nonlinear functions, solving this problem directly is often challenging; simple optimization typically requires two stages. First we differentiate through the inner maximization, which is often approximated via projected gradient descent~\citep{madry_towards_2018}, and then we differentiate again through the solver steps in the computation of gradients through the outer minimization.

We consider $\RKHS$ to be a reproducing kernel Hilbert space (RKHS) and represent the function $\myf$ as a linear model applied to a (potentially infinite-dimensional) feature transformation of the input. Let $\fmap(\x)$ denote the corresponding feature map, so that any $\myf \in \RKHS$ evaluated at $\x$ can be expressed as $\myf(\x) = \dotpp{\myf}{\fmap(\x)}$, where $\dotpp{\cdot}{\cdot}$ denotes the inner product in $\RKHS$, see~\citet{scholkopf_learning_2002}. In this paper, we introduce perturbations directly in the feature space rather than in the input space. This leads to the following formulation:\vspace{-3pt}
\begin{equation}
\label{eq:linear_relaxation}
\min_{\myf\in{\RKHS}}  \frac{1}{\ntrain}\sum_{i=1}^{\ntrain}\max_{\dx\in \Omega_{\RKHS}}  \ell(y_i, \dotpp{\myf}{\fmap(\x_i) + \dx}).
\end{equation}
Note that the disturbance is applied in a (potentially infinite-dimensional) feature space $\RKHS$, rather than the input space $\dom$.  As we will show, this formulation of adversarial training is particularly amenable to both analysis and efficient optimization.   Working in the feature space allows for the exact solution of inner maximization problem, allowing for more efficient optimization. We further develop a dedicated solver tailored to this formulation. Beyond its computational advantages, we establish theoretical guarantees showing that the feature-perturbed objective in Eq.~\eqref{eq:linear_relaxation} upper bounds its input-perturbed counterpart in Eq.~\eqref{eq:original_formulation}, ensuring that optimizing the former provides robustness guarantees for the latter. Empirically, we also demonstrate that this formulation achieves strong robustness to adversarial input perturbations.

\begin{figure}
    \centering\vspace{-10pt}
    \begin{adjustbox}{width=1.05\linewidth,center} \hspace{-10pt} 
    \includegraphics[width=0.31\linewidth]{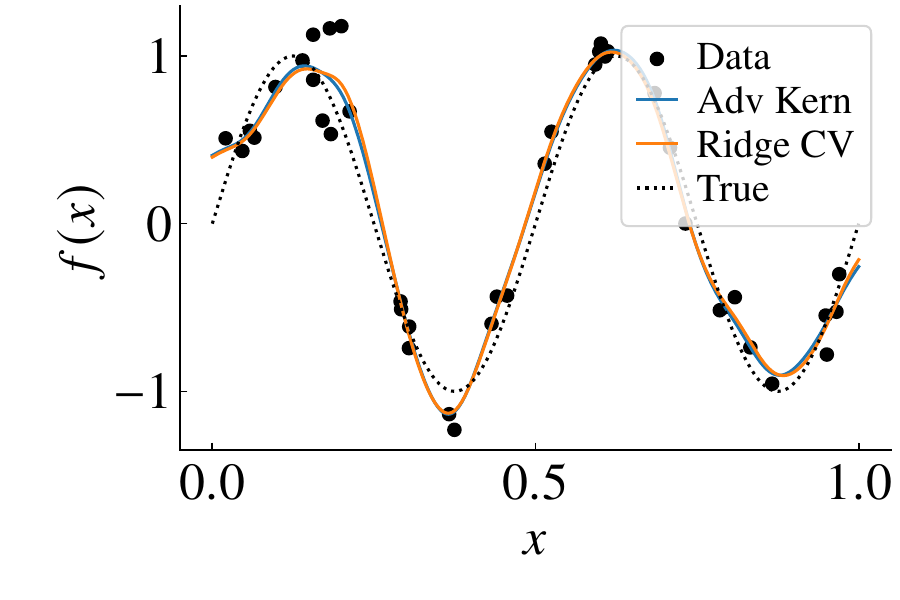}
    \includegraphics[width=0.31\linewidth]{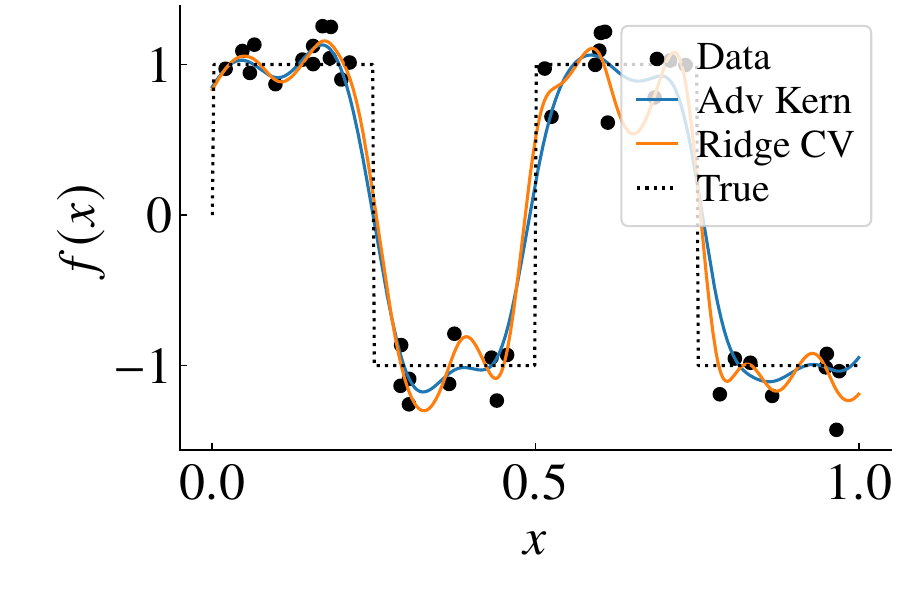}
    \includegraphics[width=0.31\linewidth]{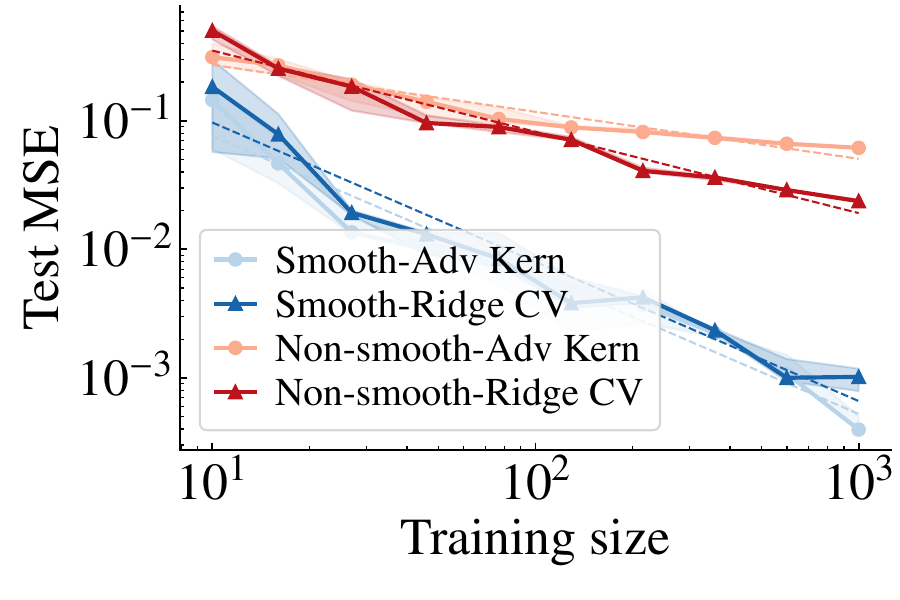}
    \end{adjustbox}\vspace{-8pt}
    \caption{We compare adversarial kernel training with cross-validated kernel ridge regression (Ridge CV). All experiments make use of the Matern-$5/2$ kernel. \emph{Left:} we show how it fits a smooth target function. \emph{Middle:} a non-smooth target function.  \emph{Right:} we show the test mean square error (MSE) vs the size of the training dataset~$n$, and illustrate how adversarial kernel training adapts to the target function similarly to kernel ridge regression with cross validation, without the need of tuning on hyperparameters. }
    \label{fig:adversarial-kernel-training}
\end{figure}

We study the properties of our method not only in \textbf{adversarial}, but also in \textbf{clean data}, analysing the \textbf{regularization properties} of our estimator. We are interested in this estimator promoting model simplicity and enhancing generalization to unseen, non-adversarial data. In the linear (finite-dimensional) case, adversarial training has been shown to behave similarly to parameter-shrinking approaches such as lasso and ridge regression~\citep{ribeiro_regularization_2023}. In this setting, unlike when we are trying to defend against a threat model with fixed adversarial radius $\delta$, we can allow the training adversarial radius to decay as we get more data. Particularly, if we allow $\delta\propto {\rm 1}/\sqrt{n}$, our estimator is consistent and also have near-oracle performance~\citep{ribeiro_regularization_2023,xie_high-dimensional_2024}. 
Previous work argue that a key advantage of adversarial training as an alternative to parameter shrinking methods is that it achieves the same sample-efficiency rates as the latter without requiring $\delta$ to depend on the noise level~\citep{ribeiro_regularization_2023,xie_high-dimensional_2024}, paralleling the square-root Lasso~\citep{belloni_square-root_2011}. 
In this paper, we prove that similar properties hold for our infinite-dimensional formulation in~\eqref{eq:linear_relaxation}, with default $\delta$ set without requiring knowledge of the signal-to-noise ratio yielding similar rates to kernel regression when this quantity is known.  In practice, as shown in Figure~\ref{fig:adversarial-kernel-training}, the method has an adaptive regularization property with the default choice often performing comparably with kernel ridge regression with a parameter chosen using cross-validation.  The rightmost plot further highlights the method’s adaptivity to the underlying function class.

 \textbf{Our contributions} are to propose and analyze feature-perturbed adversarial kernel training. We:
\begin{itemize}
    \item \textbf{Establish equivalences between input and feature-perturbed formulation:} We identify conditions under which the feature-perturbed formulation~\eqref{eq:linear_relaxation} upper-bounds the input-perturbed objective~\eqref{eq:original_formulation}, ensuring that solving the former yields guarantees for the latter.
    \item \textbf{Propose an efficient solver:} We show that the feature-perturbed problem~\eqref{eq:linear_relaxation} allows for the inner maximization to be solved exactly and derive a tailored solver for its optimization.

    \item \textbf{Provide generalization guarantees:} We derive upper bounds for the generalization error that highlight how the hyperparameter can be set without access to noise levels, leading to practical performance without hyperparameter tuning, with natural adaptivity to noise levels.
    
    \item \textbf{Extend the framework to multiple kernel learning:} We generalize our formulation to support combinations of kernels, enhancing expressivity.
\end{itemize}
These results demonstrate the practical and theoretical appeal of feature-perturbed adversarial learning. We validate the method on real and simulated data. We make our implementation available at~\thelink.

\section{Background}

Different estimators will be relevant to our developments and will be compared in this paper. Let $\RKHS$ be a Reproducing Kernel Hilbert Space (RKHS) associated with the kernel ${k(\x, \x') = \dotpp{\fmap(\x)}{\fmap(\x')}}$ and the induced norm $\|\cdot\|_{\RKHS}$. See \citet{scholkopf_learning_2002, bach_learning_2024}. Here, we will \textbf{focus on the squared loss} for simplicity, but some of the developments can apply more generally.

\textbf{Adversarial kernel training} is our main object of study. We will call  the following estimator \emph{feature-perturbed adversarial kernel training}:
\begin{equation*}
\min_{\myf\in{\RKHS}} \frac{1}{\ntrain}\sum_{i=1}^{\ntrain} \max_{d\in \Omega_{\RKHS}}  (y_i - \dotpp{\myf}{\fmap(\x_i) + \dx})^2,
\end{equation*} 
where $\Omega_{\RKHS} = \{\dx : \|\dx\|_{\RKHS} \le \delta\}$.  This estimator stands in contrast to the \emph{input-perturbed adversarial kernel training} defined in~\Cref{eq:original_formulation}, where the perturbation is applied to the input rather than the features. For simplicity, and unless otherwise specified, we use the term \emph{adversarial kernel training} to refer to the feature-perturbed variant, which is the main focus of this work. The feature-perturbed formulation is a relaxation of the input-perturbed counterpart, as it will be explained in~\Cref{sec:formulation}. We will also show that it is closely related to kernel ridge regression.

\textbf{Kernel ridge regression}~\citep{scholkopf_learning_2002,bach_learning_2024} estimates $\myf\in \RKHS$ by solving,
\begin{equation}
\label{krr}
\min_{\myf\in{\RKHS}} \frac{1}{\ntrain}\sum_{i=1}^{\ntrain}(y_i - \myf(\x_i))^2+ \lambda \|\myf\|_{\RKHS}^2,
\end{equation}
where $\RKHS$ is a reproducing kernel Hilbert space.
Here, even though we might be optimizing over an infinite-dimensional space, the kernel trick can be used to transform the problem into a regularized least squares problem using a kernel matrix.

\textbf{Multiple kernel learning}~\citep{rakotomamonjy_simplemkl_2008, lanckriet_statistical_2004} consists of finding a set of functions $\{\myf_j\in \RKHS_j, j=1, \cdots, p\}$  by solving 
\begin{equation}
\label{mkl}
\min_{\substack{\myf_j \in \RKHS_j \\j = 1, \dots, p}} \frac{1}{\ntrain}\sum_{i=1}^{\ntrain}(y_i - \sum_{j=1}^p\myf_j(\x_i))^2+ \lambda \Big(\sum_{j=1}^p \|\myf_j\|_{\RKHS_j}\Big)^2.
\end{equation}
The framework combines multiple kernel functions—each representing a different notion of similarity between data points—into a single model, allowing the algorithm to simultaneously learn the model parameters and the best combination of kernels. This is particularly useful when data may have multiple modalities, or when it is not possible for a single kernel to capture all relevant patterns in the data.
In \Cref{sec:amkl} we propose an \emph{adversarial multiple kernel learning} method.

\begin{table}
\centering
\vspace{5pt}
\renewcommand{\arraystretch}{1.5}
\caption{Kernel functions and choices of  $\Omega_\dom$ for which \(\Omega_\dom\subseteq\{\Delta \x: D_\RKHS(\x, \x + \Delta \x) \le \delta \} \). Equivalences proved in \Cref{apendix:formulation}. For the polynomial kernel, the constant $C$ depends on $\|\x\|$ and $\|\x + \Delta \x\|$ and is specified in the appendix.}\footnotesize
\begin{tabular}{ll|c}
\toprule
\textbf{Kernel} & \textbf{\(k(x, x')\)} & \(\Omega_\dom\) \\ \midrule
\textbf{Linear } & \(\x^\top \x'\) & \( \|\Delta \x\|_2 \leq \delta\) \\ 
\textbf{Polynomial} & \((1 + \x^\top \x')^d\) & \( \|\Delta \x\|_2 \leq C \delta^{1/d}\) \\ 
\textbf{Exponential} & \(\exp(-\gamma \|\x - \x'\|_2)\) & \( \|\Delta \x\|_2 \leq \frac{1}{\gamma} \log \frac{2}{2 - \delta}\) \\
\textbf{Gaussian} & \(\exp(-\gamma \|\x - \x'\|_2^2)\) & \(\|\Delta \x\|_2 \leq \sqrt{\frac{1}{\gamma} \log \frac{2}{2 - \delta}}\) \\ 
\textbf{Laplacian } & \(\exp(-\gamma \|\x - \x'\|_1)\) & \(\|\Delta \x\|_1 \leq \frac{1}{\gamma} \log \frac{2}{2 - \delta}\) \\ \bottomrule
\end{tabular}
\vspace{5pt}
\label{tab:kernel_bounds}
\end{table}

\section{Problem formulation}
\label{sec:formulation}

Here, we will establish properties of adversarial kernel training. Particularly, we analyse the feature-perturbed adversarial error. We first prove that it upper bounds the input-perturbed adversarial error and later show that it has an easy-to-analyze closed-form solution.

\textbf{Relation between feature and input perturbations:}
In the case of RKHS, we can always find $\Omega_{\dom}$ for which feature-perturbed adversarial kernel training is a relaxation of the input-perturbed adversarial kernel training, with Eq.~\eqref{eq:linear_relaxation} being an upper bound to Eq.~\eqref{eq:original_formulation}. Particularly, the result holds for inputs within a ball in the kernel distance ${D_\RKHS(\x, \x') = \|\fmap(\x)- \fmap(\x')\|_{\RKHS}} $, as stated in the next proposition (proved in the appendix). \Cref{tab:kernel_bounds} also provides concrete examples of a few kernels and the sets $\Omega_{\dom}$ for which the above proposition holds. The proof is provided in~\Cref{sec:proof-relaxation}.
\begin{proposition}
\label{thm:relaxation}
Let $\Omega_{\RKHS} = \{\dx : \|\dx\|_{\RKHS} \le \delta\}$ and ${\Omega_\dom  = \{\Delta \x: D_\RKHS(\x, \x + \Delta \x) \le \delta \}}$ then:
\[\max_{d\in \Omega_{\RKHS}}  (y - \dotpp{\myf}{\fmap(\x) + \dx})^2 \ge \max_{\Delta \x \in \Omega_\dom}  (y - \myf(\x + \Delta \x))^2, \text{  for all }\myf \in\RKHS.\]
\end{proposition}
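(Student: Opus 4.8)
The plan is to prove the inequality by a change of variables that embeds every admissible \emph{input} perturbation into the feasible \emph{feature} perturbation set while exactly preserving the predicted value. Note first that the left-hand side of the claimed inequality does not depend on $\Delta\x$, whereas the right-hand side is a maximum over $\Delta\x\in\Omega_\dom$. Hence it suffices to show that for each fixed $\myf\in\RKHS$ and each $\Delta\x\in\Omega_\dom$ there is a feasible $\dx\in\Omega_{\RKHS}$ for which $(y-\dotpp{\myf}{\fmap(\x)+\dx})^2 = (y-\myf(\x+\Delta\x))^2$; maximizing over $\Delta\x$ on the right then gives the result.

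Concretely, I would take the induced feature displacement $\dx := \fmap(\x+\Delta\x) - \fmap(\x)$. Feasibility is immediate from the definition of $\Omega_\dom$: we have $\|\dx\|_{\RKHS} = \|\fmap(\x+\Delta\x)-\fmap(\x)\|_{\RKHS} = D_\RKHS(\x, \x+\Delta\x) \le \delta$, so $\dx\in\Omega_{\RKHS}$. The predicted value is preserved because $\fmap(\x)+\dx = \fmap(\x+\Delta\x)$, and therefore, using the representation $\myf(\cdot)=\dotpp{\myf}{\fmap(\cdot)}$, we get $\dotpp{\myf}{\fmap(\x)+\dx} = \dotpp{\myf}{\fmap(\x+\Delta\x)} = \myf(\x+\Delta\x)$.

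Combining these two facts, for this particular $\dx$ the squared residual equals exactly $(y-\myf(\x+\Delta\x))^2$. Since $\dx$ is feasible, the feature-space maximum is at least this value, i.e. $\max_{d\in\Omega_{\RKHS}}(y-\dotpp{\myf}{\fmap(\x)+\dx})^2 \ge (y-\myf(\x+\Delta\x))^2$. As $\Delta\x\in\Omega_\dom$ was arbitrary, I can take the maximum over $\Delta\x$ on the right to conclude.

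I do not anticipate a genuine obstacle: the set $\Omega_\dom$ is defined precisely so that the image $\{\fmap(\x+\Delta\x)-\fmap(\x):\Delta\x\in\Omega_\dom\}$ lies inside the ball $\Omega_{\RKHS}$, so the argument reduces to a domain-inclusion (relaxation) statement. The only conceptual point to flag is that this inclusion is typically strict — the feature ball contains perturbation directions not realizable by \emph{any} input perturbation — which is exactly why Eq.~\eqref{eq:linear_relaxation} is an upper bound (a relaxation) rather than an equality, and why the reverse inequality need not hold. Specializing to the concrete kernels of \Cref{tab:kernel_bounds} is a separate and routine computation, amounting to bounding $D_\RKHS(\x, \x+\Delta\x)$ above in terms of $\|\Delta\x\|$ for each kernel; I would carry out those case-by-case estimates in the appendix.
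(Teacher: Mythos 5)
Your proposal is correct and follows essentially the same route as the paper's own proof: both map each admissible input perturbation $\Delta\x\in\Omega_\dom$ to the induced feature displacement $\dx=\fmap(\x+\Delta\x)-\fmap(\x)$, verify $\dx\in\Omega_{\RKHS}$ directly from the definition of $\Omega_\dom$ via the kernel distance, and conclude by domain inclusion that the feature-space maximum dominates the input-space maximum. No gaps.
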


\textbf{The dual formulation of feature-perturbed adversarial error:}
Feature perturbed adversarial training is closely related to kernel ridge regression. The following proposition extends earlier work~\citep{ribeiro_regularization_2023} and provides a characterization that makes this connection explicit.
\begin{proposition}[closed formula for inner-maximization]
    \label{thm:dual_formulation}
    If $\Omega_{\RKHS} = \{\dx : \|\dx\|_{\RKHS} \le \delta\}$ for every $\x, y,\myf$, then  we have:
    \begin{equation*}
    \max_{d\in \Omega_{\RKHS}}  (y - \dotpp{\myf}{\fmap(\x) + \dx})^2 = (|y - \myf(\x)| + \delta \|\myf\|_{\RKHS})^2.
    \end{equation*}
\end{proposition}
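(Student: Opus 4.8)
The plan is to reduce the inner maximization to a one-dimensional scalar problem by exploiting the reproducing property together with the linearity of the objective in the perturbation $\dx$. First I would use $\dotpp{\myf}{\fmap(\x)} = \myf(\x)$ to rewrite the objective as $(r - \dotpp{\myf}{\dx})^2$, where $r := y - \myf(\x)$ is the clean residual. The crucial observation is that the dependence on the perturbation enters only through the single scalar $t := \dotpp{\myf}{\dx}$, so the whole problem collapses onto understanding which values $t$ can take as $\dx$ ranges over the ball $\Omega_{\RKHS}$.

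Next I would characterize that range. By Cauchy--Schwarz, $|t| = |\dotpp{\myf}{\dx}| \le \|\myf\|_{\RKHS}\,\|\dx\|_{\RKHS} \le \delta\,\|\myf\|_{\RKHS}$, and both extreme values $t = \pm\,\delta\,\|\myf\|_{\RKHS}$ are attained by the aligned choice $\dx = \pm\,\delta\,\myf/\|\myf\|_{\RKHS}$ (the degenerate case $\myf = 0$ is immediate, since then both sides equal $y^2$). Because $\Omega_{\RKHS}$ is convex and $t$ depends continuously and linearly on $\dx$, the scalar $t$ sweeps out exactly the interval $[-\delta\,\|\myf\|_{\RKHS},\, \delta\,\|\myf\|_{\RKHS}]$.

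The maximization then becomes $\max_{|t| \le s}(r - t)^2$ with $s := \delta\,\|\myf\|_{\RKHS}$. Since $(r-t)^2$ is convex in $t$, its maximum over an interval is attained at an endpoint, yielding $\max\{(r-s)^2,\,(r+s)^2\}$. I would finish with the elementary identity $\max\{(r-s)^2,\,(r+s)^2\} = (|r| + s)^2$, valid for $s \ge 0$ (the sign of $r$ selects which endpoint is worst, but in either case the value is $(|r|+s)^2$), and substitute back $r = y - \myf(\x)$ and $s = \delta\,\|\myf\|_{\RKHS}$ to recover the claimed formula.

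I do not anticipate a genuine obstacle: the argument is essentially a tight Cauchy--Schwarz step followed by a scalar convexity observation. The only points needing minor care are verifying that the Cauchy--Schwarz bound is achieved (choosing $\dx$ parallel to $\myf$, with the sign that maximizes $(r-t)^2$) and disposing of the degenerate $\myf = 0$ case; both are handled in a line.
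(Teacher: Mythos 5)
Your proof is correct and is exactly the standard dual-norm argument: reduce to the scalar $t=\dotpp{\myf}{\dx}$, show via Cauchy--Schwarz (with the aligned perturbation attaining equality) that $t$ ranges over $[-\delta\|\myf\|_{\RKHS},\delta\|\myf\|_{\RKHS}]$, and maximize the convex quadratic at an endpoint to get $(|r|+s)^2$. The paper does not spell out a separate proof of this proposition --- it presents it as an extension of the linear-case identity from \citet{ribeiro_regularization_2023} --- and that underlying argument is the same one you give, so there is nothing missing; your handling of the $\myf=0$ degenerate case and of attainment of the Cauchy--Schwarz bound covers the only points requiring care.
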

This result shows that adversarial training loss under RKHS-norm bounded perturbations admits a closed-form expression. As a consequence, our original optimization problem  in Eq.~\eqref{eq:linear_relaxation} can be equivalently rewritten as:\vspace{-10pt}
\begin{equation}
\label{eq:easier_formulation}
\min_{\myf\in{\RKHS}} \frac{1}{\ntrain}\sum_{i=1}^{\ntrain}(|y_i - \myf(\x_i)| + \delta \|\myf\|_{\RKHS})^2,
\end{equation}
which by inspection is very similar to ridge regression (for which the norm is outside the parentheses), and suggests similar properties.  Still, as we will see in \Cref{sec:convergence}, the two problems have important differences that might make it advantageous to use the latter formulation rather than kernel ridge regression. 

This reformulation highlights an appealing property of feature-perturbed adversarial training. While both the input-perturbed and feature-perturbed formulations can be expressed as pointwise maxima of convex functions (and are therefore convex), the inner maximization in the input-perturbed case is non-concave and generally intractable. In contrast, the feature-perturbed formulation admits the closed-form solution derived above, yielding a convex and computationally favorable objective that preserves a clear interpretation in terms of robustness to RKHS-bounded perturbations. In the following, we propose an iterative algorithm to efficiently solve this problem.

\section{Optimization}
\label{sec:optimization}

In this section, we propose an iterative kernel ridge regression algorithm (\Cref{alg:irrr}) for solving~\eqref{eq:linear_relaxation} efficiently. The algorithm proposed here is influenced by \citet{ribeiro_efficient_2025}, but adapted to non-parametric regression from the linear regression setting. Below, we describe the key parts of the algorithm, while additional details are available in the appendix.

\subsection{Proposed algorithm}

\Cref{alg:irrr} follows from the idea of using the following variational reformulation of the loss:
\begin{align}
    \label{eta-trick-vi}
   (|y - \myf(\x)| + \delta \|\myf\|_{\RKHS})^2 = &\inf_{\eta^{0},\eta^{1}}\frac{|y - \myf(\x)|^2}{\eta^{0}} + \frac{\delta^2 \|\myf\|_{\RKHS}^2}{\eta^{1}}\\\nonumber
&\text{s.t. } \eta^{0} + \eta^{1} = 1,  \eta^{0}> 0,\eta^{1}>0.
\end{align}
The above reformulation is often referred to as $\eta$-trick \citep{bach_eta-trick_2019, bach_optimization_2011}. And, for  ${|y - \myf(\x)| > 0}$ and ${\|\myf\|_{\RKHS}>0}$, allows the closed formula solution:
\begin{equation}
\label{eta-trick-solution}
\eta^0 = \frac{|y - \myf(\x)| + \delta \|\myf\|_{\RKHS}}{|y - \myf(\x)|}, ~~~~ \eta^1 = \frac{|y - \myf(\x)| + \delta \|\myf\|_{\RKHS}}{\delta \|\myf\|_{\RKHS}}.
\end{equation}
We can use this trick to rewrite the original loss as the squared sum as a sum of squares. We apply the equivalence in~\eqref{eta-trick-vi} individually to each sample, yielding:\vspace{-5pt}
\begin{equation}
\label{eq:variational_reform}
    \begin{aligned}
\min_{\myf}&\inf_{\eta^{0}_i,\eta^{1}_i} \frac{1}{\ntrain}\sum_{i=1}^{\ntrain}\frac{|y_i - \myf(\x_i)|^2}{\eta^{0}_i} + \frac{1}{\ntrain}\sum_{i=1}^{\ntrain} \frac{\delta^2}{\eta^{1}_i} \|\myf\|_{\RKHS}^2\\
& \text{s.t. } \eta^{0}_i + \eta^{1}_i = 1,  \eta^{0}_i> 0,\eta^{1}_i>0, \forall i.
\end{aligned}
\end{equation}
We solve it as a block-wise coordinate problem, alternating between 1) solving for $\eta^{0}, \eta^{1}$ and  computing the weights 
\begin{equation}
    \label{weights-from-etatrick}
    w_i = \frac{1}{\eta^{0}_i},~~~ \lambda = \frac{1}{\ntrain}\sum_{i} \frac{\delta^2}{\eta^{1}_i}\text{for all }i
\end{equation}  and 2) solving the kernel ridge regression problem that arises to find an approximated $\myf$.

\subsection{Additional analysis and comments}
\label{additional-analysis}
\begin{algorithm}[t]\footnotesize
    \caption{Iterative Kernel Ridge Regression}\label{alg:irrr}
    \textbf{Initialize:}  weights $w_i\leftarrow 1, i = 1, \dots, \ntrain$; and, $\lambda \leftarrow \delta$ 
    
    \textbf{Repeat:}
    \begin{enumerate}
    \item \emph{Solve} reweighted kernel ridge regression:\vspace{-5pt}
    \begin{equation*}
        \widehat{\myf} \leftarrow\text{arg}\min_{\myf} \frac{1}{\ntrain}\sum_{i=1}^{\ntrain} w_i(y_i - \myf(\x_i))^2 + \lambda \|\myf\|_{\RKHS}^2  \vspace{-5pt}
    \end{equation*}
    \item \emph{Update} weights (using Eq. \eqref{weights-from-etatrick}):~~$\boldsymbol{w}, \boldsymbol{\lambda}\leftarrow \mathtt{UpdateWeights}(\widehat{\myf})$
    \item \emph{Quit} if \texttt{StopCriteria}.
    \end{enumerate}
\end{algorithm}
\textbf{Practical implementation.} To make the algorithm numerically stable, we use a variation of the $\eta$-trick with an added $\epsilon$ that guarantees convergence to the solution of \eqref{eq:variational_reform} as $\epsilon \rightarrow 0$. See~\citet{ribeiro_efficient_2025,bach_eta-trick_2019}. Using $\epsilon>0$ is important to avoid numerical problems and to guarantee convergence.  Otherwise, when either $|y - \myf(\x)|$ or $\|\myf\|_{\RKHS}$ are too small, we could obtain extremely large values for $1/\eta$ and an ill-conditioned problem. 

\textbf{Convergence.}  The convergence of the above algorithm (for $\epsilon > 0$) follows from results about the convergence of block coordinate descent for problems that are jointly convex and differentiable with unique solutions along each direction~\citep[Section 8.6]{luenberger_linear_2008}. 

\textbf{Computational complexity.} Solving kernel ridge regression typically incurs a computational cost of $O(\ntrain^3)$ due to the need to factor the kernel matrix. This term usually dominates the cost of \Cref{alg:irrr}, that usually converges within a few iterations~\citep{ribeiro_efficient_2025,bach_eta-trick_2019}. To further reduce complexity, \citet{ribeiro_efficient_2025} proposes using the conjugate gradient (CG) method to approximately solve the reweighted ridge regression problem with only quadratic complexity. Similar considerations apply to our setting. Moreover, since our formulation relies on kernel evaluations, it is naturally compatible with Nyström approximations~\citep{williams_using_2001}, offering additional scalability for large datasets. While these extensions are promising, they fall outside our scope and are not explored in the current study.

\section{Generalization bounds}
\label{sec:convergence}

We establish worst-case error bounds for adversarial kernel training.  Particularly, we assume that the data is generated as
\[y = \myf^*(\x) + \sigma \vv{w},\]
where $\myf^*$ is the function that we want to recover and $\vv{w}$ is a unitary vector representing the noise. We use $\sigma$ to quantify the magnitude of the noise, and $R$ to denote the function magnitude $R= \|\myf^*\|_{\RKHS}$. We provide upper bounds on the in-sample excess risk:\footnote{This quantity is called excess risk because it is equal to the difference between the expected risk $\E_{\vv{w}}[\frac{1}{n}\sum_{i = 1}^n(\myf(\x_i) - y_i)^2]$ and the Bayes optimal risk  $\E_{\vv{w}}[\frac{1}{n}\sum_{i = 1}^n(\myf^* (\x_i) - y_i)^2]$~\citep[Prop. 3.3]{bach_learning_2024}.
}
\[\ERisk(\myf - \myf^*) =  \frac{1}{n}\sum_{i = 1}^n(\myf(\x_i) - \myf^*(\x_i))^2.\]
This setting is also called fixed-design setting~\citep{bach_learning_2024}. We choose to study it because it provides maximum intuition without some of the additional technical challenges of the random-design case, where we would bound~$\E_{\x}[\myf(\x) - \myf^*(\x)]$.  As discussed \Cref{Mispecified model bounds}, bounds in the fixed-design setting can also inform results in the random-design setting~\citep[Chap. 14]{wainwright_highdimensional_2019}. As we will see, adversarial training has the desirable property that the parameter $\delta$ can be set without knowledge of the noise magnitude $\sigma$. The same is not true for kernel ridge regression.

Our analysis is as follows: we first consider deterministic values of $\vv{w}$ and the function $\myf^* $ within the model class, that is, $\myf^* \in \RKHS$ . We define $\gamma$ and $\beta \in \R$ as follows:
\begin{equation}\label{eq:condition}
    \gamma = \sup_{\|\vv{g}\|_\RKHS \le 1} \frac{1}{\ntrain}\sum_{i=1}^{\ntrain}w_i \vv{g}(\x_i), ~~~~ 
\beta = \sup_{\ERisk(\vv{g})\le 1} \frac{1}{\ntrain}\sum_{i=1}^{\ntrain}w_i \vv{g}(\x_i),
\end{equation}
where we take supremum over all $\vv{g}\in \RKHS$, and we obtain bounds in terms of this quantities. This analysis is described in~\Cref{sec:simple_bounds}.

Next, we obtain high probability bounds for the case where the noise is Gaussian.  Particularly under Gaussian noise we have, with high probability, that $\gamma = O(n^{-1/2})$ \textbf{\textit{for linear or any translational invariant kernel}}. On the other hand, with high probability, $\beta^2  = O(p/n)$ \textbf{\textit{for linear kernel}} or \textbf{\textit{for the matérn kernel}}  $\beta^2 = O(n^{-\frac{2}{2+p/\nu}})$. We will discuss these results in detail in \Cref{gaussian noise bounds}. 
Finally, \Cref{Mispecified model bounds} mentions generalization to the case where the model is mispecified or to the random design setting. This progression allows us to progressively analyse more advanced cases, with each section building on the previous one.

\subsection{Deterministic bounds and well-specified models}
\label{sec:simple_bounds}

Let consider 
deterministic values of $\vv{w}$, chosen such that $\frac{1}{\ntrain}\sum_{i=1}^{\ntrain}w_i^2 = 1$.
The next theorem bounds the excess risk in terms of $\gamma$ and $\beta$. We are using proof techniques similar to those in~\citet[Chapters 7 and 13]{wainwright_highdimensional_2019}, and also related to those developed in the context of adversarial training~\citep{ribeiro_regularization_2023, xie_high-dimensional_2024}.

\begin{theorem}[Excess risk for adversarial kernel training]
\label{basic-akr}
Let $\sigma$ quantify the magnitude of the noise, and $R= \|\myf^*\|_{\RKHS}$ denote the function magnitude, and $\delta$ be the adversarial training radius.
We have the following upper bound on the excess risk:
\[\ERisk(\estf - \myf^* ) \le \min (\mathcal{B}^{\rm adv}_\gamma, \mathcal{B}^{\rm adv}_\beta).\]
where
$\mathcal{B}^{\rm adv}_\gamma = 4 (\frac{2\sigma R}{\delta}+ R^2) (\delta +\gamma)^2$ and $\mathcal{B}^{\rm adv}_\beta =  4\sigma \delta R  +  10 \delta^2 R^2  + 16\sigma^2 \beta^2.$
\end{theorem}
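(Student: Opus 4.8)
The plan is to run a \emph{basic inequality} argument in the spirit of \citet{wainwright_highdimensional_2019} and \citet{ribeiro_regularization_2023}, exploiting the closed form from \Cref{thm:dual_formulation} so that $\estf$ minimizes $L(\myf) = \frac{1}{\ntrain}\sum_{i=1}^{\ntrain}(|y_i - \myf(\x_i)| + \delta\|\myf\|_{\RKHS})^2$. By optimality, $L(\estf) \le L(\myf^*)$, and the whole proof is about sandwiching these two quantities. First I would evaluate the right-hand side: since $y_i - \myf^*(\x_i) = \sigma w_i$ and $\|\myf^*\|_{\RKHS}=R$, expanding the square gives $L(\myf^*) = \sigma^2 + 2\sigma\delta R\,(\frac{1}{\ntrain}\sum_i|w_i|) + \delta^2 R^2$, and Cauchy--Schwarz with $\frac{1}{\ntrain}\sum_i w_i^2 = 1$ yields $\frac{1}{\ntrain}\sum_i|w_i|\le 1$, so $L(\myf^*)\le \sigma^2 + 2\sigma\delta R + \delta^2 R^2$.

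For the left-hand side I would set $\Delta = \estf - \myf^*$, write the residual as $\hat r_i = \sigma w_i - \Delta(\x_i)$, and expand $(|\hat r_i|+\delta\|\estf\|_{\RKHS})^2$, discarding the nonnegative mixed term to get $L(\estf)\ge \frac{1}{\ntrain}\sum_i \hat r_i^2 + \delta^2\|\estf\|_{\RKHS}^2$. Using $\frac{1}{\ntrain}\sum_i\hat r_i^2 = \sigma^2 - 2\sigma S + \ERisk(\Delta)$ with the cross term $S = \frac{1}{\ntrain}\sum_i w_i\Delta(\x_i)$, and combining with the bound on $L(\myf^*)$, the $\sigma^2$ terms cancel and I reach the master inequality
\[\ERisk(\Delta) + \delta^2\|\estf\|_{\RKHS}^2 \le 2\sigma S + 2\sigma\delta R + \delta^2 R^2.\]
Everything then reduces to controlling the single cross term $S$, which is exactly where the two quantities in \eqref{eq:condition} enter and produce the two branches of the $\min$.

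For the $\beta$-branch I would bound $S \le \beta\sqrt{\ERisk(\Delta)}$ directly from the definition of $\beta$ (the rescaling $\Delta/\sqrt{\ERisk(\Delta)}$ is feasible), discard the nonnegative $\delta^2\|\estf\|_{\RKHS}^2$, and read the result as a quadratic inequality in $\sqrt{\ERisk(\Delta)}$; solving it and applying $(a+b)^2\le 2a^2+2b^2$ gives a bound of the form $\mathcal B^{\rm adv}_\beta$. For the $\gamma$-branch I would instead use $S\le\gamma\|\Delta\|_{\RKHS}\le\gamma(\|\estf\|_{\RKHS}+R)$ by the triangle inequality, and then exploit the retained term $\delta^2\|\estf\|_{\RKHS}^2$ (equivalently the a priori bound $\delta\|\estf\|_{\RKHS}\le\sqrt{L(\estf)}\le\sqrt{L(\myf^*)}\le\sigma+\delta R$) to absorb the resulting term linear in $\|\estf\|_{\RKHS}$ by completing the square, yielding $\mathcal B^{\rm adv}_\gamma$. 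Taking the minimum of the two establishes the claim.

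The main obstacle I anticipate is handling the cross term together with the simultaneous control of $\|\estf\|_{\RKHS}$ in the $\gamma$-branch: unlike kernel ridge regression the norm sits \emph{inside} the squared loss, so the nonsmooth $|\cdot|$ must be dealt with (circumvented above by expanding the square and dropping the nonnegative mixed term), and care is needed so that the completion-of-square step preserves the stated dependence on $\sigma,R,\delta,\gamma$. Tracking which term dominates in which noise regime is precisely what motivates reporting the minimum of the two bounds rather than a single one, since the $\gamma$-branch is sharp at low noise while the $\beta$-branch governs the general rate.
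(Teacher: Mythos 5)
Your skeleton (basic inequality $L(\estf)\le L(\myf^*)$, expansion of both sides, then two branches via $\gamma$ and $\beta$) is the same as the paper's, and your $\beta$-branch goes through: dropping the nonnegative mixed term and $\delta^2\|\estf\|_{\RKHS}^2$, bounding $S\le\beta\sqrt{\ERisk(\Delta)}$, and solving the resulting quadratic yields $\ERisk\le 4\sigma^2\beta^2+4\sigma\delta R+2\delta^2R^2$, which is even tighter than the stated $\mathcal{B}^{\rm adv}_\beta$. The problem is the $\gamma$-branch, and it is a genuine gap, not a matter of bookkeeping. From your master inequality the term to control is $2\sigma\gamma\|\estf\|_{\RKHS}$, and both of your proposed devices are too weak. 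Completing the square against the retained $\delta^2\|\estf\|_{\RKHS}^2$ gives $2\sigma\gamma\|\estf\|_{\RKHS}-\delta^2\|\estf\|_{\RKHS}^2\le \sigma^2\gamma^2/\delta^2$; the a priori bound $\delta\|\estf\|_{\RKHS}\le\sqrt{L(\myf^*)}\le\sigma+\delta R$ gives $2\sigma\gamma\|\estf\|_{\RKHS}\le 2\sigma^2\gamma/\delta+2\sigma\gamma R$. Either way you are left with a residual of order $\sigma^2\gamma^2/\delta^2$ or $\sigma^2\gamma/\delta$, which in the intended regime $\delta\asymp\gamma\asymp n^{-1/2}$ is $\Theta(\sigma^2)$ and does not vanish with $n$. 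The claimed $\mathcal{B}^{\rm adv}_\gamma=4(\tfrac{2\sigma R}{\delta}+R^2)(\delta+\gamma)^2$ is $O(\sigma R\gamma)$ for $\delta\propto\gamma$, and this adaptivity to the signal-to-noise ratio is precisely the point of the theorem; your route cannot reach it.

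What is missing is a bound on $\|\estf\|_{\RKHS}$ of order $R$ rather than $\sigma/\delta+R$. The paper obtains this in \Cref{norm_relationship}, namely $\delta\|\estf\|_{\RKHS}\le(\delta+\gamma)\|\myf^*\|_{\RKHS}$, and this cannot be extracted from the zeroth-order comparison $L(\estf)\le L(\myf^*)$ alone (running your master inequality as a quadratic in $\|\estf\|_{\RKHS}$ again only yields $\|\estf\|_{\RKHS}\lesssim\sigma\gamma/\delta^2+\dots$). The paper instead uses the first-order optimality condition $0\in\partial L(\estf)$, takes the inner product of the stationarity relation with $\myf^*-\estf$, bounds the subgradient terms by Cauchy--Schwarz, and derives a contradiction if $\|\estf\|_{\RKHS}\ge(1+\gamma/\delta)\|\myf^*\|_{\RKHS}$. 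With that lemma in hand, $2\sigma\gamma\|\myf^*-\estf\|_{\RKHS}\le 2\sigma\gamma(2+\gamma/\delta)R$ and the $\gamma$-branch closes as stated. You correctly identified that the simultaneous control of $\|\estf\|_{\RKHS}$ is the main obstacle, but the resolution requires this separate subgradient argument, not a completion of squares inside the basic inequality.
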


Considering $\mathcal{B}^{\rm adv}_\gamma$ and $\mathcal{B}^{\rm adv}_\beta$ allows us to describe two different regimes.  The optimal choice for $\mathcal{B}^{\rm adv}_\gamma$ is to set $\delta \propto  \gamma$. In this case, ignoring constants and higher order terms: $\mathcal{B}^{\rm adv}_\gamma =  O(\sigma R \gamma).$
The bound based on $\mathcal{B}^{\rm adv}_\beta$ is a bound that does not really improve as we increase $\delta$, and the optimal choice here would be $\delta = 0$.  Overall, for any $\delta < \frac{\sigma}{R}\beta^2$, we obtain $\mathcal{B}^{\rm adv}_\beta= O(\sigma^2\beta^2).$ 
Next, we present the equivalent result for kernel ridge regression for comparison. 
\begin{theorem}[Excess risk for kernel ridge regression]
\label{eq:basic_krr}
Let $\sigma$ quantify the magnitude of the noise, $R= \|\myf^*\|_{\RKHS}$, and let $\lambda$ be the regularization parameter in kernel ridge regression~\Cref{krr}.
We have the following upper-bound on the excess risk 
$\ERisk(\estf - \myf^* ) \le \min (\mathcal{B}^{\rm kr}_\gamma, \mathcal{B}^{\rm kr}_\beta).$
Where $\mathcal{B}^{\rm kr}_\gamma =  2 \frac{(R \lambda+ \sigma\gamma)^2}{\lambda}$ and $\mathcal{B}^{\rm kr}_\beta = 2(R^2 \lambda +  \sigma^2\beta^2).$
\end{theorem}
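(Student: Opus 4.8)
The plan is to mirror the argument behind \Cref{basic-akr}, replacing the adversarial objective with the ridge objective so that the two theorems can be compared term by term. The starting point is the \emph{basic inequality}: since $\estf$ minimizes the regularized objective in~\eqref{krr} and $\myf^*$ is feasible, evaluating the objective at both functions gives
\[
\frac{1}{\ntrain}\sum_{i=1}^{\ntrain}(y_i - \estf(\x_i))^2 + \lambda\|\estf\|_{\RKHS}^2 \;\le\; \frac{1}{\ntrain}\sum_{i=1}^{\ntrain}(y_i - \myf^*(\x_i))^2 + \lambda\|\myf^*\|_{\RKHS}^2 .
\]
Substituting the data model $y_i = \myf^*(\x_i) + \sigma w_i$, writing $\vv{h} = \estf - \myf^*$, expanding the squares, and cancelling the common noise energy $\frac{\sigma^2}{\ntrain}\sum_i w_i^2$ yields the key inequality
\[
\ERisk(\vv{h}) + \lambda\|\estf\|_{\RKHS}^2 \;\le\; \frac{2\sigma}{\ntrain}\sum_{i=1}^{\ntrain} w_i\, \vv{h}(\x_i) + \lambda R^2 .
\]
Everything then reduces to controlling the single stochastic cross term on the right.

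By the definitions in~\eqref{eq:condition} and homogeneity, this cross term admits two bounds, $\frac{1}{\ntrain}\sum_i w_i \vv{h}(\x_i) \le \gamma\,\|\vv{h}\|_{\RKHS}$ and $\frac{1}{\ntrain}\sum_i w_i \vv{h}(\x_i) \le \beta\,\sqrt{\ERisk(\vv{h})}$, which lead respectively to $\mathcal{B}^{\rm kr}_\gamma$ and $\mathcal{B}^{\rm kr}_\beta$. For the $\gamma$ branch I would insert the first bound, apply the triangle inequality $\|\vv{h}\|_{\RKHS} \le \|\estf\|_{\RKHS} + R$, and then treat $u = \|\estf\|_{\RKHS}$ as a free nonnegative variable: the inequality becomes $\ERisk(\vv{h}) \le 2\sigma\gamma u + 2\sigma\gamma R + \lambda R^2 - \lambda u^2$, and maximizing the right-hand side over $u \ge 0$ (completing the square against the regularization term $-\lambda u^2$, with maximizer $u = \sigma\gamma/\lambda$) collapses it into $(R\lambda + \sigma\gamma)^2/\lambda \le \mathcal{B}^{\rm kr}_\gamma$.

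For the $\beta$ branch I would instead discard the nonnegative term $\lambda\|\estf\|_{\RKHS}^2$, insert the second bound, and set $t = \sqrt{\ERisk(\vv{h})}$ to obtain the scalar quadratic inequality $t^2 \le 2\sigma\beta\, t + \lambda R^2$. Solving this (equivalently, applying $2\sigma\beta t \le \tfrac12 t^2 + 2\sigma^2\beta^2$ and rearranging) bounds $t^2 = \ERisk(\vv{h})$ by a quantity of the form $\lambda R^2 + \sigma^2\beta^2$ up to constants, i.e.\ $\mathcal{B}^{\rm kr}_\beta$. Taking the smaller of the two bounds gives the stated minimum.

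The main obstacle is the $\gamma$ branch: there the noise term is naturally controlled by the RKHS norm of the \emph{error} $\vv{h}$, whereas the regularizer in the basic inequality controls the norm of the \emph{estimate} $\estf$. Bridging this mismatch via the triangle inequality introduces both an $R$ contribution and a term linear in $u$ that must be absorbed by the quadratic penalty $-\lambda u^2$; the algebra has to be arranged so that the constants collapse into the clean expression $(R\lambda + \sigma\gamma)^2/\lambda$ rather than a looser sum. The $\beta$ branch, by contrast, is a routine quadratic-inequality argument, and no concentration is required at this stage, since $\gamma$ and $\beta$ are treated as deterministic quantities here (their high-probability behavior under Gaussian noise is deferred to \Cref{gaussian noise bounds}).
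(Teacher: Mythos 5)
Your proposal follows essentially the same route as the paper's proof: the same basic inequality from optimality of $\estf$ in~\eqref{krr}, the same cancellation of the noise energy, and the same two bounds on the stochastic cross term via $\gamma$ and $\beta$. The only substantive difference is how you close the $\gamma$ branch: the paper factors $\|\myf^*\|_{\RKHS}^2 - \|\estf\|_{\RKHS}^2$ as a difference of squares, first extracts $\|\estf\|_{\RKHS} \le \bigl(1 + \tfrac{\sigma\gamma}{\lambda R}\bigr)R$ from nonnegativity of the excess risk, and then substitutes this back, whereas you absorb everything into a concave quadratic in $u = \|\estf\|_{\RKHS}$ and maximize in one shot. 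Your version is cleaner and actually yields $(R\lambda + \sigma\gamma)^2/\lambda$, a factor of $2$ sharper than the stated $\mathcal{B}^{\rm kr}_\gamma$. One caveat on the $\beta$ branch: carried out carefully, the scalar inequality $t^2 \le 2\sigma\beta t + \lambda R^2$ gives $\ERisk(\estf - \myf^*) \le 4\sigma^2\beta^2 + 2\lambda R^2$ rather than $2\sigma^2\beta^2 + 2\lambda R^2$, since $2\sigma\beta t \le 2\sigma^2\beta^2 + \tfrac12 t^2$ is the tightest split with coefficient $\tfrac12$ on $t^2$; the paper's own intermediate step $2\sigma\beta\sqrt{\ERisk} \le \sigma^2\beta^2 + \tfrac12\ERisk$ has the same issue, so this is a looseness in the stated constant of $\mathcal{B}^{\rm kr}_\beta$ shared with the original proof, not a gap specific to your argument.
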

Similar considerations apply here and each bound allows us to describe a different regime.  The optimal choice for $\lambda$ optimizing $\mathcal{B}^{\rm kr}_\gamma$ is to set $\lambda \propto  \frac{\sigma}{R}\gamma$. In this case, $\mathcal{B}^{\rm kr}_\gamma= O(\sigma R \gamma).$ 
The bound based on $\mathcal{B}^{\rm kr}_\beta$ is a bound that does not really improve as we increase $\lambda$, and the optimal choice here would be $\lambda = 0$.  Overall, for any $\lambda < \frac{\sigma^2}{R^2}\beta^2$, we obtain
$\mathcal{B}^{\rm kr}_\beta= O(\sigma^2\beta^2)$.

\textbf{Signal-to-noise ratio.} Notice that for kernel ridge regression if we set $\lambda$ without knowledge of the signal-to-noise ratio $\sigma/R$ , for instance if $\lambda \propto \gamma$, the resulting excess risk is:
$\mathcal{B}^{\rm kr}_\gamma=  O((\sigma^2 + R^2) \gamma).$
Indeed, kernel ridge regression exhibits a quadratic dependency on the noise level unless $\lambda$ is tuned with prior knowledge of the signal-to-noise ratio. In contrast, adversarial kernel training adapts to the noise level automatically, achieving near-optimal performance without requiring knowledge of $\sigma/R$. This adaptivity is a key advantage of the latter approach.

\subsection{High-probability bounds under Gaussian noise}
\label{gaussian noise bounds}

In this subsection, we assume the noise variables are Gaussian and i.i.d. $\vv{w} \sim N(0, I_n)$. We define $\bar\gamma$ and $\bar \beta$ as follows:
\begin{equation}\small
    \label{def:gaussian_complexity}
   \bar\gamma = \E_{\vv{w} \sim N(0, I_n)} \left[\sup_{\|\vv{g}\|_\RKHS \le 1} \frac{1}{\ntrain}\sum_{i=1}^{\ntrain}w_i (\vv{g}(\x_i))\right],~~~\bar\beta^2 \ge \E_{\vv{w} \sim N(0, I_n)}\left[\sup_{\ERisk(\vv{g})\le \bar\beta^2} \frac{1}{\ntrain}\sum_{i=1}^{\ntrain}w_i (\vv{g}(\x_i))\right].
\end{equation}
These are usually called Gaussian complexity and local Gaussian complexity. Where we consider any value $\bar \beta$ satisfying the inequality.
The following results allow us to use $\bar\gamma$ and $\bar\beta$ in our analysis, showing that they upper bound $\gamma$ and $\beta$ with high probability. 
\begin{proposition} 
\label{concentration_gaussian_complexity}
Let $\mu_1> \mu_2> \cdots > \mu_n$ be the eigenvalues of the kernel matrix $K$, i.e., the matrix with entries $K_{i, j} = k(\x_i, \x_j)$.  If $\vv{w} \sim N(0, I_n)$ and $\epsilon > 0$, then:
\begin{itemize}
\item We have  $\gamma\le\bar\gamma +\epsilon$ with probability higher than $1 - e^{-\frac{n^2\epsilon}{2\lambda_{1}}}$;
\item And, $\beta\le\bar\beta +\epsilon$ with probability $1 - \exp(-\frac{n (\bar \beta + \epsilon)}{2})$, as long as $\sqrt{\ERisk(\estf - \myf^*)} \geq \bar \beta + \epsilon$.
\end{itemize}
\end{proposition}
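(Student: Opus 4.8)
The plan is to obtain both concentration statements from the Gaussian concentration inequality for Lipschitz functions (the Borell--Tsirelson--Ibragimov--Sudakov inequality): if $F:\R^n\to\R$ is $L$-Lipschitz, then $F(\vv{w})-\E[F(\vv{w})]$ is sub-Gaussian with parameter $L$ for $\vv{w}\sim N(0,I_n)$. Both $\gamma$ and $\beta$ are suprema of the linear functional $\vv{w}\mapsto \frac1n\sum_i w_i\vv{g}(\x_i)$ over a constraint set, hence convex and Lipschitz in $\vv{w}$; the two bullets differ only in the constraint set (a fixed RKHS ball for $\gamma$, an empirical $L^2$ ball whose radius must be localized for $\beta$), and correspondingly in the Lipschitz constant and in the extra localization bookkeeping.

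For the first bullet I would first rewrite the supremum in closed form. Writing any candidate as $\vv{g}=\sum_j\alpha_j k(\cdot,\x_j)$, the reproducing property gives $\vv{g}(\x_i)=(K\alpha)_i$ and $\|\vv{g}\|_\RKHS^2=\alpha^\top K\alpha$, so as $\vv{g}$ ranges over the unit RKHS ball the evaluation vector $(\vv{g}(\x_i))_i$ ranges over the ellipsoid $\{K^{1/2}u:\|u\|_2\le 1\}$. Consequently $\gamma=F(\vv{w})=\frac1n\|K^{1/2}\vv{w}\|_2$, whose expectation is exactly $\bar\gamma$ by definition. Since $v\mapsto\frac1n\|K^{1/2}v\|_2$ has Lipschitz constant $\frac1n\|K^{1/2}\|_{\mathrm{op}}=\sqrt{\mu_1}/n$, Borell--TIS yields $P(\gamma\ge\bar\gamma+\epsilon)\le\exp(-n^2\epsilon^2/(2\mu_1))$, matching the stated tail (identifying $\lambda_1$ with the top eigenvalue $\mu_1$).

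For the second bullet the constraint set is $\{\vv{g}:\ERisk(\vv{g})\le r^2\}$; since $\ERisk(\vv{g})=\frac1n\|(\vv{g}(\x_i))_i\|_2^2$, the localized supremum $Z_r$ has gradient $\tfrac1n(\vv{g}(\x_i))_i$ of norm at most $r/\sqrt n$, so $Z_r$ is $(r/\sqrt n)$-Lipschitz and concentrates with parameter $r/\sqrt n$. The remaining work is to relate $\beta$ (defined at radius $1$) to the local complexity at the self-referential scale $\bar\beta$. I would use the fixed-point/star-shaped structure: $\mathcal{G}(r)=\E[Z_r]$ satisfies $\mathcal{G}(r)/r$ non-increasing (RKHS balls are star-shaped around $0$), so $\E[Z_{\bar\beta}]\le\bar\beta^2$ by the defining inequality of $\bar\beta$, and then run a one-step peeling/localization argument conditioned on the estimator's actual error $\sqrt{\ERisk(\estf-\myf^*)}\ge\bar\beta+\epsilon$: on the ball of that radius, concentration of $Z_r$ around $\mathcal{G}(r)$ together with the fixed point controls $\beta$ by $\bar\beta+\epsilon$, yielding the stated probability.

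The routine part is the $\gamma$ bound: after the ellipsoid rewrite it is a single invocation of Gaussian Lipschitz concentration, and the expectation matches $\bar\gamma$ by definition. The main obstacle is the $\beta$ bound, where the difficulty is the localization: the constraint radius in the definition of $\beta$ is not the scale at which concentration is tight, so one must either peel over dyadic scales or invoke the star-shaped property to reduce to the critical radius $\bar\beta$, while keeping the Lipschitz constant $r/\sqrt n$ aligned with the self-referential definition in order to reproduce the exact exponent $n(\bar\beta+\epsilon)/2$ and the side condition $\sqrt{\ERisk(\estf-\myf^*)}\ge\bar\beta+\epsilon$.
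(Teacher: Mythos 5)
Your proposal is correct and takes essentially the same route as the paper: the first bullet is the identical Gaussian--Lipschitz concentration argument (the paper bounds the Lipschitz constant via $\vv{w}^\top K\vv{w}\le \mu_1\|\vv{w}\|_2^2$, you obtain it from the exact identity $\gamma=\tfrac{1}{n}\|K^{1/2}\vv{w}\|_2$, whose expectation is $\bar\gamma$), and your localization/star-shapedness sketch for the second bullet is precisely the content of the result the paper outsources to \citet[Lemma~13.12]{wainwright_highdimensional_2019}. One caveat: both your derivation and the paper's actually produce the tail $\exp(-n^2\epsilon^2/(2\mu_1))$, so the first power of $\epsilon$ in the stated exponent is not something either argument establishes (it appears to be a typo in the proposition), and your claim that your bound ``matches the stated tail'' glosses over this.
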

We can obtain closed-form expressions for $\bar\gamma$ and $\bar\beta^2$, which enable a more explicit analysis of the upper bounds derived in the previous section. We do it for adversarial training, but similar argument holds for kernel ridge regression.

\textbf{On the dimension-free bounds  depending on $\bar\gamma$.}  We have $\bar\gamma = \frac{\sqrt{\mathrm{tr}(K)}}{n}$, where $\mathrm{tr}(\cdot)$ denotes the matrix trace.  
For any normalized kernel, i.e., one satisfying $k(\x, \x) = 1$, it follows that $\mathrm{tr}(K) = n$ and thus $\bar\gamma = 1/\sqrt{n}$. Examples include Gaussian and Matérn kernels.  
A similar rate holds for the linear kernel: if $M = \max_i \|\x_i\|_2$, then $\bar\gamma = M / \sqrt{n}$.  
Combining this with the results from the previous section, we obtain the following dimension-free bound for adversarial training:
\begin{equation}
    \mathcal{B}^{\rm adv}_\gamma = O\!\left(\frac{\sigma R}{\sqrt{n}}\right)
    \qquad \text{for} \qquad 
    \delta \propto \frac{1}{\sqrt{n}}
    \tag{\textbf{Linear or translation-invariant kernel}}
\end{equation}
We refer to this as a \emph{dimension-free bound} since it does not depend explicitly on the input dimension $p$, although $p$ may still influence the  $\sigma$, $R$, or $M$.

\textbf{On the faster (but dimension-dependent) rates in $\bar\beta^2$.} We can also derive closed-form expressions for $\bar{\beta}$. The derivations are very similar to that of \citep[Chapter 13]{wainwright_highdimensional_2019} and we provide aditional details in \Cref{gaussian_list}.  
In particular, we have $\bar{\beta}^2 = O(p/n)$ for the linear kernel and $\bar{\beta}^2 = O(n^{-2/(2 + p/\nu)})$ for the Matérn kernel. 
This leads to the following bounds:
\[
\mathcal{B}^{\mathrm{adv}}_{\beta} = O\!\left(\frac{\sigma^2 p}{n}\right)
\qquad \text{for} \qquad
\delta < \frac{\sigma}{R} \frac{p}{n}
\tag{\textbf{Linear kernel}}
\]
and
\[
\mathcal{B}^{\mathrm{adv}}_{\beta} = O\!\left(\sigma^2 n^{-2/(2 + p/\nu)}\right)
\qquad \text{for} \qquad
\delta < \frac{\sigma}{R} n^{-2/(2 + p/\nu)}
\tag{\textbf{$\nu$-Matérn kernel}}
\]
Here, $\delta$ is not required to scale with $n$ in a specific way—it only needs to be sufficiently small.  
The linear case recovers the classical $p/n$ rate of standard linear regression, which converges faster but explicitly depends on the number of features $p$.

\subsection{Additional analysis and comments}
\label{Mispecified model bounds}

\textbf{Bounds for the mispecified case.} For adversarial kernel regression, i.e., when $\myf^* \not \in \RKHS$ we can use the following equations
\begin{equation*}
    \ERisk(\estf - \myf^ *) \le 2\text{inf}_{\|\myf\|_{\RKHS}\le R}\ERisk(\myf - \myf^*)  +  2 \min(\mathcal{B}^{\rm adv}_\gamma, \mathcal{B}^{\rm adv}_\beta).
\end{equation*}
Note that the results involve the same terms $(\mathcal{B}^{\rm adv}_\gamma, \mathcal{B}^{\rm adv}_\beta)$ that appear in the well-specified case. Similar expressions also apply to kernel ridge regression. The first term in the equation is the approximation error of trying to approximate  ${\|\myf\|_{\RKHS}\le R}$ and the second term is the estimation error. For translational invariant kernels, we can leverage results from~\citet[Section 7.5.2]{bach_learning_2024} to analyse the left-hand side above.

\textbf{Bounds for the random design case.} We are often interested in the random design case, i.e., bounding $\E_{\x}[\myf(\x) - \myf^*(\x)]$. We suggest that the difference between this quantity and the excess risk  $\ERisk(\estf - \myf^ *)$  can be bounded by the Radamacher complexity~\citep[Chapter 14]{wainwright_highdimensional_2019} or localized Radamacher complexity \citep[Chapter 14]{wainwright_highdimensional_2019}. For instance, see Corollary 14.15, where the authors establish such results for Kernel Ridge Regression. We believe this deserves more careful investigation, but due to the relation between Gaussian and Radamacher complexity in many cases, the rate for the random design case should be the same as the ones we observe here.

\section{Adversarial multiple kernel learning}
\label{sec:amkl}
Let us denote the space $\bar\RKHS = \bigoplus_{j=1}^D{\RKHS_j} = \{\sum_{j=1}^D\myf_j(\x) \text{ where } \myf_j\in  \RKHS_j \text{ for } j= 1,\cdots, D\}$.
Here, we propose an adversarial multiple-kernel learning method, where we solve:
\begin{equation}
\label{amkl}
\min_{\myf \in \bar\RKHS} \frac{1}{\ntrain}\sum_{i=1}^{\ntrain}\max_{d\in  \Omega_{\bar\RKHS}}  (y - \dotpp{\myf}{\fmap(\x) + \dx})^2.
\end{equation}
We consider $\Omega_{\bar\RKHS}$ to be the intersection of  balls in different kernel spaces as attack region:
\[\Omega_{\bar\RKHS} = \cap_{j=1}^D\Omega_{\RKHS_j} = \cap_{j=1}^D\Big\{\dx_j\in  \RKHS_j: \|\dx_j\|_{\RKHS_j} \le \delta\Big\} =   \Big\{\dx \in \cap_{j=1}^D\RKHS_j:  \max_{j=1,\cdots, D}\|\dx\|_{\RKHS_j} \le \delta\Big\}.\] The framework combines multiple kernel functions—each representing a different notion of similarity between data points—into a single model, allowing the algorithm to learn both the model parameters and the best combination of kernels simultaneously. This is particularly useful when data may have multiple views or modalities, or when no single kernel captures all relevant patterns in the data. Notably, many of the techniques observed in the context of feature-perturbed adversarial kernel training have clear parallels in~\Cref{amkl}. We highlight some of these connections below.

\textbf{Relation with attacks applied to the input domain.} Our optimization problem~\eqref{amkl} is a relaxation of
\begin{equation}
\label{amkl_orig}
\min_{\myf \in \bar\RKHS} \frac{1}{\ntrain}\sum_{i=1}^{\ntrain}\max_{\Delta \x_i  \in \Omega_{\dom}}  (y - \myf(\x +  \Delta \x_i ))^2,
\end{equation}
for $\Omega_\dom  = \cap_j  \{\Delta \x: D_{\RKHS_j}(\x, \x + \Delta \x) \le \delta \} = \{\Delta \x: \max_j D_{\RKHS_j}(\x, \x + \Delta \x) \le \delta \}$.

\textbf{Reformulation and optimization.} \Cref{{thm:dual_formulation_mkl}} in the Appendix, allow for the reformulation of~\eqref{amkl} as the following minimization problem
\begin{equation*}
\min_{\substack{
    \myf_i \in \RKHS_i,\\
    i = 1,\dots, D
}} \frac{1}{\ntrain}\sum_{i=1}^{\ntrain}\Bigg(\big|y - \sum_{j=1}^D\myf_j(\x)\big| + \delta\sum_{j=1}^D \|\myf_j\|_{\RKHS_j}\Bigg)^2,
\end{equation*}
which by inspection is very similar to the multiple kernel learning problem.  A similar algorithm to \Cref{alg:irrr} also applies here and is described in \Cref{sec:mkl}.

\section{Related work}
Adversarial attacks~\citep{bruna_intriguing_2014,goodfellow_explaining_2015} can significantly degrade the performance of state-of-the-art models and adversarial training has emerged as one of the most effective strategies to prevent vulnerabilities.

\textbf{Adversarial training in neural networks.} Adversarial training is often studied in the context of neural networks and deep learning. Traditional methods for solving adversarial training require solving a min-max problem: minimizing a parameter vector while maximizing the error with a limited attack size. Examples of traditional methods to solve the inner maximization include the Fast Gradient Size Method~\citep{goodfellow_explaining_2015}, and PGD-type of attacks~\citep{madry_towards_2018}.  The problem can be solved by backpropagating through the inner loop~\citep{madry_towards_2018}. 
Recent efforts also aim to make adversarial training more efficient by approximate but faster update strategies. Indeed, recent works have sought to simplify adversarial training by relying on single-step \citep{wong_fast_2020} or latent-space perturbations~\citep{park_reliably_2021}, rather than more costly multi-step procedures. Methods such as TRADES \citep{zhang_theoretically_2019} explore variants of robust optimization that trade off accuracy and robustness using tractable approximations.
A recent (and more theoretically focused) study by \citet{mousavi-hosseini_robust_2025} proposes a method for learning multi-index models in an adversarially robust manner using two-layer neural networks—where the first layer can be trained in a standard way, and the second layer minimizes the adversarial loss.
We develop a framework not intended for neural networks, but one that we hope can inspire new methods in that domain. Our approach builds on theoretical insights from linear models, extending them to nonparametric settings.

\textbf{Adversarial training in linear models.} Adversarial training in linear models has been extensively studied over the past five years. The linear setting serves as a tractable framework for analyzing the behavior of deep neural networks. For instance, it has been used to study the robustness–accuracy trade-off~\citep{tsipras_robustness_2019, ilyas_adversarial_2019} and the impact of overparameterization on adversarial robustness~\citep{ribeiro_overparameterized_2023}. Asymptotic analyses have further examined adversarial training in binary classification and regression~\citep{taheri_asymptotic_2022, javanmard_precise_2020, javanmard_precise_2022}, and also for random feature models~\citep{hassani_curse_2024}. Related work has also investigated Gaussian classification~\citep{dan_sharp_2020, dobriban_provable_2023}, the effect of dataset size on adversarial vulnerability~\citep{min_curious_2021}, and the behavior of $\ell_\infty$-attacks on linear classifiers~\citep{yin_rademacher_2019}. In this work, \emph{we explore how these insights can be extended to function spaces and non-parametric models}.
This allow for leveragin results from linear models,  In particular, connections between $\ell_p$-norm adversarial training and ridge regresion, have been well established~\citep{ribeiro_regularization_2023}. The optimization approach presented in~\Cref{sec:optimization} extends the method of~\citet{ribeiro_efficient_2025} to nonparametric settings, while the generalization bounds derived in~\Cref{sec:convergence} adapt proof techniques from~\citet{ribeiro_regularization_2023, xie_high-dimensional_2024} to the context of nonparametric regression.

\textbf{Robust kernel regression.} There is a substantial body of work on robust kernel ridge regression, primarily aimed at improving robustness to outliers. These methods often rely on M-estimators~\citep{wibowo_robust_2009, hwang_robust_2015, debruyne_robustness_2010} or quantile regression~\citep{li_quantile_2007}, and are typically optimized using iterative reweighted ridge regression schemes. While our method shares some algorithmic similarities, its purpose and formulation differ fundamentally. Our objective is not robustness to outliers, but robustness to adversarial perturbations. Moreover, \emph{our formulation is not based on an M-estimator: the adversarial relaxation leads to an objective where both the loss and the regularization are jointly shaped by the perturbation set}, as shown in~\Cref{eq:linear_relaxation}. Another related line of work replaces the ridge penalty in kernel ridge regression with an $\ell_1$ penalty to promote sparsity in the observations~\citep{feng_kernelized_2016, roth_generalized_2004, allerbo_fast_2025}. Alternatively, some approaches use max-norm (i.e., $\ell_\infty$) regularization to train kernelized support vector machines that are robust to label attacks~\citep{russu_secure_2016}. Our setting differs from both: we do not aim for sparsity, nor do we focus on robustness to label noise. Instead, we are concerned with robustness to structured perturbations in the input space.

\section{Numerical experiments}
\label{sec:numerical experiments}
\begin{wrapfigure}{r}{0.48\textwidth}
    \vspace{-35pt}
      \centering
    \includegraphics[width=0.9\linewidth]{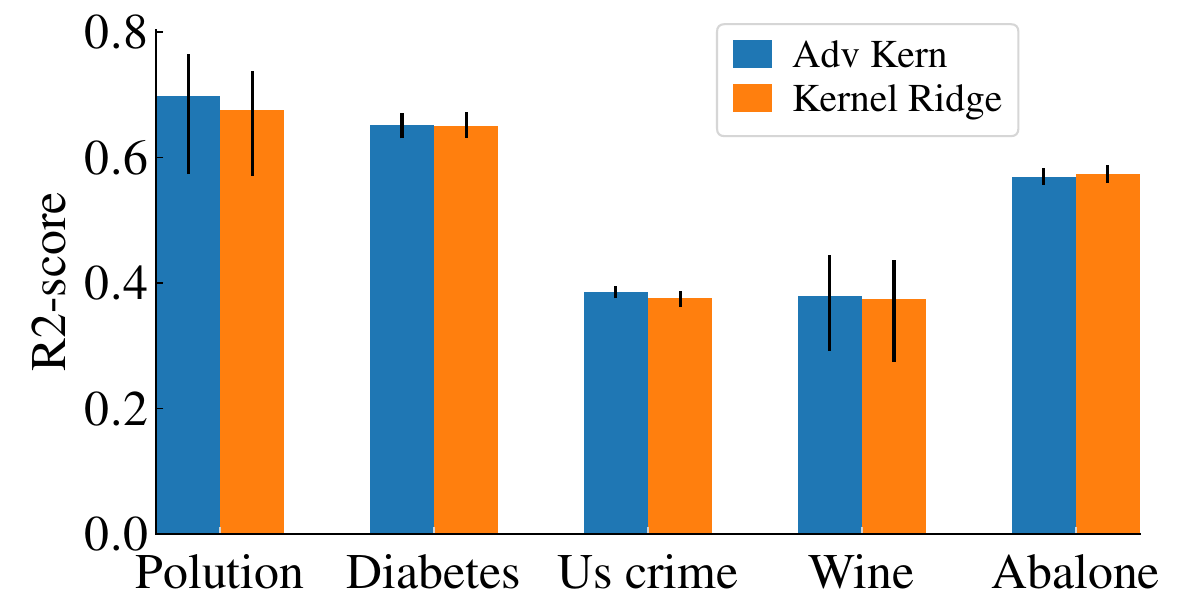}{\footnotesize}
    \caption{Kernel ridge \emph{vs} adversarial kernel training. Compared using $R^2$ (higher is better).}
    \label{fig:performance_regression}
    \vspace{-10pt}
\end{wrapfigure}
The numerical experiments aim to evaluate the performance both in \textbf{clean} and \textbf{adversarially perturbed data}. We emphasize the optimal configuration for clean data does not necessarily coincide with that for adversarial robustness. For the clean data evaluation, following the intuition from our generalization analysis, we suggest decrease the perturbation radius~$\delta$ as the sample size increases, $\delta \propto 1/\sqrt{n}$. In contrast, for robust evaluation, it is often beneficial to keep the adversarial radius fixed $\delta \propto \delta^{\rm test}$.

\textbf{Evaluation on clean data: adaptativity to smoothness and noise levels.}  We evaluate the proposed methods on both smooth and non-smooth target functions (same setting as in~\Cref{fig:adversarial-kernel-training}). In particular, we investigate whether adversarial kernel training can adapt to the target’s smoothness. Kernel ridge regression with cross-validation can automatically achieve such adaptation~\citep{bach_quest_2021}. Here, we perform a similar experiment, applying adversarial training with the default parameter choice ($\delta \propto n^{-1/2}$), and observe comparable behavior. The results in~\Cref{fig:adversarial-kernel-training-all} show that adversarial kernel training naturally adjusts to the regularity of the target function across different kernels. In these plots, the dashed line represents the linear approximation, whose slope indicates the observed efficiency rate (reported in~\Cref{tab:rates}). Finally,~\Cref{fig:snr} examines robustness to varying noise levels, comparing sine, square, and linear targets.

\textbf{Evaluation on clean data: performance on bechmarks.} 
In \Cref{fig:performance_regression}, we compare the performance of adversarial kernel training with kernel ridge regression across several \textbf{\textit{regression}} datasets. For kernel ridge regression, hyperparameters are selected via cross-validation over $\gamma \in \{10, 1, 0.1, 10^{-2}, 10^{-3}\}$ and $\lambda \in \{1, 0.1, 10^{-2}, 10^{-3}\}$. For adversarial kernel training, we use a default adversarial radius  and select $\gamma$ from the same range using cross-validation. Thanks to its reduced hyperparameter space, \emph{adversarial kernel training consistently performs on par with or better than kernel ridge regression across the evaluated benchmarks.}

\begin{figure}[t]
    \centering
    \vspace{-10pt}
    \subfloat[Matérn 1/2]{
    \includegraphics[width=0.32\linewidth]{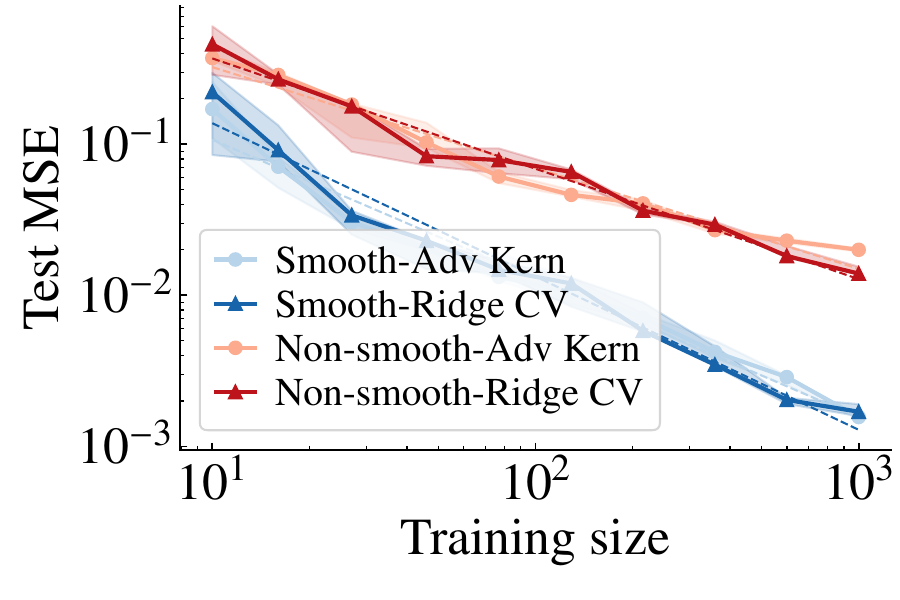}}
    \subfloat[Matérn 3/2]{\includegraphics[width=0.32\linewidth]{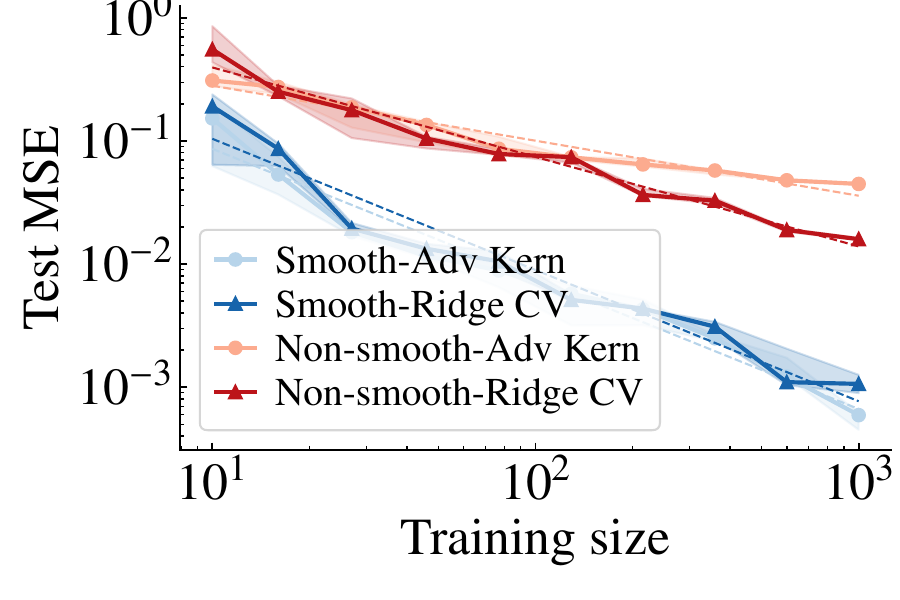}}
    \subfloat[Gaussian]{\includegraphics[width=0.32\linewidth]{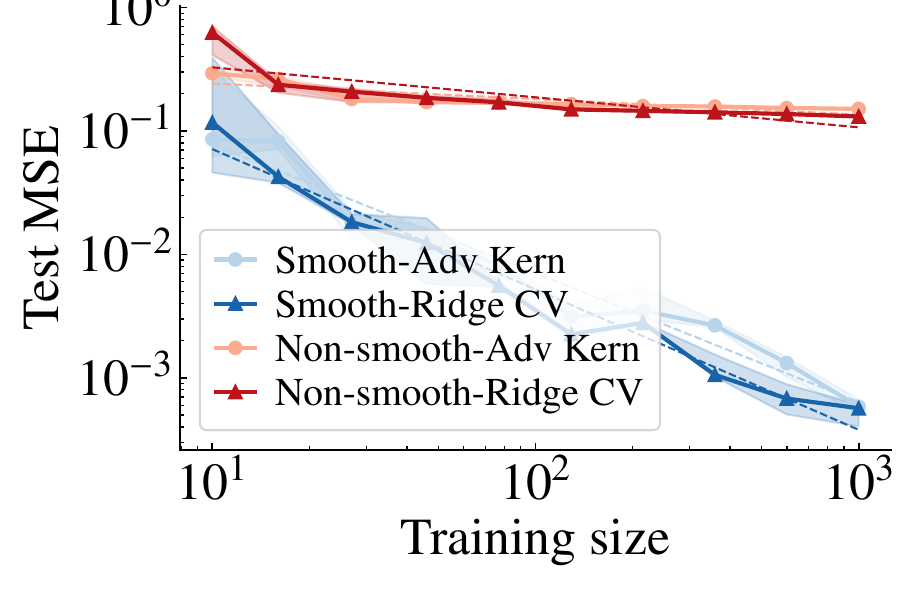}}
    \caption{We compare adversarial kernel training with cross-validated kernel ridge regression (Ridge CV). We show the test mean square error (MSE) vs training size $n$, for different kernels.}  
    \label{fig:adversarial-kernel-training-all}
\end{figure}
\begin{table}
    \centering
    \caption{Comparing methods under adversarial attacks (Abalone). Reported values are test set $R^2$ scores (higher is better), with bootstrapped interquartile ranges (Q1–Q3) shown in parentheses. The best result for each setting is highlighted in \textbf{boldface}.}
    \begin{adjustbox}{width=1.05\linewidth,center} 
    {\footnotesize\begin{tabular}{l|c|cc|cc}
\toprule
     &  & \multicolumn{2}{c|}{ test-time attack  $\ell_2$ } & \multicolumn{2}{c}{ test-time attack $\ell_\infty$ } \\ 
     training method &    no attack &   $\{\|\Delta \x\|_2 \le 0.01\}$ &    $\{\|\Delta \x\|_2 \le 0.1\}$ &   $\{\|\Delta \x\|_{\infty} \le 0.01\}$ &   $\{\|\Delta \x\|_{\infty} \le 0.1\}$  \\
    \midrule
\textbf{Adv Kern} ($\delta\propto n^{-1/2}$) & \textbf{0.57} (0.56–0.58) & \textbf{0.55} (0.54–0.57) & 0.39 (0.37–0.40) & \textbf{0.54} (0.52–0.55) & 0.13 (0.11–0.16) \\
Ridge Kernel  ($\lambda$ cross-validated)  & \textbf{0.57}(0.56–0.59) & \textbf{0.55} (0.53–0.56) & 0.26 (0.24–0.28) & 0.52 (0.51–0.54) & -0.16 (-0.20–-0.13) \\
\midrule
\textbf{Adv Kern} $\{\|\dx\|_\RKHS \le 0.01\}$ & 0.56 (0.55–0.58) & \textbf{0.55} (0.53–0.56) & \textbf{0.40} (0.39–0.42) & 0.53 (0.52–0.55) & 0.18 (0.16–0.20) \\
\textbf{Adv Kern} $\{\|\dx\|_\RKHS \le 0.1\}$ & 0.38 (0.37–0.39) & 0.38 (0.37–0.39) & 0.35 (0.34–0.36) & 0.37 (0.36–0.38) & \textbf{0.30} (0.29–0.31) \\
Adv Input $\{\|\Delta \x\|_2 \le 0.1\}$ & \textbf{0.57} (0.56–0.59) & \textbf{0.55} (0.53–0.56) & 0.27 (0.25–0.29) & 0.52 (0.51–0.54) & -0.14 (-0.17–-0.11) \\
Adv Input $\{\|\Delta \x\|_\infty \le 0.1\}$ & \textbf{0.57} (0.55–0.58) & 0.54 (0.53–0.56) & 0.27 (0.25–0.29) & 0.52 (0.51–0.53) & -0.11 (-0.14–-0.08) \\
\bottomrule
\end{tabular}
    }
    \end{adjustbox}
    \label{tab:abalone-comparisson}
\end{table}%

\textbf{Robustness to adversarial attacks.} In \Cref{tab:abalone-comparisson}, we focus on one of the benchmark datasets. In addition to the clean-data evaluation, we include four new training variants based on \textbf{feature-space adversarial kernel training} with fixed values of~$\delta$---which is a natural choice when the objective is adversarial robustness. For comparison, we also include an \textbf{input-space adversarial training} baseline with~$\delta^{\mathrm{train}} = 0.05$, following~\Cref{eq:original_formulation}, trained for 300 epochs using Adam. All methods are evaluated both on clean data and under test-time adversarial attacks in the~$\ell_2$ and~$\ell_\infty$ norms. Results on additional datasets are provided in the Appendix.

Notably, although our proposed method is trained with perturbations in feature space, \emph{it remains highly robust to input-space attacks, outperforming approaches explicitly trained for input robustness.} We hypothesize that this arises from the feature-perturbed formulation offering more favorable optimization properties. Furthermore, comparing models feture-perturbed adversarially trained models with~${\|\dx\|_{\RKHS} \le 0.01}$ and~${\|\dx\|_{\RKHS} \le 0.1}$ illustrates the trade-off between clean and adversarial accuracy: as the feature-space adversarial radius~$\delta$ increases, the model becomes more robust to input-space attacks---particularly when the attacker has a large budget, e.g.,~${\|\Delta \x\|_{\infty} \le 0.1}$---albeit at the cost of a drop in clean performance. Interestingly, our method still surpasses direct input-space adversarial training even under the same test-time threat model.

\section{Conclusion and future work}
\label{sec:conclusion}
We covered several aspects of the proposed method. Our goal was to present a unified perspective that makes the method broadly applicable, by including optimization, generalization and robustness properties. Different aspects however deserve deeper investigation. Further directions include improving the optimization focusing large-scale systems. The proposed generalization bounds also deserve more detailed analysis and as future work, we aim to fully explore the results in the misspecified and the random design setting. Another promising direction is generalizing to classification and to further study the multiple kernel learning scenario. The ideas here also naturally extend to infinitely wide neural networks, which can often be analyzed using RKHS tools. A key example is the neural tangent kernel (NTK) framework~\citep{jacot_neural_2018}. Moreover, it would be interesting to explore adversarial training for infinitely wide networks as a tool to guide architecture and hyperparameter selection, following similar ideas to~\citet{yang_tuning_2021}.

\begin{ack}
This work was partially developed during AHR visit to the SIERRA team at INRIA.
We would like to thank Dominik Baumann, Oskar Allerbo and Elis Stefansson for giving feedback on the early versions of this manuscript. We would also like to thank Bruno Loureiro for the interesting discussions around the theme.
TBS and DZ are financially supported by the Swedish Research Council, with the projects Deep probabilistic regression–new models and learning algorithms (project number 2021-04301) and Robust learning methods for out-of-distribution tasks (project number 2021-05022); and, by the Wallenberg AI, Autonomous Systems and Software Program (WASP) funded by Knut and Alice Wallenberg Foundation. TBS, AHR and DZ are partially funded by the Kjell och Märta Beijer Foundation. FB by the National Research Agency, under the France 2030 program with the reference “PR[AI]RIE-PSAI” (ANR-23-IACL-0008).
\end{ack}

\AtNextBibliography{\small}
\printbibliography

\newpage
\appendix

\setcounter{equation}{0}
\renewcommand{\theequation}{S.\arabic{equation}}%
\setcounter{figure}{0}
\renewcommand{\thefigure}{S.\arabic{figure}}%
\setcounter{table}{0}
\renewcommand{\thetable}{S.\arabic{table}}%
\setcounter{section}{0}

\onecolumn
\pagenumbering{roman} 

\part{}

~ 
\vspace{10pt}

\parttoc 

\setcounter{page}{0 }
\newpage
\section{Problem formulation}
\label{apendix:formulation}

\subsection{Proof of \Cref{thm:relaxation}}
\label{sec:proof-relaxation}

Let $\Delta \x \in \Omega_\dom$ then by definition
\[\|\fmap(\x) - \fmap(\x+ \Delta \x)\|_{\RKHS} = D_\RKHS(\x, \x + \Delta \x) \le \delta \]
hence $(\fmap(\x) - \fmap(\x+ \Delta \x)) \in \Omega_\RKHS$ and it follows that:
\begin{align*}
\max_{\Delta \x \in \Omega_\dom}  (y_i - \myf(\x + \Delta \x))^2 &=   \max_{\Delta \x \in \Omega_\dom}  (y_i - \dotpp{\myf}{ \fmap(\x + \Delta \x)})^2  \\ &= \max_{\substack{\vv{d}= \fmap (\x + \Delta \x) - \fmap(\x) , \\\Delta \x \in \Omega_\dom }}  (y_i - \dotpp{\myf}{ \fmap(\x) + \vv{d}})^2  \\
&\le \max_{\vv{d}\in \Omega_\RKHS }  (y_i - \dotpp{\myf}{ \fmap(\x) + \vv{d}})^2 
\end{align*}

\subsection{Proof of regions in \Cref{tab:kernel_bounds}}

\paragraph{Linear kernel.} For the linear kernel:
\[k(\x, \x') =  \x^\top  \x\]
we have the evaluation functional $\fmap(\x) = \x$, and the induced RKHS space $\RKHS=\R^\nfeatures$. Hence:
\[D_\RKHS(\x, \x + \Delta \x)= \| \Delta \x\|_2\]

and $\Omega_\dom = \{\Delta \x: \|\Delta \x\|_2 \le \delta\}.$

\paragraph{Translation invariant kernels.} Now, let us have any kernel
\[k(\x, \x') = f(\|\x- \x'\|)\]
for which $f$ is an strictly decreasing function with $f(0) = 1$
we use that:
\[D_\RKHS(\x, \x')= k(\x, \x) - 2 k(\x, \x') + k(\x', \x') =  2- 2 f(\|\x - \x'\|) \]
where in the second inequality we plugged in the definition of kernel. Now, since $f$ monotonic, $D_\RKHS(\x, \x + \Delta \x)  \le \delta$ implies that
\[\|\x - \x'\| \le f^{-1} \Big(1 - \frac{\delta}{2}\Big)\]
Now we can apply this result to kernels of interest, particularly, Gaussian, Laplace and Matérn kernels.

\paragraph{Gaussian kernel.} The gaussian kernel 
\[k(\x, \x') = \exp(-\gamma \|x - x'\|_2^2),\]
falls into the above scenario and, hence, the above derivation implies that
\begin{align*}
    \exp(-\gamma \|x - x'\|_2^2) &\ge 1 - \frac{\delta}{2}\\
     \|x - x'\|_2 &\le \sqrt{\frac{1 }{\gamma} \log \frac{1}{1 - \frac{\delta}{2}}}.\\
\end{align*}
and the result follows.  Very similar derivations hold for the Laplacian and exponential kernels.

\paragraph{Polynomial.} If  $\Delta \x =  \x' - \x  \in \Omega_\dom$ then 
\begin{align*}
    \delta^2 &\ge D_\RKHS(\x, \x') \\
            &= k(\x, \x) + k(\x', \x') - 2 k(\x, \x').
\end{align*}
Now, rearranging and exponentiation by $1/d$,
\begin{align*}
    2^{1/d} k(\x, \x')^{1/d} &\ge  (k(\x, \x) + k(\x', \x') -\delta^2)^{1/d} \\
   &\labelrel\ge{exponente-minus} (k(\x, \x) + k(\x', \x'))^{1/d}  -\delta^{2/d} \\
   &\labelrel\ge{usingab}  2^{1/d}(k(\x, \x)k(\x', \x'))^{1/(2d)}  -\delta^{2/d}.
\end{align*}
Where, \eqref{exponente-minus} follows from applying~\Cref{lb_integer_exp} with $z = (\frac{\delta^2}{ k(\x, \x) + k(\x', \x'})^{1/d}$. And, \eqref{usingab}  follows from using~\Cref{thm:techincal-ineq} with $\zeta = 1$. 
Therefore:
\begin{align*}
\delta^{2/d} 2^{-1/d} &\ge - k(\x, \x')^{1/d} + (k(\x, \x)k(\x', \x'))^{1/(2d)} \\
&= \underbrace{\frac{1}{2} k(\x, \x)^{1/d} + \frac{1}{2} k(\x', \x')^{1/d} -  k(\x, \x')^{1/d}}_A - \underbrace{\frac{1}{2}(k(\x, \x)^{1/(2d)} - k(\x', \x')^{1/(2d)})^2}_B
\end{align*}
Where the last equality follows from summing and subtracting $\frac{1}{2}k(\x, \x)^{1/d} + \frac{1}{2} k(\x', \x')^{1/d}$ and regrouping. Now, we will rewrite $A$ and $B$, using the definition of polinomial kernel, i.e. $k(\x, \x')^{1/d} =  1 + \x^\top \x'$. On one hand,
\[
2A =  k(\x, \x)^{1/d} +  k(\x', \x')^{1/d} -  2k(\x, \x')^{1/d} = \|\Delta \x\|^2,
\]
and, on the other hand,
\begin{align*}
2B &= (\sqrt{1 + \|\x\|^2} - \sqrt{1 + \|\x'\|^2})^2\\
&= \frac{(\|\x\|^2 - \|\x'\|^2)^2}{(\sqrt{1 + \|\x\|^2} - \sqrt{1 + \|\x'\|^2})^2}\\
&= \frac{(\|\x\| - \|\x'\|)^2(\|\x\| + \|\x'\|)^2}{(\sqrt{1 + \|\x\|^2} + \sqrt{1 + \|\x'\|^2})^2}\\
&\le \frac{\|\Delta \x\|^2(\|\x\| + \|\x'\|)^2}{(\sqrt{1 + \|\x\|^2} + \sqrt{1 + \|\x'\|^2})^2}\\
&\le \frac{\|\Delta \x\|^2(\|\x\| + \|\x'\|)^2}{2 +  \|\x\|^2 + \|\x'\|^2}\\
&\le \frac{2\|\Delta \x\|^2(\|\x\|^2 + \|\x'\|^2)}{2 +  \|\x\|^2 + \|\x'  \|^2}
\end{align*}

Hence,
\begin{align*}
\delta^{2/d} 2^{-1/d}  &\ge  A -B \\
 \delta^{2/d} 2^{-1/d}   &\ge \|\Delta \x\|^2 \frac{1}{ 2+\|\x\|^2 +  \|\x'\|^2}\\
\delta^{2/d} 2^{-1/d}  (2+\|\x\|^2 +  \|\x'\|^2)&\ge \|\Delta \x\|^2\\
\delta^{1/d} 2^{-1/(2d)}  \sqrt{2+\|\x\|^2 +  \|\x'\|^2}&\ge \|\Delta \x\|
\end{align*}
Putting everything together, we obtain
\[
C \delta^{1/d} \ge \|\Delta \x\|
\]
for $C = 2^{-1/(2d)}  \sqrt{2+\|\x\|^2 +  \|\x'\|^2}$. Next, we prove the proposition that was used.
\begin{proposition}
    \label{lb_integer_exp}
    For any $0 < z < 1$ and $d$ any positive integer, we have that \[(1 -z)^d \le 1 - z^d.\]
\end{proposition}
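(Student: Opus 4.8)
The plan is to exploit the standard factorization of the right-hand side. Writing $1 - z^d = (1-z)\sum_{k=0}^{d-1} z^k$ and $(1-z)^d = (1-z)\,(1-z)^{d-1}$, and observing that $1 - z > 0$ for $0 < z < 1$, the claimed inequality reduces—after dividing through by the common positive factor $1-z$—to the single inequality
\[
(1-z)^{d-1} \;\le\; \sum_{k=0}^{d-1} z^k .
\]
This reduced statement is the heart of the argument, and I would establish it by sandwiching both sides around $1$.

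First I would bound the left-hand side from above: since $0 < 1 - z < 1$ and the exponent $d-1$ is a nonnegative integer, we have $(1-z)^{d-1} \le 1$ (with strict inequality once $d \ge 2$). Next I would bound the right-hand side from below: the sum $\sum_{k=0}^{d-1} z^k$ contains the $k=0$ term, which equals $1$, together with further terms $z^k \ge 0$, so $\sum_{k=0}^{d-1} z^k \ge 1$. Chaining these gives $(1-z)^{d-1} \le 1 \le \sum_{k=0}^{d-1} z^k$, and multiplying back through by $1 - z > 0$ recovers $(1-z)^d \le 1 - z^d$, as desired. As an alternative route one could induct on $d$: the base case $d=1$ is an equality, and in the inductive step one multiplies the hypothesis $(1-z)^d \le 1 - z^d$ by $1 - z > 0$ and then verifies $z + z^d - 2z^{d+1} \ge 0$, which holds because this expression factors as $z(1 - z^d) + z^d(1 - z)$, a sum of two nonnegative terms on $(0,1)$. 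A third option is a convexity argument, noting that $g(z) = (1-z)^d + z^d$ is a sum of convex functions on $[0,1]$ with $g(0) = g(1) = 1$, so convexity forces $g(z) \le 1$ throughout.

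Since this is a genuinely elementary lemma, I do not anticipate any substantive obstacle; the only care required is bookkeeping at the edge cases. In particular I would handle the degenerate exponent $d - 1 = 0$ when $d = 1$ (where the reduced inequality reads $1 \le 1$), and ensure that every division and multiplication is carried out with respect to the strictly positive quantity $1 - z$, whose positivity is guaranteed precisely by the hypothesis $0 < z < 1$.
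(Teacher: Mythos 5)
Your proof is correct, and your primary route is genuinely different from the paper's. The paper argues by induction on $d$: assuming $(1-z)^{d-1}\le 1-z^{d-1}$, it multiplies by $1-z>0$ and expands $(1-z^{d-1})(1-z)=1-z-z^{d-1}+z^d$, which must then be compared to $1-z^d$ (the displayed chain in the paper actually ends with a typo, ``$\le 1+z^d$'', and omits the justification that $2z^d\le z+z^{d-1}$). Your main argument instead factors $1-z^d=(1-z)\sum_{k=0}^{d-1}z^k$, cancels the positive factor $1-z$, and sandwiches both sides around $1$ via $(1-z)^{d-1}\le 1\le\sum_{k=0}^{d-1}z^k$; this avoids induction entirely and makes the inequality transparent in one line. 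Your second route is essentially the paper's induction, but you actually close the inductive step cleanly by exhibiting the nonnegative factorization $z+z^d-2z^{d+1}=z(1-z^d)+z^d(1-z)$, which is the piece the paper's write-up glosses over. The convexity observation $g(z)=(1-z)^d+z^d\le(1-z)g(0)+zg(1)=1$ is also valid and arguably the slickest of the three. Any of your arguments is a complete and correct proof.
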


\begin{proof}
We can prove it recursively. The result is trivial for $d=1$, now
if assume that:
\[
    (1 -z)^{d-1} \le 1 - z^{d-1},
\]
since $(1 -z) > 0$ we have
\[
    (1 -z)^{d}\le (1 - z^{d-1})(1 -z) \le 1 - z - z^{d-1} + z^d \le 1  + z^d
\]
The result follows.
\end{proof}

\section{Convergence rates}
\label{apendix:convergence}

\subsection{Proof of~\Cref{eq:basic_krr}}

In our developments, we will also use the following proposition. 

\begin{proposition}
\label{thm:techincal-ineq}
For any $\zeta > 0$,
\begin{equation*}
    ab \le \frac{1}{2\zeta}a^2 + \frac{\zeta}{2}b^2
\end{equation*}
\end{proposition}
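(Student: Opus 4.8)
The plan is to prove this weighted Young inequality by completing the square, which is the standard route for inequalities of this type. The key observation is that, since $\zeta > 0$, the claimed bound is equivalent (after multiplying both sides by $2\zeta > 0$) to
\[
2\zeta\, ab \le a^2 + \zeta^2 b^2,
\]
which in turn rearranges to $a^2 - 2\zeta\, ab + \zeta^2 b^2 \ge 0$, i.e. to the manifestly true statement $(a - \zeta b)^2 \ge 0$. So the whole proposition is nothing more than the nonnegativity of a perfect square, read in the right variables.

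Concretely, I would start from the perfect square $\left(\tfrac{a}{\sqrt{\zeta}} - \sqrt{\zeta}\, b\right)^2 \ge 0$, which is well defined precisely because $\zeta > 0$. Expanding the square yields $\tfrac{a^2}{\zeta} - 2ab + \zeta b^2 \ge 0$. I would then move the cross term to the other side to get $2ab \le \tfrac{a^2}{\zeta} + \zeta b^2$, and finally divide through by $2$ to obtain $ab \le \tfrac{1}{2\zeta} a^2 + \tfrac{\zeta}{2} b^2$, as required. Each step is reversible, so one also sees directly that equality holds exactly when $a = \zeta b$.

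There is essentially no substantive obstacle here; the only point requiring care is the hypothesis $\zeta > 0$, which is used twice: once to guarantee that $\sqrt{\zeta}$ and the division by $\zeta$ make sense, and once to ensure that multiplying the target inequality by $2\zeta$ preserves its direction. It is worth noting that setting $\zeta = 1$ recovers the familiar arithmetic–geometric mean bound $ab \le \tfrac{1}{2}a^2 + \tfrac{1}{2}b^2$, and that the free parameter $\zeta$ is exactly what allows one to rebalance the two terms on the right-hand side — which is how the proposition is invoked (with $\zeta = 1$) in the polynomial-kernel computation earlier in the appendix.
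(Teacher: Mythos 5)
Your proof is correct: the completing-the-square argument $\left(\tfrac{a}{\sqrt{\zeta}} - \sqrt{\zeta}\,b\right)^2 \ge 0$ expands and rearranges exactly to the claimed bound, and you correctly track where $\zeta > 0$ is used. The paper itself states this proposition without proof (it is the standard weighted AM--GM, or Young, inequality), so your argument simply supplies the canonical justification; there is nothing to compare against and no gap.
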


For kernel ridge regression we are minimizing the cost function:
\[
L(\myf) =\frac{1}{\ntrain}\sum_{i=1}^{\ntrain}(y_i - \myf(\x_i))^2 + \lambda \|\myf\|^2_{\RKHS}
\]
using that $y_i = \myf^*(x_i)+ \sigma w_i$ we obtain  for an estimated function $\estf$ that:
\begin{align*}
L(\estf) &= \frac{1}{\ntrain}\sum_i(\myf^*(x_i)+ \sigma w_i - \estf(x_i))^2 + \lambda \|\estf\|^2_{\RKHS} \\
&\labelrel={quatratic_expansion} \ERisk(\estf - \myf^*) -  \frac{2\sigma}{\ntrain}\sum_{i=1}^{\ntrain}w_i (\estf(\x_i)- \myf^*(\x_i)) + \sigma^2 + \lambda \|\estf\|^2_{\RKHS}
\end{align*}
where in \eqref{quatratic_expansion}, we used that $\frac{1}{\ntrain}\sum_{i=1}^{\ntrain}w_i^2 = 1$ and ${\ERisk(\estf - \myf^*) =  \frac{1}{n}\sum_{i = 1}^n(\estf(\x_i) - \myf^*(\x_i))^2}$. 
By definition $L(\estf) < L(\myf^*)$ and by a similar procedure we can obtain that   $L(\myf^*) = \sigma^2 + \lambda\|\myf^*\|_{\RKHS}^2.$ Hence, simple algebraic manipulation shows that:
\begin{equation}
    \ERisk(\estf - \myf^*) \leq \frac{2\sigma}{\ntrain}\sum_{i=1}^{\ntrain}w_i (\estf(\x_i)- \myf^*(\x_i)) + \lambda \left( \|\myf^*\|^2_{\RKHS} -\|\estf\|^2_{\RKHS}\right)
\end{equation}

\paragraph{Bounds as a function of $\gamma$.}
By the definition of $\gamma$ in the theorem assumption: 
\[\frac{1}{\ntrain}\sum_{i=1}^{\ntrain}w_i (\estf(\x_i)- \myf^*(\x_i)) \le \gamma \|\estf - \myf^*\|_{\RKHS} .\]

Hence:
\begin{subequations}
\begin{align}
    0 \le \ERisk(\estf - \myf^*) & \leq 2\sigma \gamma \|\estf - \myf^*\|_{\RKHS}  + \lambda  (\|\myf^*\|_{\RKHS} ^2 - \|\estf\|_{\RKHS} ^2)\\
    & \leq \frac{2\gamma \sigma}{\|\myf^*\|_{\RKHS}}\|\myf^*\|_{\RKHS}\|\estf - \myf^*\|_{\RKHS}   +  \lambda  (\|\myf^*\|_{\RKHS}  - \|\estf\|_{\RKHS} )  (\|\myf^*\|_{\RKHS}  + \|\estf\|_{\RKHS} )\\
    &\leq (\|\myf^*\|_{\RKHS}  + \|\estf\|_{\RKHS} ) \left[\left(\frac{2\gamma \sigma}{\|\myf^*\|_{\RKHS}} + \lambda \right) \|\myf^*\|_{\RKHS} - \lambda\|\estf\|_{\RKHS}\right] \label{eq:final_step_s2}
\end{align}
\end{subequations}
And therefore:
\[ \|\estf\|_{\RKHS}\le \left(1 + \frac{\gamma \sigma}{\lambda\|\myf^*\|_{\RKHS}} \right)  \|\myf^*\|_{\RKHS}\]
Replacing this in \Cref{eq:final_step_s2}
\begin{align*}
\ERisk(\estf - \myf^*) &\le \left(2 + \frac{\gamma \sigma}{\lambda\|\myf^*\|_{\RKHS}} \right) \|\myf^*\|_{\RKHS} \left[\left(\frac{2\gamma \sigma}{\|\myf^*\|_{\RKHS}} + \lambda \right) \|\myf^*\|_{\RKHS} - \lambda\|\estf\|_{\RKHS}\right]\\
&\le \left(2 + \frac{\gamma \sigma}{\lambda\|\myf^*\|_{\RKHS}} \right) \|\myf^*\|_{\RKHS} \left(\frac{2\gamma \sigma}{\|\myf^*\|_{\RKHS}} + \lambda \right) \|\myf^*\|_{\RKHS}\\
&\le 2\lambda \left(\|\myf^*\|_{\RKHS} + \frac{\gamma \sigma}{\lambda} \right)^2
\end{align*}
And the result follows.

\paragraph{Bounds as a function of $\beta$.} By the definition of $\beta$ in the theorem assumption:
\[\frac{1}{\ntrain}\sum_{i=1}^{\ntrain}w_i (\estf(\x_i)- \myf^*(\x_i)) \le \beta \sqrt{\ERisk(\estf - \myf^*)} .\]

Hence:
\begin{align*}
    \ERisk(\estf - \myf^*) & \leq 2\sigma \beta\sqrt{\ERisk(\estf - \myf^*)}  + \lambda  (\|\myf^*\|_{\RKHS} ^2 - \|\estf\|_{\RKHS} ^2)\\
    & \labelrel\leq{technical_prop} \sigma^2\beta^2 +  \frac{1}{2}\ERisk(\estf - \myf^*) + \lambda  \|\myf^*\|_{\RKHS} ^2
\end{align*}
where \eqref{technical_prop} follows from \Cref{thm:techincal-ineq} with $\zeta=1$. Hence, using $R = \|\myf^*\|_{\RKHS} $ we obtain:
\[\ERisk(\estf-\myf^*) \le  2\sigma^2\beta^2  + 2\lambda R^2\]

\subsection{Proof of \Cref{basic-akr}}

For adversarial kernel regression we are minimizing the cost function:
\begin{align*}
L(\myf) &= \frac{1}{\ntrain} (|y_i - \myf(\x_i)| + \delta \|\myf\|_{\RKHS})^2,\\
&= \frac{1}{n} \sum_{i=1}^{\ntrain}(y_i - \myf(\x_i))^2 + \frac{2\delta}{\ntrain} \|\myf\|_{\RKHS}  \sum_{i=1}^{\ntrain}|y_i - \myf(\x_i)|+ \delta^2\|\myf\|_{\RKHS}^2.
\end{align*}
Similarly to the proof of~\Cref{eq:basic_krr} using that $y_i = \myf^*(x_i)+ \sigma w_i$ we obtain  for an estimated function $\estf$ that:
\[
L(\estf) = \ERisk(\estf - \myf^*) -  \frac{2\sigma}{\ntrain}\sum_{i=1}^{\ntrain}w_i (\estf(\x_i)- \myf^*(\x_i)) + \sigma^2 +  \frac{2\delta}{\ntrain} \|\estf\|_{\RKHS}  \sum_{i=1}^{\ntrain}|\myf^*(\x_i) + \sigma w_i  - \estf(\x_i)| +\delta^2\|\estf\|^2_{\RKHS}  
\]
similarly, 
\[
L(\myf^*) = \sigma^2+ 2\delta \sigma \|\myf^*\|_{\RKHS} \left( \frac{1}{\ntrain}\sum_{i}|w_i|\right)+\delta^2\|\myf^*\|^2_{\RKHS}  
\]
By definition $L(\estf) < L(\myf^*)$. Hence, simple algebraic manipulation shows that:
\begin{multline*}
    \ERisk(\estf-\myf^*) \leq \frac{2\sigma}{\ntrain}\sum_{i=1}^{\ntrain}w_i (\estf(\x_i)- \myf^*(\x_i)) + \frac{2\delta}{\ntrain}\|\estf\|_{\RKHS}  \sum_{i}(|\sigma w_i|-|\myf^*(\x_i) + \sigma w_i  - \estf(\x_i)|) \\ +2\delta \sigma (\|\myf^*\|_{\RKHS} - \|\estf\|_{\RKHS})\left( \frac{1}{\ntrain}\sum_{i}|w_i|\right) + \delta^2 \left( \|\myf^*\|^2_{\RKHS} -\|\estf\|^2_{\RKHS}\right).
\end{multline*}
Now using the relation between norms $\|\vv{z}\|_1 \le \sqrt{{\rm dim}(\vv{z})}\|\vv{z}\|_2$ we have that: 
\begin{align*}
    &\frac{1}{n} \sum_i(|\sigma w_i|-|\myf^*(\x_i) + \sigma w_i  - \estf(\x_i)|) \le \frac{1}{n} \sum_i|\myf^*(\x_i)   - \estf(\x_i)|  \le \sqrt{\ERisk(\estf - \myf^*)}\\
    &\frac{1}{n} \sum_i|w_i| \le \frac{1}{\sqrt{\ntrain}}\sum_{i=1}^{\ntrain}w_i^2 =1,
\end{align*}
and it follows that:
\begin{equation}\label{intermediate step 3}
    \ERisk(\estf-\myf^*) \leq\frac{2\sigma}{\ntrain}\sum_{i=1}^{\ntrain}w_i (\estf(\x_i)- \myf^*(\x_i))  + 2\delta\|\estf\|_{\RKHS}  \sqrt{\ERisk(\estf - \myf^*)} + 2\delta \sigma (\|\myf^*\|_{\RKHS} - \|\estf\|_{\RKHS})+ \delta^2 \left( \|\myf^*\|^2_{\RKHS} -\|\estf\|^2_{\RKHS}\right)
\end{equation}
\paragraph{Bounds as a function of $\gamma$.} Now, by the definition of $\gamma$ in the theorem assumption: 
\[\frac{1}{\ntrain}\sum_{i=1}^{\ntrain}w_i (\estf(\x_i)- \myf^*(\x_i)) \le \gamma \|\estf - \myf^*\|_{\RKHS} .\]
Hence:
\[\ERisk(\estf-\myf^*) \leq 2\sigma \gamma \|\myf^* -\estf \|+ 2\delta\|\estf\|_{\RKHS}  \sqrt{\ERisk(\estf - \myf^*)} +  2 \delta \sigma (\|\myf^*\|_{\RKHS} - \|\estf\|_{\RKHS})  +  \delta^2 \left( \|\myf^*\|^2_{\RKHS} -\|\estf\|^2_{\RKHS}\right).\]
Now usign~\Cref{thm:techincal-ineq} with $\zeta = 1$ and rearranging, we can obtain a bound on $\ERisk$:
\begin{align*}
    \ERisk(\estf-\myf^*) &\leq 4\sigma \gamma \|\myf^* -\estf \|_{\RKHS}+ 4\delta^2\|\estf\|_{\RKHS}^2+  4 \delta \sigma (\|\myf^*\|_{\RKHS} - \|\estf\|_{\RKHS})  + 2 \delta^2 \left( \|\myf^*\|^2_{\RKHS} -\|\estf\|^2_{\RKHS}\right)
\\
 &\leq 4\sigma \gamma \|\myf^* -\estf \|_{\RKHS} +  4 \delta \sigma (\|\myf^*\|_{\RKHS} - \|\estf\|_{\RKHS})   + 2 \delta^2 \left( \|\myf^*\|^2_{\RKHS} +\|\estf\|^2_{\RKHS}\right) \\
 &\leq  4\sigma \gamma\|\myf^* -\estf \|_{\RKHS}+  4 \delta \sigma \|\myf^* - \estf\|_{\RKHS}  + 2 \delta^2 \left( \|\myf^*\|^2_{\RKHS} +\|\estf\|^2_{\RKHS}\right) \\
 &\leq  4\sigma (\gamma + \delta)\|\myf^* -\estf \|_{\RKHS} + 2 \delta^2 \left( \|\myf^*\|^2_{\RKHS} +\|\estf\|^2_{\RKHS}\right) 
\end{align*}

Now, we use a result that we prove later in \Cref{norm_relationship},  Eq.~\eqref{norm_ratio}, particularly, we have that $\|\estf\|_\RKHS \le (1+\frac{\gamma}{\delta}) \|\myf^*\|_\RKHS$.  In this case,

\[
    \ERisk(\estf-\myf^*) \leq 8\sigma\delta (1 + \gamma/\delta)^2\|\myf^*\|_{\RKHS} + 4 \delta^2(1 + \gamma/\delta)^2\|\myf^*\|^2_{\RKHS} 
\]
Using that $\|\myf^*\|_{\RKHS}  = R$ we obtain
\[
    \ERisk(\estf-\myf^*) \leq 4\delta R(2\sigma + \delta R) (1 + \gamma/\delta)^2
\]

\paragraph{Bounds as a function of $\beta$.} Alternatively, by the definition of $\beta$ in the theorem assumption:
\[\frac{1}{\ntrain}\sum_{i=1}^{\ntrain}w_i (\estf(\x_i)- \myf^*(\x_i)) \le \beta \sqrt{\ERisk(\estf - \myf^*)} .\]
Hence,  from \eqref{intermediate step 3} we obtain
\begin{align*}
    \ERisk(\estf-\myf^*) &\leq 2\sigma\beta \sqrt{\ERisk(\estf - \myf^*)}  + 2\delta\|\estf\|_{\RKHS}  \sqrt{\ERisk(\estf - \myf^*)} + 2\delta \sigma (\|\myf^*\|_{\RKHS} - \|\estf\|_{\RKHS})+ \delta^2 \left( \|\myf^*\|^2_{\RKHS} -\|\estf\|^2_{\RKHS}\right)\\
     &\leq  2\sigma\beta \sqrt{\ERisk(\estf - \myf^*)}  + 2\delta\|\estf\|_{\RKHS}  \sqrt{\ERisk(\estf - \myf^*)} + 2\delta \sigma \|\myf^*\|_{\RKHS}+ \delta^2 \|\myf^*\|^2_{\RKHS}\\
     &\leq 4\sigma\beta \sqrt{\ERisk(\estf - \myf^*)}  + 2\delta \|\myf^*\|_\RKHS \sqrt{\ERisk(\estf - \myf^*)}  + 2\delta \sigma \|\myf^*\|_{\RKHS}+ \delta^2 \|\myf^*\|^2_{\RKHS}
\end{align*}
Where in the last equation we used \Cref{norm_relationship},  Eq.~\eqref{norm_difference}: $\delta\|\estf\|_\RKHS \le \delta\|\myf^*\|_\RKHS + \beta\sigma$. Now,
applying~\Cref{thm:techincal-ineq} with $\zeta = 2$   for each of the first two terms:
\begin{align*}
    \ERisk(\estf-\myf^*) &\leq 8 \sigma^2\beta^2  +  \frac{1}{4}\ERisk(\estf-\myf^*)  + 4\delta^2\|\myf^*\|_{\RKHS}^2    + \frac{1}{4}\ERisk(\estf - \myf^*) + 2\delta \sigma \|\myf^*\|_{\RKHS} + \delta^2 \|\myf^*\|^2_{\RKHS} 
\end{align*}
And therefore:
\begin{align}
\ERisk(\estf-\myf^*)   &\leq 16\sigma^2 \beta^2 + 4\sigma \delta R  +  10 \delta^2 R^2 
\end{align}

Next, we  prove the relation between the norms $\|\estf\|_\RKHS$ and $\|\myf^*\|_\RKHS$ that were used in the proof above.

\begin{proposition}\label{norm_relationship}
    If the conditions of \Cref{basic-akr} are satisfied.  On the one hand:
    \begin{equation} \label{norm_ratio}
        \delta\|\estf\|_\RKHS \le (\delta + \gamma) \|\myf^*\|_\RKHS 
    \end{equation}
    on the other hand:
    \begin{equation}\label{norm_difference}
        \delta\|\estf\|_\RKHS \le \delta \|\myf^*\|_\RKHS  + \sigma\beta
    \end{equation}
\end{proposition}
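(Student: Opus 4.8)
The plan is to derive both inequalities from the defining optimality of the adversarial estimator, $L(\estf)\le L(\myf^*)$ for the objective $L(\myf)=\frac{1}{\ntrain}\sum_i(|y_i-\myf(\x_i)|+\delta\|\myf\|_{\RKHS})^2$ of \eqref{eq:easier_formulation}, reusing verbatim the expansion already carried out to reach \eqref{intermediate step 3}. I would first dispose of the trivial regime: if $\|\estf\|_{\RKHS}\le\|\myf^*\|_{\RKHS}$, both right-hand sides $(\delta+\gamma)\|\myf^*\|_{\RKHS}$ and $\delta\|\myf^*\|_{\RKHS}+\sigma\beta$ already dominate $\delta\|\estf\|_{\RKHS}$, so there is nothing to prove. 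Throughout I therefore assume $\|\estf\|_{\RKHS}>\|\myf^*\|_{\RKHS}$, which is exactly what lets the terms $2\delta\sigma(\|\myf^*\|_{\RKHS}-\|\estf\|_{\RKHS})$ and $\delta^2(\|\myf^*\|_{\RKHS}^2-\|\estf\|_{\RKHS}^2)$ be discarded with a favourable sign.

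Starting from \eqref{intermediate step 3} and dropping the nonnegative left-hand side $\ERisk(\estf-\myf^*)$, I obtain an inequality of the schematic form
\begin{equation*}
0 \le \tfrac{2\sigma}{\ntrain}\textstyle\sum_i w_i(\estf-\myf^*)(\x_i) + 2\delta\|\estf\|_{\RKHS}\sqrt{\ERisk(\estf-\myf^*)} + 2\delta\sigma(\|\myf^*\|_{\RKHS}-\|\estf\|_{\RKHS}) + \delta^2(\|\myf^*\|_{\RKHS}^2-\|\estf\|_{\RKHS}^2),
\end{equation*}
in which the $\ell_1$ residual term $\frac1{\ntrain}\sum_i|y_i-\estf(\x_i)|$ has already been compared against $\frac1{\ntrain}\sum_i|y_i-\myf^*(\x_i)|=\sigma\,\frac1{\ntrain}\sum_i|w_i|$ via the reverse triangle inequality $\big||y_i-\estf(\x_i)|-|y_i-\myf^*(\x_i)|\big|\le|(\estf-\myf^*)(\x_i)|$ together with $\frac1{\ntrain}\sum_i|w_i|\le 1$ (Cauchy–Schwarz with $\frac1{\ntrain}\sum_i w_i^2=1$). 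The two target bounds diverge only in how the noise cross-term $\frac1{\ntrain}\sum_i w_i(\estf-\myf^*)(\x_i)$ is controlled through the definitions in \eqref{eq:condition}.

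For \eqref{norm_ratio} I would bound the cross-term by $\gamma\|\estf-\myf^*\|_{\RKHS}\le\gamma(\|\estf\|_{\RKHS}+\|\myf^*\|_{\RKHS})$, group the terms that are linear and quadratic in $\|\estf\|_{\RKHS}$, and factor out the strictly positive $(\|\estf\|_{\RKHS}+\|\myf^*\|_{\RKHS})$, exactly mirroring the kernel-ridge computation that produced \eqref{eq:final_step_s2}; nonnegativity of the remaining bracket then yields the claimed linear inequality in $\|\estf\|_{\RKHS}$. For \eqref{norm_difference} I would instead bound the cross-term by $\beta\sqrt{\ERisk(\estf-\myf^*)}$, so that every noise- and residual-dependent contribution can be absorbed into a single quadratic in $\sqrt{\ERisk(\estf-\myf^*)}$; applying \Cref{thm:techincal-ineq} with $\zeta=1$ to the product $\sigma\beta\sqrt{\ERisk(\estf-\myf^*)}$ (equivalently completing the square, $2\sigma\beta\sqrt{\ERisk}-\ERisk\le\sigma^2\beta^2$) eliminates the excess-risk dependence and isolates $\delta\|\estf\|_{\RKHS}$.

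The main obstacle is the coupling between $\|\estf\|_{\RKHS}$ and $\sqrt{\ERisk(\estf-\myf^*)}$ introduced by the product $2\delta\|\estf\|_{\RKHS}\sqrt{\ERisk(\estf-\myf^*)}$: the inequality entangles the norm one wants to isolate with the excess risk one has not yet controlled. The delicate step is to organize the estimates so that, after invoking $\ERisk(\estf-\myf^*)\ge0$ and the sign of $\|\estf\|_{\RKHS}-\|\myf^*\|_{\RKHS}$, this factor either is absorbed against the quadratic $\|\estf\|_{\RKHS}$ term (the $\gamma$ route) or is removed by the completion-of-square step (the $\beta$ route). Landing on the stated constants rather than on a looser $\delta\|\myf^*\|_{\RKHS}+\sigma$ is precisely what forces the order-$\sigma$ residual terms to cancel against the matching term of $L(\myf^*)$, and this cancellation — which is the analytic manifestation of the method's adaptivity to the noise level — is the part I would verify most carefully.
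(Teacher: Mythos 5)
Your proposal takes a genuinely different starting point from the paper, and that difference is where it breaks. You build everything on the zeroth-order comparison $L(\estf)\le L(\myf^*)$, i.e.\ on \eqref{intermediate step 3}. The paper instead proves this proposition from the \emph{first-order} optimality condition $0\in\partial L(\estf)$, paired against $\myf^*-\estf$. That choice is not cosmetic: the subgradient of the penalty factor $\delta\|\myf\|_\RKHS$ contributes a term $\delta\bigl(\tfrac1n\sum_i|y_i-\estf(\x_i)|+\delta\|\estf\|_\RKHS\bigr)\dotpp{\widehat{\vv{w}}}{\myf^*-\estf}$, whose piece $\delta\sqrt{\ERisk(\estf-\myf^*)}\,(\|\myf^*\|_\RKHS-\|\estf\|_\RKHS)$ cancels the problematic coupling $\delta\|\estf\|_\RKHS\sqrt{\ERisk(\estf-\myf^*)}$ down to $\delta\|\myf^*\|_\RKHS\sqrt{\ERisk(\estf-\myf^*)}$ --- a coefficient involving only the \emph{known} norm. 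The paper then closes with a contradiction argument via the factorization $\bigl(\|\myf^*\|_\RKHS+\zeta(\kappa-1)\|\estf\|_\RKHS\bigr)\bigl(\|\myf^*\|_\RKHS-\zeta(\kappa+1)\|\estf\|_\RKHS\bigr)$. No such cancelling term exists in \eqref{intermediate step 3}, where the coefficient of $\sqrt{\ERisk(\estf-\myf^*)}$ is $2\delta\|\estf\|_\RKHS$ and the only negative companions are $2\delta\sigma(\|\myf^*\|_\RKHS-\|\estf\|_\RKHS)$ and $\delta^2(\|\myf^*\|_\RKHS^2-\|\estf\|_\RKHS^2)$.

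You correctly identify this coupling as the main obstacle, but the two resolutions you sketch do not work. On the $\gamma$ route, the only available tool is $2\delta\|\estf\|_\RKHS t-t^2\le\delta^2\|\estf\|_\RKHS^2$ with $t=\sqrt{\ERisk(\estf-\myf^*)}$; this consumes the entire quadratic $-\delta^2\|\estf\|_\RKHS^2$, so no quadratic term in $\|\estf\|_\RKHS$ survives to mirror the factoring in \eqref{eq:final_step_s2}, and what remains is $2\sigma(\delta-\gamma)\|\estf\|_\RKHS\le 2\sigma(\delta+\gamma)\|\myf^*\|_\RKHS+\delta^2\|\myf^*\|_\RKHS^2$. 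This requires $\delta>\gamma$, yields the ratio $(\delta+\gamma)/(\delta-\gamma)$ rather than $(\delta+\gamma)/\delta$, and carries an additive $\delta^2\|\myf^*\|_\RKHS^2/(2\sigma(\delta-\gamma))$ that diverges as $\sigma\to 0$; it is not \eqref{norm_ratio}. On the $\beta$ route the same completion of squares gives $(\sigma\beta+\delta\|\estf\|_\RKHS)^2-\delta^2\|\estf\|_\RKHS^2=\sigma^2\beta^2+2\sigma\beta\delta\|\estf\|_\RKHS$, leaving $2\sigma\delta(1-\beta)\|\estf\|_\RKHS\le\sigma^2\beta^2+2\delta\sigma\|\myf^*\|_\RKHS+\delta^2\|\myf^*\|_\RKHS^2$, again with an unmatched $\delta^2\|\myf^*\|_\RKHS^2/\sigma$ term and a $(1-\beta)^{-1}$ inflation --- not \eqref{norm_difference}. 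The missing idea is precisely the stationarity identity: you need the extra $-\delta\|\estf\|_\RKHS\sqrt{\ERisk(\estf-\myf^*)}$ that only the subgradient condition supplies.
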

\begin{proof}
    
For the adversarial kernel regression loss:
\begin{align*}
L(\myf) &= \frac{1}{\ntrain} \sum_{i=1}^{\ntrain}(|y_i - \myf(\x_i)| + \delta \|\myf\|_{\RKHS})^2,\\
&= \frac{1}{n} \sum_{i}(y_i - \dotpp{\myf}{\fmap(\x_i)})^2 + \frac{2\delta}{\ntrain} \|\myf\|_{\RKHS}  \sum_{i}|y_i - \dotpp{\myf}{\fmap(\x_i)}|+ \delta^2\|\myf\|_{\RKHS}^2
\end{align*}
the estimator $\estf$ minimizes $L(\myf)$, and satisfies the subderivative optimality condition $\partial L(\estf) = 0$. We have
\[
0 = \frac{1}{n}\sum_{i}(\dotpp{\estf}{\fmap(\x_i)}- y_i )\fmap(\x_i) + \frac{\delta \|\estf\|_{\RKHS}}{n} \frac{1}{\ntrain}\sum_{i=1}^{\ntrain}\widehat{z}_i\fmap(\x_i) + \delta \left( \frac{1}{n} \sum_{i}|y_i - \dotpp{\estf}{\fmap(\x_i)}|+ \delta \|\estf\|_{\RKHS} \right) \widehat{\vv{w}},
\]
where $\widehat{z}_i \in \partial |\dotpp{\estf}{\fmap(\x_i)} - y_i|$ and $\widehat{\vv{w}} \in \partial \|\estf\|_{\RKHS}$. Taking the dot product with $\myf^*  - \estf$ and manipulating we obtain:
\begin{multline*}
\ERisk(\estf-\myf^*) = \textcolor{red}{\frac{\sigma}{\ntrain}\sum_{i=1}^{\ntrain}w_i (\estf(\x_i)- \myf^*(\x_i)) }+ \delta \|\estf\|_{\RKHS} \textcolor{blue}{\frac{1}{n}\sum_{i}(\myf^*(\x_i) - \estf(\x_i)) \widehat{z}_i} \\
+ \delta \left( \textcolor{brown}{\frac{1}{n} \sum_{i}|y_i - \dotpp{\estf}{\fmap(\x_i)}|}+ \delta \|\estf\|_{\RKHS} \right)\textcolor{cyan}{\dotpp{\widehat{\vv{w}}}{ \myf^* - \estf}}.    
\end{multline*}
Next, we bound each of the terms highlighted above one by one. First:
\[\textcolor{cyan}{\dotpp{\widehat{\vv{w}}}{ \myf^* - \estf} \labelrel={usingdefsubderiv} \dotpp{\widehat{\vv{w}}}{\myf^*} - \|\estf\|_{\RKHS}  \labelrel\le{usingholderineq}  \|\myf^*\|_{\RKHS} -\|\estf\|_{\RKHS}
}\]
Where \eqref{usingdefsubderiv} uses that $\widehat{\vv{w}}^\top \estf  = \|\estf\|_{\RKHS}$ which follows by the definition of subderivative. 
And \eqref{usingholderineq}  follows Cauchy–Schwarz inequality $\dotpp{\widehat{\vv{w}}}{\myf^*} \le \|\widehat{\vv{w}}\|_\RKHS \|\myf^*\|_\RKHS\le \|\myf^*\|_\RKHS$. Moreover,
\[\textcolor{brown}{0 \le \frac{1}{n} \sum_{i}|y_i - \dotpp{\estf}{\fmap(\x_i)}| \le \frac{1}{n} \sum_{i}|\myf^*(\x_i) - \estf(\x_i)| + \frac{\sigma}{n} \sum_{i}|w_i| \le \sqrt{\ERisk(\estf - \myf^*)} + \sigma}\]
Similarly, we have that:
\[\textcolor{blue}{\frac{1}{n} \sum_i(\myf^*(\x_i) - \estf(\x_i)) \widehat{z}_i\le \frac{1}{n} \sum_i|\myf^*(\x_i) - \estf(\x_i)|\le\sqrt{\ERisk(\estf - \myf^*)} }
\]
Now, we have two options on how to bound the first term in \textcolor{red}{red}:
\paragraph{Bounds as a function of $\gamma$.} By the definition of $\gamma$ in the theorem assumption: 
\[\textcolor{red}{\frac{\sigma}{\ntrain}\sum_{i=1}^{\ntrain}w_i (\estf(\x_i)- \myf^*(\x_i))  \le \sigma \gamma \|\myf^* - \estf \|_\RKHS}\]

Hence,
\[
\ERisk(\estf-\myf^*) \le \textcolor{red}{\sigma \gamma \|\myf^* - \estf \|_\RKHS} + \delta \|\estf\|_{\RKHS} \textcolor{blue}{\sqrt{\ERisk(\estf - \myf^*)}}  + \delta \left( \textcolor{brown}{\sqrt{\ERisk(\estf - \myf^*)} + \sigma} + \delta \|\estf\|_{\RKHS} \right)\textcolor{cyan}{(\|\myf^*\|_{\RKHS} -\|\estf\|_{\RKHS})}
\]
rearranging:
\begin{align*}\footnotesize
    \ERisk(\estf-\myf^*) &\le \sigma( \gamma \|\myf^* - \estf \|_\RKHS  + \delta\|\myf^*\|_{\RKHS} -\delta\|\estf\|_{\RKHS}) + \delta \|\myf^*\|_{\RKHS} \sqrt{\ERisk(\estf - \myf^*)}  + \delta \|\estf\|_{\RKHS} (\delta \|\myf^*\|_{\RKHS} - \delta\|\estf\|_{\RKHS})\\
    &\le \sigma( \gamma \|\myf^* - \estf \|_\RKHS  + \delta\|\myf^*\|_{\RKHS} -\delta\|\estf\|_{\RKHS})  + \frac{\zeta}{2}\ERisk(\estf - \myf^*)  + \frac{1}{2\zeta}\delta^2 \|\myf^*\|_{\RKHS}^2 + \delta^2 \|\estf\|_{\RKHS}\|\myf^*\|_{\RKHS} - \delta^2\|\estf\|_{\RKHS}^2\\
    &\le \sigma( \gamma \|\myf^* - \estf \|_\RKHS  + \delta\|\myf^*\|_{\RKHS} -\delta\|\estf\|_{\RKHS})  + \frac{\zeta}{2}\ERisk(\estf - \myf^*)   \\
    &~~~~~~~~~~~~~~~~~~~~~~~~~~~~~~~~~~~~~~~~~~~~~~~~~~~~~~~~ +
    \delta^2\frac{1}{2\zeta}\left(\|\myf^*\|_{\RKHS} + \zeta (\kappa-1)\|\estf\|_{\RKHS}\right)\left(\|\myf^*\|_{\RKHS} - \zeta (\kappa+1)\|\estf\|_{\RKHS}\right)
\end{align*}
where $\kappa^2 = 1 + 2/\zeta$. Now, let $\zeta = 2$ we have $\kappa = \sqrt{2}$ and
\begin{multline*} 
0 \le 
\sigma( \gamma \|\myf^* - \estf \|_\RKHS  + \delta\|\myf^*\|_{\RKHS} - \delta\|\estf\|_{\RKHS}) \\+ \frac{\delta^2}{4}\left(\|\myf^*\|_{\RKHS} + 2(\sqrt{2}-1)\|\estf\|_{\RKHS}\right)\left(\|\myf^*\|_{\RKHS} -2 (\sqrt{2}+1)\|\estf\|_{\RKHS}\right)
\end{multline*}
If $\|\estf\|_{\RKHS} \ge (1 +\frac{\gamma}{\delta}) \|\myf^*\|_{\RKHS}$, the left hand side above is negative and we have a contradiction. Therefore, the result holds.
\end{proof}

\paragraph{Bounds as a function of $\beta$.} Alternatively, by the definition of $\beta$:
\[\textcolor{red}{\frac{\sigma}{\ntrain}\sum_{i=1}^{\ntrain}w_i (\estf(\x_i)- \myf^*(\x_i)) \le \sigma\beta \sqrt{\ERisk(\estf - \myf^*).}}\]
A completely equivalent derivation should yield that:
\begin{multline*}
    \ERisk(\estf-\myf^*) \le \sigma \beta \sqrt{\ERisk(\estf - \myf^*)}   + \sigma(\delta\|\myf^*\|_{\RKHS} -\delta\|\estf\|_{\RKHS})  + \frac{\zeta}{2}\ERisk(\estf - \myf^*)\\  + \delta^2\frac{1}{2\zeta}\left(\|\myf^*\|_{\RKHS} + \zeta (\kappa-1)\|\estf\|_{\RKHS}\right)\left(\|\myf^*\|_{\RKHS} - \zeta (\kappa+1)\|\estf\|_{\RKHS}\right)
\end{multline*}
If we set $\zeta = 1$ we obtain $\kappa = \sqrt{3}$ and therefore:
\begin{multline*}
     \frac{1}{2}\ERisk(\estf-\myf^*) \le \sigma \beta \sqrt{\ERisk(\estf - \myf^*)}   + \sigma(\delta\|\myf^*\|_{\RKHS} -\delta\|\estf\|_{\RKHS}) \\ +
     \delta^2\frac{1}{2}\left(\|\myf^*\|_{\RKHS} + (\sqrt{3}-1)\|\estf\|_{\RKHS}\right)\left(\|\myf^*\|_{\RKHS} - (\sqrt{3}+1)\|\estf\|_{\RKHS}\right)
\end{multline*}
Applying \Cref{thm:techincal-ineq} to the first term :
\begin{align*}
     0 &\le \sigma^2 \beta^2   + \sigma(\delta\|\myf^*\|_{\RKHS} -\delta\|\estf\|_{\RKHS})  + \delta^2\frac{1}{2}\left(\|\myf^*\|_{\RKHS} + (\sqrt{3}-1)\|\estf\|_{\RKHS}\right)\left(\|\myf^*\|_{\RKHS} - (\sqrt{3}+1)\|\estf\|_{\RKHS}\right)
\end{align*}
If $\delta\|\estf\|_{\RKHS} \ge  \delta\|\myf^*\|_{\RKHS}+ \sigma \beta^2$ the left-hand side above is negative and we have a contradiction. Therefore,
\[\delta\|\estf\|_{\RKHS} \le  \delta\|\myf^*\|_{\RKHS}+ \sigma \beta^2 \labelrel\le{beta_le_1}\delta\|\myf^*\|_{\RKHS}+ \sigma \beta.\] Where we use in \eqref{beta_le_1} that 
$\beta \le 1$, since:
\[
\frac{1}{\ntrain}\sum_{i=1}^{\ntrain}w_i \vv{g}(\x_i) \le \|\vv{w}\| \sqrt{\frac{1}{\ntrain}\sum_{i=1}^{\ntrain}\vv{g}(\x_i)^2} \le n \sqrt{\ERisk(\vv{g})}.
\]

\subsection{Proof of~\Cref{concentration_gaussian_complexity}}
The following theorem is useful for obtaining probabilistic bounds it is proved in \citep[Theorem 2.26]{wainwright_highdimensional_2019} .

\begin{theorem}[Lipschitz functions of Gaussian Variables \citep{wainwright_highdimensional_2019}] \label{lipshitz_gaussian}
    Let $(X_1, \ldots, X_n)$ be a vector of i.i.d. standard Gaussian variables, and let $h: \mathbb{R}^n \to \mathbb{R}$ be $L$-Lipschitz with respect to the Euclidean norm. Then the variable $h(X) - \mathbb{E}[h(X)]$ is sub-Gaussian with parameter at most $L$, and hence
\[
\mathbb{P}\left[\left|h(X) - \mathbb{E}[h(X)]\right| \geq t\right] \leq 2e^{-\frac{t^2}{2L^2}} \quad \text{for all } t \geq 0.
\]
\end{theorem}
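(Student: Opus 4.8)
The plan is to establish the sharp sub-Gaussian moment bound $\mathbb{E}\big[e^{\lambda(h(X) - \mathbb{E}[h(X)])}\big] \le e^{\lambda^2 L^2/2}$ for all $\lambda \in \mathbb{R}$, which is precisely the statement that $h(X)-\mathbb{E}[h(X)]$ is sub-Gaussian with parameter $L$; the tail inequality then falls out of a Chernoff argument. Since an $L$-Lipschitz $h$ grows at most linearly, $h(X)$ has Gaussian tails and all the relevant exponential moments are finite. Before the main computation I would reduce to the case where $h$ is smooth and bounded with $\|\nabla h\| \le L$ everywhere: convolving $h$ with a narrow Gaussian kernel preserves the Lipschitz constant and converges pointwise, so the inequality proved for the smoothed functions passes to the limit by Fatou/dominated convergence.

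The core of the argument is a heat-flow interpolation between $h$ and its mean. I would introduce the Ornstein–Uhlenbeck semigroup $P_t h(x) = \mathbb{E}\big[h(e^{-t}x + \sqrt{1 - e^{-2t}}\,Y)\big]$ with $Y \sim N(0, I_n)$ independent, which fixes the standard Gaussian measure, satisfies $P_0 h = h$ and $P_\infty h = \mathbb{E}[h(X)]$, and enjoys the commutation identity $\nabla P_t h = e^{-t} P_t(\nabla h)$ together with the integration-by-parts formula $\mathbb{E}[g\,\mathcal{L}h] = -\mathbb{E}[\nabla g \cdot \nabla h]$ for its generator $\mathcal{L}$. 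Setting $G(t) = \mathbb{E}\big[\exp(\lambda P_t h(X))\big]$, differentiating under the expectation via $\partial_t P_t h = \mathcal{L}P_t h$, and integrating by parts with $g = e^{\lambda P_t h}$ yields
\[
G'(t) = -\lambda^2\, \mathbb{E}\big[e^{\lambda P_t h}\,\|\nabla P_t h\|^2\big].
\]
The commutation identity and Jensen's inequality give $\|\nabla P_t h\| \le e^{-t} P_t(\|\nabla h\|) \le e^{-t} L$, so $G'(t) \ge -\lambda^2 e^{-2t} L^2\, G(t)$, i.e. $(\log G)'(t) \ge -\lambda^2 e^{-2t} L^2$.

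Integrating this differential inequality from $0$ to $\infty$, using $G(\infty) = e^{\lambda \mathbb{E}[h(X)]}$ and $\int_0^\infty e^{-2t}\,dt = \tfrac12$, gives $\log \mathbb{E}[e^{\lambda h(X)}] - \lambda \mathbb{E}[h(X)] \le \lambda^2 L^2/2$, which is the claimed sub-Gaussianity with parameter $L$. The two-sided tail then follows by Chernoff: $\mathbb{P}[h(X) - \mathbb{E}[h(X)] \ge t] \le \inf_{\lambda > 0} e^{-\lambda t + \lambda^2 L^2/2} = e^{-t^2/(2L^2)}$, and applying the same bound to $-h$ (also $L$-Lipschitz) with a union bound supplies the factor $2$.

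I expect the main obstacle to be analytic rather than conceptual: rigorously justifying the reduction to smooth $h$ and the semigroup manipulations (finiteness of $G(t)$, differentiation under the integral, and the commutation $\nabla P_t h = e^{-t}P_t \nabla h$). A more elementary alternative avoids the semigroup through a ``smart path'' interpolation $X_\theta = X\sin\theta + Y\cos\theta$ with $Y$ an independent copy, writing $h(X)-h(Y) = \int_0^{\pi/2} \nabla h(X_\theta)\cdot X_\theta'\,d\theta$ and bounding the moment generating function by conditioning on $X_\theta$; this route is cleaner analytically but, combined with Jensen's inequality, produces the worse constant $\pi^2/8$ in place of $1/2$. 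Consequently the semigroup method (equivalently, the Gaussian log-Sobolev inequality fed into the Herbst argument) is what I would use to recover the sharp parameter $L$ asserted in the theorem.
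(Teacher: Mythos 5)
Your proof is correct, but it is not parallel to anything in the paper: the paper does not prove this theorem at all --- it is quoted verbatim from \citep[Theorem 2.26]{wainwright_highdimensional_2019}, with the citation serving as the entire proof. Your Ornstein--Uhlenbeck semigroup argument is the standard Bakry--Ledoux route (equivalently, the Gaussian log-Sobolev inequality fed into Herbst's argument), and the chain of steps checks out: $G'(t) = -\lambda^2\,\mathbb{E}\bigl[e^{\lambda P_t h}\|\nabla P_t h\|^2\bigr]$ by Gaussian integration by parts, the commutation $\nabla P_t h = e^{-t}P_t(\nabla h)$ plus Jensen give $\|\nabla P_t h\|\le e^{-t}L$, and integrating $(\log G)'(t)\ge -\lambda^2 e^{-2t}L^2$ over $[0,\infty)$ yields the sharp moment bound $\mathbb{E}[e^{\lambda(h-\mathbb{E}[h])}]\le e^{\lambda^2L^2/2}$, from which Chernoff and a union bound give the stated two-sided tail. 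Two technical points you gloss over are harmless but worth tightening: mollification makes $h$ smooth yet not bounded (boundedness is in fact unnecessary --- the linear growth of a Lipschitz function against Gaussian tails already gives finiteness of $G$ and justifies differentiation under the integral), and the boundary value $G(\infty)=e^{\lambda\mathbb{E}[h(X)]}$ needs the elementary estimate $|P_t h(x)-\mathbb{E}[h(X)]|\le L e^{-t}\|x\| + L\,\bigl|\sqrt{1-e^{-2t}}-1\bigr|\,\mathbb{E}\|Y\| \to 0$. Your closing comparison is also the relevant contrast with the cited source itself: the elementary ``smart path'' interpolation --- which is the technique used in Wainwright's own proof of this result --- only yields sub-Gaussian parameter $\pi L/2$ (i.e., $\pi^2/8$ in the exponent rather than $1/2$), so the semigroup or log-Sobolev route you chose is genuinely needed to obtain the parameter $L$ asserted in the statement; what the citation-only approach of the paper buys, of course, is brevity, while your argument makes the constant self-contained and traceable.
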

Now, lets start with the first statement. Lets write:
\begin{align}
   \gamma(\vv{w})= \sup_{\|\vv{g}\|_\RKHS \le 1} \frac{1}{\ntrain}\sum_{i=1}^{\ntrain}w_i \vv{g}(\x_i)
\end{align}
notice that:
\begin{align*}
    \frac{1}{\ntrain}\sum_{i=1}^{\ntrain}w_i (\vv{g}(\x_i))  = \frac{1}{\ntrain}\sum_{i=1}^{\ntrain}w_i \dotpp{\vv{g}}{\phi(\x_i)} \le  \frac{1}{\ntrain} \|\vv{g}\|_{\RKHS} \|\sum_iw_i\phi(\x_i) \|_{\RKHS} \le \frac{\sqrt{\lambda_{\max}}}{\ntrain}\|\vv{g}\|_{\RKHS} \|\vv{w}\|_2
\end{align*}

Where we used that:
\[\|\sum_iw_i\phi(\x_i) \|_{\RKHS}^2 = \dotpp{\sum_iw_i\phi(\x_i)}{\sum_jw_j\phi(\x_j)} = \frac{1}{\ntrain}\sum_{i=1}^{\ntrain}\sum_j w_i w_j \dotpp{\phi(\x_i)}{\phi(\x_i)} = \vv{w}^\top K \vv{w}\le \lambda_{\max} \|\vv{w}\|^2_2\]

hence $\gamma(\vv{w})$ is Lipschitz with constant $\frac{\sqrt{\lambda_{\max}}}{\ntrain} \le \frac{\sqrt{\text{tr}K }}{\ntrain}= \bar \gamma$. Then by \Cref{lipshitz_gaussian}:
\[P[\gamma> 2\bar\gamma] \le 2 \exp(- n^2 \bar \gamma/(2\lambda_{\max})) \]

The secons statement we follows from the follows directly from the following result
\begin{proposition}
    If $\vv{w} \sim N(0, I_n)$ and $\epsilon > 0$.
    Then, for every $\vv{g} \in \RKHS$ where $\sqrt{\ERisk(\vv{g})} \geq \bar \beta + \epsilon$ we have:
    \[\frac{1}{\ntrain}\sum_i w_i (\vv{g}(\x_i))  \le  (\bar \beta + \epsilon) \sqrt{\ERisk(\vv{g})}\]
    with probability $1 - \exp(-\frac{n (\bar \beta + \epsilon)}{2})$
\end{proposition}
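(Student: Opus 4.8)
The plan is to reduce the uniform-over-$\vv{g}$ claim to a bound on the \emph{localized} Gaussian complexity at a single critical radius, and then to invoke the Gaussian concentration inequality already recorded as \Cref{lipshitz_gaussian}. Write $s = \bar\beta + \epsilon$ and, for $t>0$, introduce the localized process
\[
G(t) = \sup_{\sqrt{\ERisk(\vv{g})}\le t}\ \frac{1}{\ntrain}\sum_{i=1}^{\ntrain} w_i\,\vv{g}(\x_i),
\]
where the supremum is taken over the relevant (unit-norm normalized) class in $\RKHS$, so that the defining inequality for $\bar\beta$ reads $\E[G(\bar\beta)]\le \bar\beta^2$. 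The first step is the classical \emph{rescaling} observation, which exploits that each summand is linear in the evaluation vector $(\vv{g}(\x_i))_i$ and that the feasible class is star-shaped (convex and containing the origin): for any $\vv{g}$ with $\sqrt{\ERisk(\vv{g})}=\tau\ge s$, the rescaled function $\tilde{\vv{g}} = (s/\tau)\vv{g}$ is still feasible, satisfies $\sqrt{\ERisk(\tilde{\vv{g}})}=s$, and obeys
\[
\frac{1}{\ntrain}\sum_i w_i\,\vv{g}(\x_i) = \frac{\tau}{s}\,\frac{1}{\ntrain}\sum_i w_i\,\tilde{\vv{g}}(\x_i) \le \frac{\sqrt{\ERisk(\vv{g})}}{s}\,G(s).
\]
Hence the desired inequality $\frac{1}{\ntrain}\sum_i w_i \vv{g}(\x_i)\le s\sqrt{\ERisk(\vv{g})}$ holds \emph{simultaneously for all} $\vv{g}$ at scales $\ge s$ as soon as the single event $\{G(s)\le s^2\}$ occurs.

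Second, I would bound $\E[G(s)]$. The same star-shaped structure makes $t\mapsto G(t)/t$ non-increasing, which is the structural heart of the argument; combined with the critical inequality it gives $\E[G(s)]/s \le \E[G(\bar\beta)]/\bar\beta \le \bar\beta$, i.e. $\E[G(s)]\le \bar\beta\,s$. The target threshold $s^2 = s(\bar\beta+\epsilon)$ therefore exceeds the mean by the gap $s^2 - \bar\beta s = s\epsilon$, which is exactly the room that concentration must supply.

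Third, I would apply \Cref{lipshitz_gaussian}. As a function of $\vv{w}$, $G(s)$ is a supremum of linear forms with gradients $(\vv{g}(\x_i))_i/\ntrain$, each of Euclidean norm at most $s\sqrt{\ntrain}/\ntrain = s/\sqrt{\ntrain}$, so $G(s)$ is $L$-Lipschitz with $L\le s/\sqrt{\ntrain}$. Taking deviation $u = s^2-\E[G(s)] \ge s\epsilon$ yields
\[
P\big[G(s) > s^2\big] \le P\big[G(s) - \E[G(s)] > s\epsilon\big] \le \exp\!\left(-\frac{(s\epsilon)^2}{2L^2}\right),
\]
an exponentially small failure probability in $\ntrain$, which furnishes the high-probability guarantee once combined with the reduction above.

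The main obstacle is the monotonicity lemma $G(t)/t\downarrow$ together with making the rescaling fully rigorous: one must verify that scaling down preserves membership in the (star-shaped) class and that the evaluation vectors genuinely fill a Euclidean ball so that the Lipschitz constant is exactly $s/\sqrt{\ntrain}$. A secondary technical point is reconciling the precise exponent with the stated $\exp(-\ntrain(\bar\beta+\epsilon)/2)$: the naive combination $u=s\epsilon$, $L=s/\sqrt{\ntrain}$ produces $\exp(-\ntrain\epsilon^2/2)$, so either a sharper one-sided deviation/Lipschitz estimate or a slightly different normalization of the feasible class is needed to match the rate claimed in the statement.
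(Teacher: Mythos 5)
Your argument is, in substance, the same one the paper relies on: the paper's entire proof is a one-line citation to Lemma~13.12 of \citet{wainwright_highdimensional_2019}, and what you have written is essentially a reconstruction of that lemma's proof --- reduce the uniform claim to the single event $\{G(s)\le s^2\}$ via rescaling into a star-shaped class, bound $\E[G(s)]\le \bar\beta s$ using the monotonicity of $t\mapsto G(t)/t$ together with the critical inequality, and finish with Gaussian--Lipschitz concentration (\Cref{lipshitz_gaussian}) using $L\le s/\sqrt{\ntrain}$. All three steps are correct, with the implicit caveat you already note: the class must carry the constraint $\|\vv{g}\|_{\RKHS}\le 1$ (as in the appendix's restatement of Lemma~13.22, though it is dropped in \Cref{def:gaussian_complexity}) for the rescaling to be legitimate and indeed for the statement to be true at all.

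Your closing worry about the exponent is not a defect of your proof but a genuine discrepancy in the proposition as stated. Wainwright's Lemma~13.12 bounds the correlation by $2t\,\|\vv{g}\|_n$ --- the factor $2$ is what creates deviation room $t^2$ above the mean $t\delta_n$ and hence yields probability $1-e^{-nt^2/2}$ (with $t=\bar\beta+\epsilon$ here, and up to the noise scale). The proposition drops both the factor $2$ on the right-hand side and the square in the exponent, so the stated bound with the stated probability does not follow from the cited lemma, and your calculation correctly shows that without the factor $2$ one only gets $e^{-\ntrain\epsilon^2/2}$. So: your route is the paper's route made explicit, and the mismatch you isolate is a real (if minor) error in the paper's transcription of the lemma rather than a gap in your argument.
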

The proof is given in~\citep[Lemma 13.12]{wainwright_highdimensional_2019}.

\section{Gaussian and local Gaussian complexities across different RKHS}
\label{gaussian_list}

\subsection{Computing the Gaussian complexity $\bar\gamma$.}
Let
\[
\bar\gamma 
= \E_{\vv{w} \sim \mathcal{N}(0, I_n)} 
\left[
\sup_{\|\vv{g}\|_{\mathcal{H}} \le 1} 
\frac{1}{n} \sum_{i=1}^{n} w_i\, \vv{g}(\x_i)
\right],
\]
where $\mathcal{H}$ is a reproducing kernel Hilbert space (RKHS) with kernel $k(\cdot,\cdot)$ and corresponding Gram matrix 
$K \in \mathbb{R}^{n\times n}$ defined by $K_{ij} = k(\x_i, \x_j)$. By the reproducing property,
$\vv{g}(\x_i) = \langle \vv{g},\, k(\cdot, \x_i) \rangle_{\mathcal{H}},$
so that
\[
\frac{1}{n}\sum_{i=1}^{n} w_i\,\vv{g}(\x_i)
= \left\langle \vv{g},\, \frac{1}{n}\sum_{i=1}^{n} w_i\,k(\cdot, \x_i)\right\rangle_{\mathcal{H}}.
\]

Applying the Cauchy–Schwarz inequality, the supremum over $\|\vv{g}\|_{\mathcal{H}}\le 1$ gives
\[
\sup_{\|\vv{g}\|_{\mathcal{H}}\le 1}\frac{1}{n}\sum_{i=1}^{n} w_i\,\vv{g}(\x_i)
= \frac{1}{n}\left\|\sum_{i=1}^{n} w_i\,k(\cdot,\x_i)\right\|_{\mathcal{H}}.
\]

Thus,
\[
\bar\gamma
= \frac{1}{n}\,\E_{\vv{w}\sim\mathcal{N}(0,I_n)} 
\left\|
\sum_{i=1}^{n} w_i\,k(\cdot,\x_i)
\right\|_{\mathcal{H}}
= \frac{1}{n}\,\E_{\vv{w}\sim\mathcal{N}(0,I_n)} 
\sqrt{\vv{w}^\top K\,\vv{w}}.
\]

Using Jensen's inequality and $\E[\vv{w}^\top K\,\vv{w}] = \operatorname{tr}(K)$, we obtain the bound
\[
\bar\gamma 
\le 
\frac{1}{n}\sqrt{\E[\vv{w}^\top K\,\vv{w}]}
= \frac{\sqrt{\operatorname{tr}(K)}}{n}.
\]

\paragraph{Translational invariant kernel.}  For translational invariant kernels $\sqrt{\operatorname{tr}(K)}= n$ and $\bar\gamma  \le 1/\sqrt{n}$.

\paragraph{Linear kernel.} 
The linear kernel is defined here as: 
\[k(\vv{x}, \vv{y}) = \vv{x}^\top \vv{y}\]We have:
\begin{align}
\bar\gamma = \frac{\sqrt{\text{tr} \vv{X} \vv{X} ^\top}}{n}  = \frac{\sqrt{\sum_{i=1}^{n}\|\vv{x}_i\|_2^2}}{n} \le \frac{1}{\sqrt{n}} (\max_i\|x_i\|_2)
\end{align}

\subsection{Computing the local Gaussian complexity $\bar\beta^2$.}
\textbf{Linear kernels.} For linear kernels, we have $\bar\beta = O(\sqrt{\frac{p}{n}})$ the details are given in Example 13.8, \cite{wainwright_highdimensional_2019}.

\textbf{Matérn kernels.} Here, we follow an argument similar to \cite[Example 13.20]{wainwright_highdimensional_2019}.
We consider the following result, proved in \cite[Lemma 13.22]{wainwright_highdimensional_2019}, which provides a bound for a slightly different definition of $\bar{\beta}$.
For simplicity, we adopt it informally—the main goal is to convey the dimensional dependence, and the necessary modifications to adapt it to our setting should follow naturally.

\begin{proposition} Consider an RKHS with kernel function $k$. For a given set of design points $\x_i$. And let $\mu_1\ge\cdots\ge \mu_n \ge 0$ be he eigenvalues of  $\frac{K}{n}$, where $K$ is the kernel matrix with entries $K_{ij} = k(\x_i, \x_j)$. Then for all $\bar\beta^2>0$ we have
    $$\E\left[\sup_{\|\vv{g}\|_{\RKHS} \le 1, \ERisk(\vv{g})\le \bar\beta^2}| \frac{1}{\ntrain}\sum_{i=1}^{\ntrain}w_i \vv{g}(\x_i)|\right] \le \sqrt{\frac{2}{n}} \sqrt{\sum_{i=1}^n\min{(\bar\beta^2, \mu_i)}}$$
    where $w_i \sim \mathcal{N}(0,1)$ are i.i.d Gaussian variables.
\end{proposition}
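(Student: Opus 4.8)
The plan is to reduce the infinite-dimensional supremum to a finite-dimensional optimization over an intersection of two ellipsoids, using the reproducing property to pass from functions $\vv{g}$ to their vectors of evaluations. First I would parametrize by $\vv{v} = (\vv{g}(\x_1),\dots,\vv{g}(\x_n))^\top \in \R^n$. By the standard minimal-norm interpolation (representer theorem) argument, the least RKHS norm among functions taking prescribed values $\vv{v}$ at the design points is $\sqrt{\vv{v}^\top K^{-1}\vv{v}}$ (restricting to $\vv{v}$ in the range of $K$ and using the pseudoinverse when $K$ is singular). Hence the constraint $\|\vv{g}\|_\RKHS \le 1$ becomes $\vv{v}^\top K^{-1}\vv{v}\le 1$, the constraint $\ERisk(\vv{g})\le\bar\beta^2$ becomes $\tfrac{1}{n}\|\vv{v}\|_2^2\le\bar\beta^2$, and the objective becomes $\tfrac{1}{n}|\vv{w}^\top\vv{v}|$.

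Next I would diagonalize. Writing $K = n\,U\,\mathrm{diag}(\mu_1,\dots,\mu_n)\,U^\top$ with $U$ orthogonal, and setting $\tilde{\vv{v}} = U^\top\vv{v}$ and $\tilde{\vv{w}} = U^\top\vv{w}$ (again standard Gaussian by rotational invariance), the two constraints read $\sum_i \tilde v_i^2/\mu_i \le n$ and $\sum_i \tilde v_i^2 \le n\bar\beta^2$. Averaging these and using $\tfrac{1}{\mu_i}+\tfrac{1}{\bar\beta^2}\ge \tfrac{1}{\min(\mu_i,\bar\beta^2)}$ collapses them into the single ellipsoid constraint
\[
\sum_{i=1}^n \frac{\tilde v_i^2}{\min(\mu_i,\bar\beta^2)} \le 2n .
\]

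Then I would apply Cauchy–Schwarz with weights $\min(\mu_i,\bar\beta^2)$, giving
\[
\sum_{i=1}^n \tilde w_i\,\tilde v_i \le \sqrt{\sum_{i=1}^n \tilde w_i^2\,\min(\mu_i,\bar\beta^2)}\;\sqrt{\sum_{i=1}^n \frac{\tilde v_i^2}{\min(\mu_i,\bar\beta^2)}} \le \sqrt{2n}\,\sqrt{\sum_{i=1}^n \tilde w_i^2\,\min(\mu_i,\bar\beta^2)} .
\]
Dividing by $n$, taking the supremum over feasible $\tilde{\vv{v}}$, then the expectation over $\tilde{\vv{w}}$, and finally Jensen's inequality together with $\E[\tilde w_i^2]=1$ yields
\[
\E\!\left[\sup \tfrac{1}{n}|\vv{w}^\top\vv{v}|\right] \le \sqrt{\tfrac{2}{n}}\;\E\!\left[\sqrt{\textstyle\sum_i \tilde w_i^2\min(\mu_i,\bar\beta^2)}\right] \le \sqrt{\tfrac{2}{n}}\,\sqrt{\textstyle\sum_i \min(\bar\beta^2,\mu_i)},
\]
which is the claimed bound.

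The main obstacle I anticipate is the first step: rigorously justifying the ellipsoid characterization of achievable evaluation vectors and correctly handling a singular kernel matrix via the pseudoinverse, in particular checking that restricting to $\vv{v}\in\mathrm{range}(K)$ is without loss (components orthogonal to the range are neither achievable by a finite-norm interpolant nor contribute to the objective in expectation). Once that reduction is in place, the combination of the two ellipsoids through $\min(\mu_i,\bar\beta^2)$ and the subsequent Cauchy–Schwarz and Jensen steps are routine.
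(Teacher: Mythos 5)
Your argument is correct, and it is essentially the standard proof of this bound: the paper does not prove the proposition itself but cites \citet[Lemma 13.22]{wainwright_highdimensional_2019}, whose proof is exactly your reduction via minimal-norm interpolation to the intersection of the two ellipsoids $\sum_i \tilde v_i^2/\mu_i \le n$ and $\sum_i \tilde v_i^2 \le n\bar\beta^2$, followed by the $\min(\mu_i,\bar\beta^2)$ combination, Cauchy--Schwarz, and Jensen. Your handling of a singular $K$ (achievable evaluation vectors lie in $\mathrm{range}(K)$, with minimal norm $\sqrt{\vv{v}^\top K^\dagger \vv{v}}$) is also the right way to make the reduction rigorous, so there is no gap.
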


Let $k$ be the smalest positive integer such that for which
\begin{equation}
\label{kdef_matern}
c k^{-2\nu/\nfeatures} \le \bar \beta^2
\end{equation}

we have that $\mu_i \le c i^{-2\nu/\nfeatures}$ for some constant $c$, see~\cite{bach_unraveling_2025}. 
\begin{equation*}
   \sqrt{\frac{2}{n}}  \sqrt{\sum_{i=1}^n\min{(\bar\beta^2, \mu_i)}} \le \sqrt{\frac{2}{n}} \sqrt{\sum_{i=1}^n\min{(\bar\beta^2, c i^{-2\nu/\nfeatures})}} \le \sqrt{\frac{2}{n}} \sqrt{\bar\beta^2 k+ c \sum_{i=k+1}^ni^{-2\nu/\nfeatures}} 
\end{equation*}
We upper bound the sum by an integral,
\[ c\sum_{i=k+1}^n i^{-2\nu/\nfeatures} \leq c \int_{k+1}^\infty t^{-2\nu/\nfeatures}dt \le c k^{-2\nu/\nfeatures + 1} \le \bar \beta^2 k\]
and hence:

\[\sqrt{\frac{2}{n}} \sqrt{\sum_{i=1}^n\min{(\bar\beta^2, \mu_i)}} \le 2 \sqrt{\frac{k}{n}}  \bar\beta \]

therefore, the critical inequality is satisfied for  $\bar\beta \ge  2 \sqrt{\frac{k}{n}}$, now since $c'\bar\beta^{-\nfeatures/2\nu}\le \sqrt{k} 
$ by  \eqref{kdef_matern}, the inequality is satisfied for any  $\bar\beta\ge c'' n^{-1/(2 + \nfeatures/\nu)}$. Therefore $\bar \beta = O( n^{-2/(2 + \nfeatures/\nu)})$.

\section{Adversarial multiple kernel learning}
\label{sec:mkl}

The next proposition  is the equivalent~\cref{thm:dual_formulation}
\begin{proposition}
    \label{thm:dual_formulation_mkl}
    Let $\myf =\frac{1}{\ntrain}\sum_{i=1}^{\ntrain}\myf_i$ where $\myf_i \in \RKHS_i$, then :
    \begin{equation*}
    \max_{d\in (\cap_i\Omega_{\RKHS_i})}  (y - \dotpp{\myf}{\fmap(\x) + \dx})^2 = (|y - \sum_i\myf_i(\x)| + \delta \sum_j\|\myf_j\|_{\RKHS_j})^2.
    \end{equation*}
\end{proposition}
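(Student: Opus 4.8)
The plan is to mirror the single-kernel argument behind \Cref{thm:dual_formulation}, replacing the single Cauchy--Schwarz step by one applied componentwise across the $D$ kernels. First I would fix the sample $(\x, y)$ and expand the perturbed prediction. Writing $\dx = (\dx_1, \dots, \dx_D)$ for the perturbation in the direct-sum space and using $\myf = \sum_j \myf_j$ with $\myf_j \in \RKHS_j$, the reproducing property in each component gives
\[
\dotpp{\myf}{\fmap(\x) + \dx} = \sum_{j=1}^D \myf_j(\x) + \sum_{j=1}^D \dotpp{\myf_j}{\dx_j}_{\RKHS_j}.
\]
Setting $r = y - \sum_{j=1}^D \myf_j(\x)$ and $L(\dx) = \sum_{j=1}^D \dotpp{\myf_j}{\dx_j}_{\RKHS_j}$, the inner maximization becomes $\max_{\dx \in \Omega_{\bar\RKHS}} (r - L(\dx))^2$, where the constraint $\Omega_{\bar\RKHS}$ amounts to $\|\dx_j\|_{\RKHS_j} \le \delta$ for every $j$.

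Second, I would reduce the maximization of $(r - L(\dx))^2$ to maximizing the linear form $L$. Since the feasible set is symmetric ($\dx \in \Omega_{\bar\RKHS} \Rightarrow -\dx \in \Omega_{\bar\RKHS}$) and $L$ is linear, the attained values of $L$ form a symmetric interval $[-M, M]$ with $M = \sup_{\dx \in \Omega_{\bar\RKHS}} L(\dx)$. Because $t \mapsto (r-t)^2$ is convex, its maximum over $[-M, M]$ is attained at an endpoint, so $\max_{\dx}(r - L(\dx))^2 = \max\!\big((r-M)^2, (r+M)^2\big) = (|r| + M)^2$. It then remains only to evaluate $M$.

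Third --- the one genuinely kernel-specific step --- I would show $M = \delta \sum_{j=1}^D \|\myf_j\|_{\RKHS_j}$. The upper bound follows by applying Cauchy--Schwarz in each $\RKHS_j$ separately, $L(\dx) \le \sum_j \|\myf_j\|_{\RKHS_j}\,\|\dx_j\|_{\RKHS_j} \le \delta \sum_j \|\myf_j\|_{\RKHS_j}$. For the matching lower bound I would exhibit an explicit maximizer, taking $\dx_j = \delta\, \myf_j / \|\myf_j\|_{\RKHS_j}$ componentwise (and $\dx_j = 0$ whenever $\myf_j = 0$); this is feasible and yields $\dotpp{\myf_j}{\dx_j}_{\RKHS_j} = \delta \|\myf_j\|_{\RKHS_j}$ in each space. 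Substituting $M$ into $(|r| + M)^2$ produces the claimed identity.

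The main subtlety I expect is the decoupling in this last step: because $\Omega_{\bar\RKHS}$ is an intersection that bounds the perturbation independently in each kernel norm, the per-component maximizers may point in different directions across the $\RKHS_j$, yet they can be selected simultaneously. This is exactly what yields the additive penalty $\delta \sum_j \|\myf_j\|_{\RKHS_j}$ rather than a single joint-norm term, and it is where the structure of $\Omega_{\bar\RKHS}$ (as opposed to a single ball) is essential. I would therefore state the feasibility of the joint maximizer and the symmetry argument carefully; the remaining algebra is routine.
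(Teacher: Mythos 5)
Your proof is correct, and it supplies an argument the paper itself omits: \Cref{thm:dual_formulation_mkl} is only asserted in the appendix as ``the equivalent'' of \Cref{thm:dual_formulation}, with no written proof, so there is nothing to compare step for step. Your three steps --- the componentwise reproducing property, the symmetry-plus-convexity reduction of $\max_{\dx}(r - L(\dx))^2$ to $(|r|+M)^2$, and the evaluation of $M$ by per-component Cauchy--Schwarz together with an explicit feasible maximizer --- are exactly the natural extension of the single-kernel argument and are all sound. One point deserves emphasis, and you have implicitly made the right call: the paper's displayed definition of $\Omega_{\bar\RKHS}$ reads literally as a set of \emph{single} functions $\dx \in \cap_j \RKHS_j$ with $\max_j\|\dx\|_{\RKHS_j}\le\delta$, under which the claimed identity would generally fail, since a single $\dx$ cannot align simultaneously with every $\myf_j$ and only the inequality $\le$ would survive (e.g.\ $\RKHS_1=\RKHS_2$ with $\myf_1=-\myf_2$ gives supremum $0$, not $2\delta\|\myf_1\|_{\RKHS_1}$). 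Your reading of $\dx$ as a tuple $(\dx_1,\dots,\dx_D)$ in the direct sum, with each component bounded independently, is the interpretation under which the proposition is true and the one that the additive penalty $\delta\sum_j\|\myf_j\|_{\RKHS_j}$ and the subsequent MKL algorithm require; stating this explicitly, as you do in your closing paragraph, is the right thing to do. (Minor: the hypothesis $\myf=\frac{1}{\ntrain}\sum_{i=1}^{\ntrain}\myf_i$ in the statement is a typo for $\myf=\sum_{j=1}^D\myf_j$, which you have silently corrected.)
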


A similar algorithm to that proposed in~\Cref{alg:irrr} also apply for the case of adversarial multiple kernel learning. The algorithm also follows from the  $\eta$-trick  that yields the following variational reformulation of the loss: 
\begin{align*}
   (|y - \sum_j\myf_j(\x)| + \delta\sum_j \|\myf_j\|_{\RKHS_j})^2 = &\min_{\eta^{0},\eta^{1}}\frac{|y - \sum_j\myf_j(\x)|^2}{\eta^{0}} +\sum_j \frac{\delta \|\myf_j\|_{\RKHS_j}^2}{\eta^{j}}\\
& \text{s.t. } \sum_j\eta^{j} = 1,\eta^{j}>0.
\end{align*}
We apply the above reformulation for each sample, obtaining
and solve it as a block-wise coordinate problem, alternating between 1) solving over $\eta$ to compute the weights $w_i = 1/ \eta^{0}_i$, $\lambda_i = \sum_{i} \frac{\delta}{\eta^{1}_i} $; and 2) solving the reweighted ridge regression problems that arises.

\section{Numerical experiments}

\subsection{Datasets}

We used 5 different data sets in our experiments.
\begin{itemize}
    \item \textbf{Diabetes:} \citep{efron_least_2004} The dataset has $\nfeatures\mathord{=}10$ baseline variables (age, sex, body mass index, average blood pressure, and six blood serum measurements), which were obtained for $n\mathord{=}442$ diabetes patients. The model output is a quantitative measure of the disease progression.

    \item \textbf{Abalone:} (OpenML ID=30, UCI ID=1) Predicting the age of abalone from $p\mathord{=}8$ physical measurements.  The age of abalone is determined by cutting the shell through the cone, staining it, and counting the number of rings through a microscope -- a boring and time-consuming task.  Other measurements, which are easier to obtain, are used to predict the age of the abalone. It considers $n\mathord{=}4417$ examples. 
    
    \item \textbf{Wine quality:} \citep[UCI ID=186]{cortez_modeling_2009} A large dataset ($n\mathord{=}4898$) with white and red `` vinho verde''  samples (from Portugal) used to predict human wine taste preferences. It considers $p\mathord{=}11$ features that describe physicochemical properties of the wine.
    
    \item \textbf{Polution:} \citep[OpenML ID=542]{mcdonald_instabilities_1973} Estimates relating air pollution to mortality. It considers $p\mathord{=}15$ features including precipitation, temperature over the year, percentage of the population over 65, besides socio-economic variables, and concentrations of different compounds in the air. In total it considers It tries to predict age-adjusted mortality rate in $n\mathord{=}60$ different locations.

    \item \textbf{US crime:} \citep[OpenML ID=42730,  UCI ID=182]{redmond_data-driven_2002} This dataset combines $p\mathord{=}127$ features that come from socio-economic data from the US Census, law enforcement data from the LEMAS survey, and crime data from the FBI for $n\mathord{=}1994$ comunities. The task is to predict violent crimes per capita in the US as target. 

\end{itemize}
\subsection{Additional results}

\begin{figure}[H]
    \centering
    \subfloat[Matérn 1/2]{
    \includegraphics[width=0.32\linewidth]{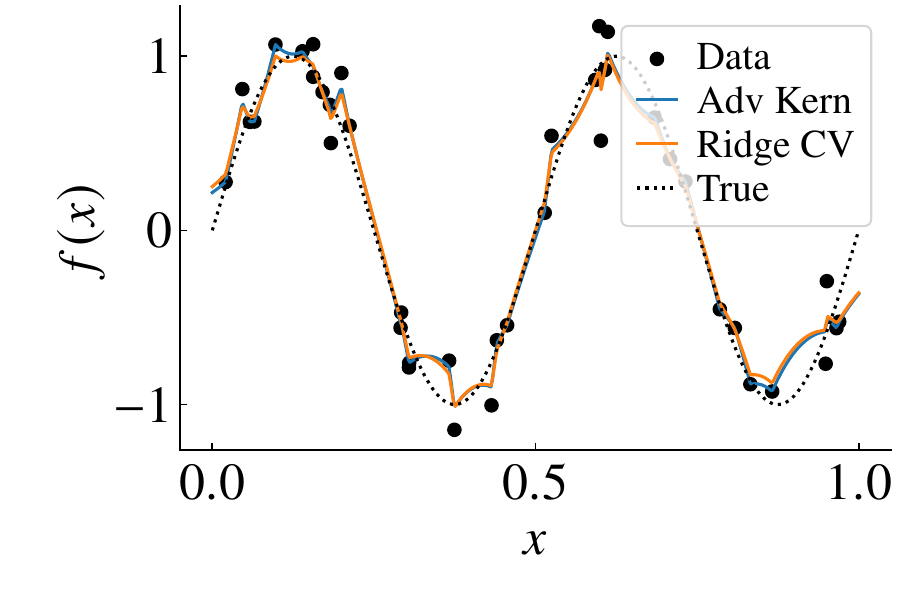}}
    \subfloat[Matérn 3/2]{\includegraphics[width=0.32\linewidth]{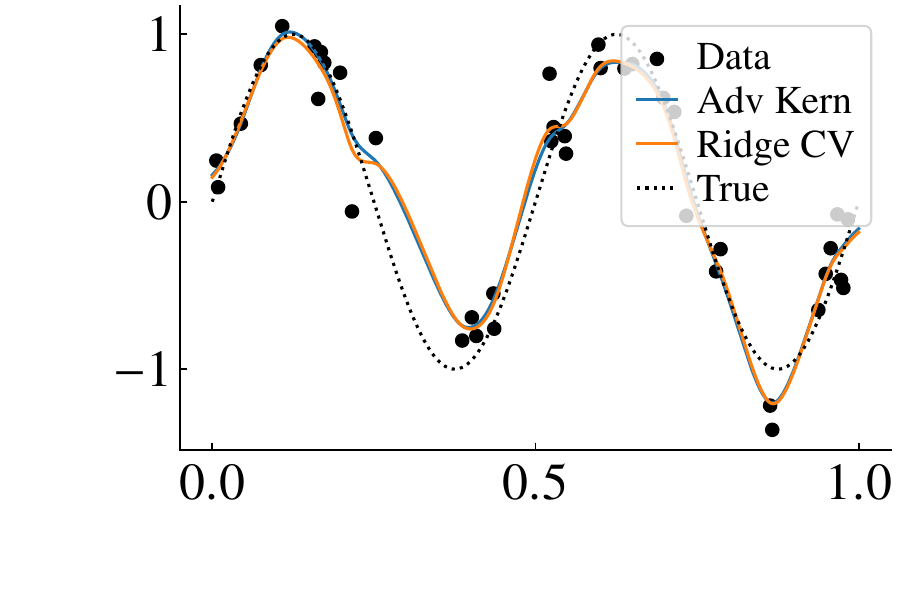}}
    \subfloat[Gaussian]{\includegraphics[width=0.32\linewidth]{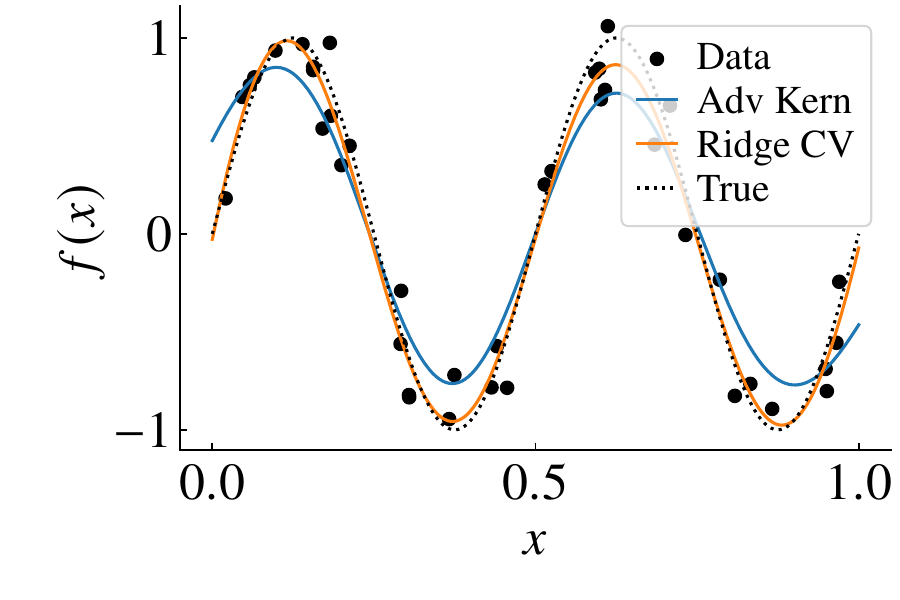}}\\
    \subfloat[Matérn 1/2]{
    \includegraphics[width=0.32\linewidth]{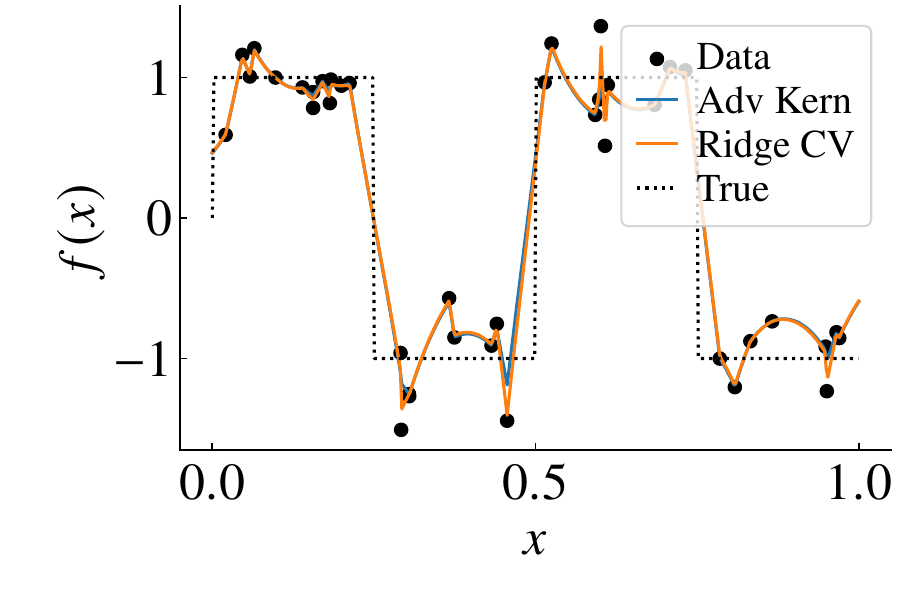}}
    \subfloat[Matérn 3/2]{\includegraphics[width=0.32\linewidth]{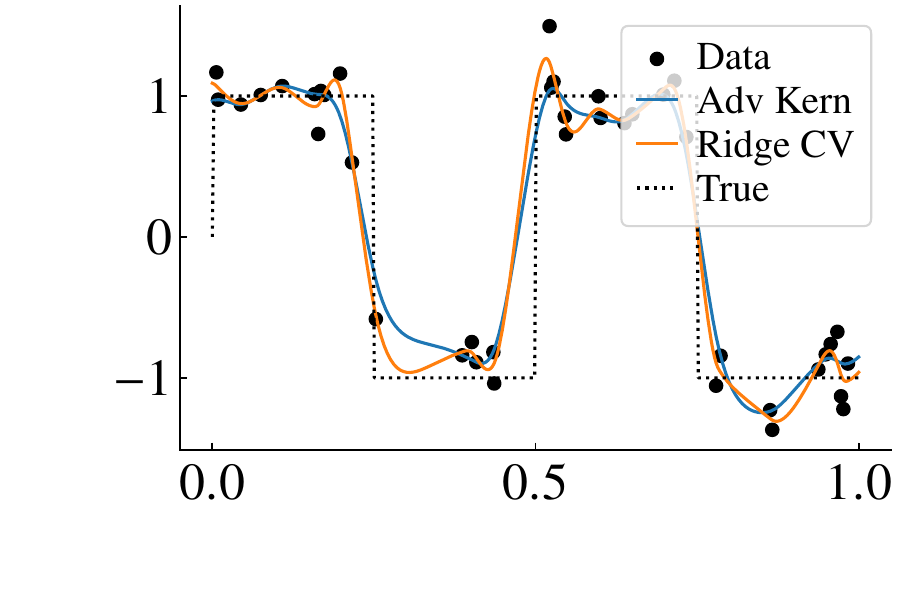}}
    \subfloat[Gaussian]{\includegraphics[width=0.32\linewidth]{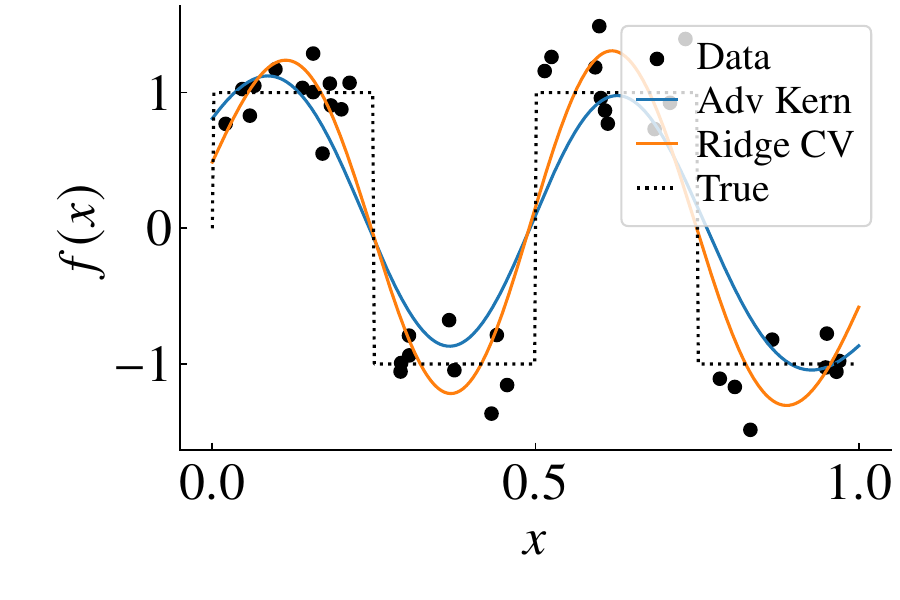}}\\
    \caption{We compare adversarial kernel training with cross-validated kernel ridge regression (Ridge CV). For (a)-(c), we show how it fits a smooth target function.  For (d)-(f), we show how it fits a non-smooth target function. }    
    \label{fig:1dexamples}
\end{figure}

\begin{figure}[H]
    \centering
    
    \subfloat[Linear]{\includegraphics[width=0.33\linewidth]{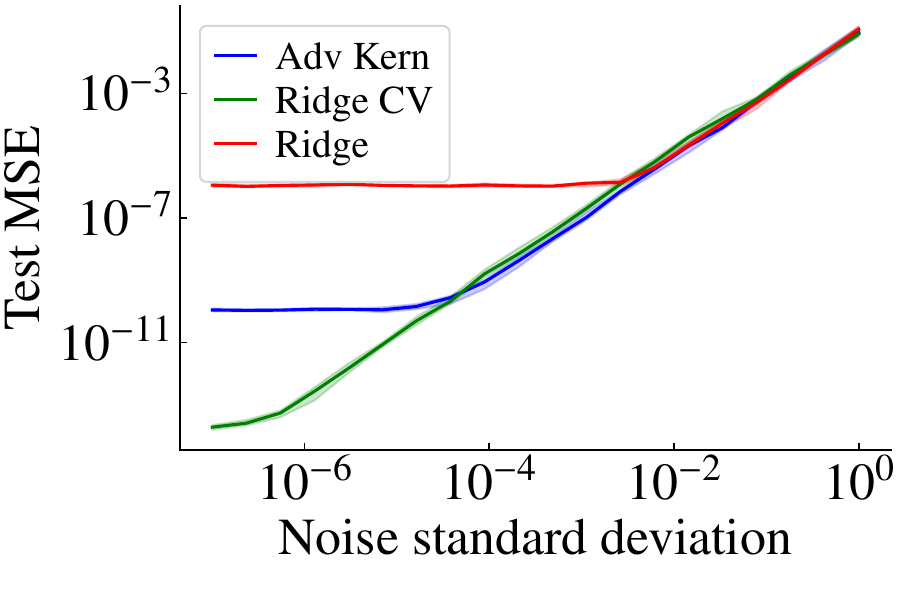}}
    \subfloat[Sine wave]{
    \includegraphics[width=0.33\linewidth]{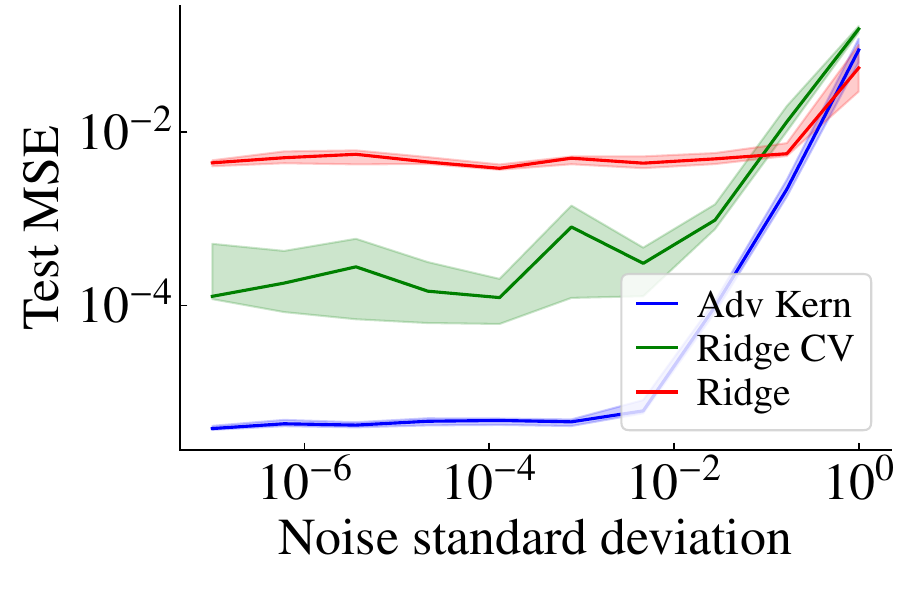}}
    \subfloat[Square wave]{\includegraphics[width=0.33\linewidth]{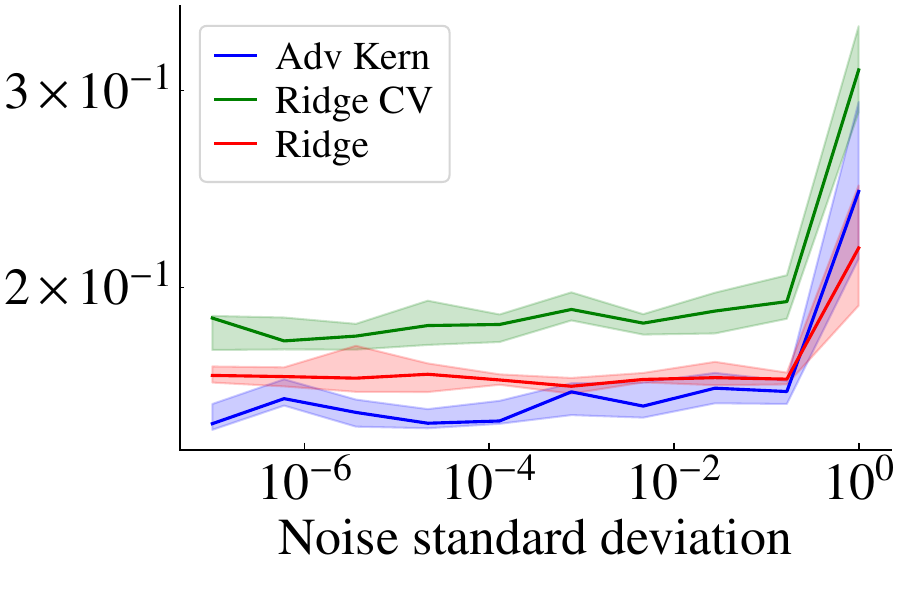}}
    \caption{We compare Adversarial Kernel Training (Adv Kern) against two baseline methods: Kernel Ridge Regression with cross-validation (Ridge CV) and Kernel Ridge Regression without cross-validation (Ridge). Our evaluation includes the sine and square wave examples shown in~\Cref{fig:1dexamples}, using a Gaussian kernel, as well as a synthetic linear dataset using a linear kernel. To assess robustness to noise, we vary the standard deviation of Gaussian noise added to the data from $10^{-7}$ to $10^{0}$.}  
    \label{fig:snr}
\end{figure}

\begin{table}[H]
\centering
\vspace{-7pt}
\caption{Empirically observed rate of convergence $r$, such that, Test MSE $~\propto n^{r}$. Computed as the slope of  the dashed line the linear approximation in
\Cref{fig:adversarial-kernel-training}(right) and \Cref{fig:adversarial-kernel-training-all}.}
\begin{tabular}{lrrrr}
\toprule
 & \multicolumn{2}{c}{Non-smooth} & \multicolumn{2}{c}{Smooth} \\
 &      Adv Kern &     Ridge CV &       Adv Kern &    Ridge CV\\
\midrule
Mat\'ern $1/2$ & -0.67 & -0.73 & -0.92 & -1.02 \\
Mat\'ern $3/2$ & -0.45 & -0.73 & -1.06 & -1.07 \\
Mat\'ern $5/2$ & -0.37 & -0.63 & -1.08 & -1.08 \\
Gaussian & -0.13 & -0.24 & -1.04 & -1.14 \\
\bottomrule
\end{tabular}
\label{tab:rates}
\end{table}%

\begin{figure}[H]
    \centering
    \subfloat{\includegraphics[width=0.5\linewidth]{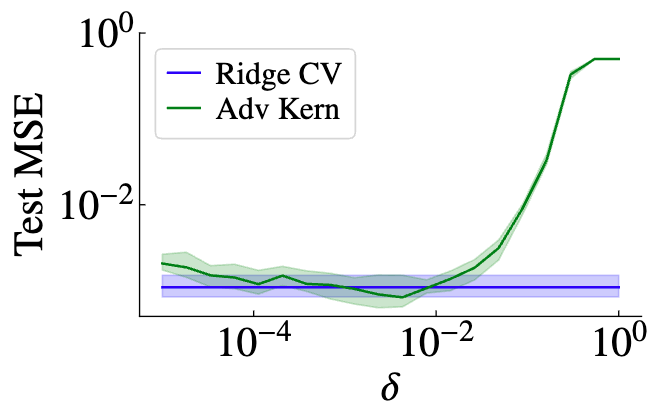}}
    \subfloat{\includegraphics[width=0.5\linewidth]{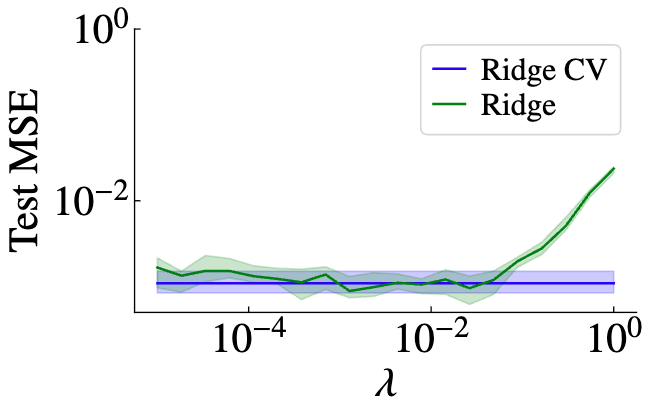}}
    \caption{We compare Adversarial Kernel Training (Adv Kern) and Kernel Ridge Regression without cross-validation (Ridge) against Kernel Ridge Regression with cross-validation (Ridge CV). We train and evaluate on the sine wave example shown in~\Cref{fig:1dexamples}, using a Gaussian kernel, and plot the test MSE as a function of $\delta$ and $\lambda$ respectively.}
    \label{fig:sensitivity}
\end{figure}

\begin{table}[H]
\centering

\subfloat[Pollution]{\scriptsize
\begin{tabular}{l|c|cc|cc}
\toprule
     &  & \multicolumn{2}{c|}{ test-time attack  $\ell_2$ } & \multicolumn{2}{c}{ test-time attack $\ell_\infty$ } \\ 
     training method &    no attack &   $\{\|\Delta \x\|_2 \le 0.01\}$ &    $\{\|\Delta \x\|_2 \le 0.1\}$ &   $\{\|\Delta \x\|_{\infty} \le 0.01\}$ &   $\{\|\Delta \x\|_{\infty} \le 0.1\}$  \\
    \midrule
\textbf{Adv Kern} ($\delta\propto n^{-1/2}$) & 0.70 (0.58–0.77) & 0.69 (0.59–0.76) & 0.66 (0.56–0.75) & 0.69 (0.57–0.76) & \textbf{0.58} (0.44–0.66) \\
Ridge Kernel  ($\lambda$ cross-validated)  & 0.68 (0.57–0.73) & 0.67 (0.56–0.73) & 0.62 (0.47–0.68) & 0.66 (0.54–0.71) & 0.49 (0.33–0.56) \\
\midrule
\textbf{Adv Kern} $\{\|\dx\|_\RKHS \le 0.01\}$ & \textbf{0.72} (0.59–0.84) & \textbf{0.72} (0.60–0.84) & \textbf{0.67} (0.51–0.78) & \textbf{0.71} (0.55–0.83) & 0.51 (0.29–0.63) \\
\textbf{Adv Kern} $\{\|\dx\|_\RKHS \le 0.1\}$ & 0.67 (0.57–0.72) & 0.67 (0.58–0.72) & 0.65 (0.54–0.70) & 0.67 (0.57–0.72) & \textbf{0.58} (0.49–0.64) \\
Adv Input $\{\|\Delta \x\|_2 \le 0.1\}$ & 0.67 (0.54–0.73) & 0.66 (0.55–0.72) & 0.61 (0.46–0.69) & 0.65 (0.51–0.72) & 0.47 (0.26–0.53) \\
Adv Input $\{\|\Delta \x\|_\infty \le 0.1\}$ & 0.68 (0.57–0.75) & 0.67 (0.56–0.74) & 0.63 (0.51–0.69) & 0.66 (0.53–0.74) & 0.49 (0.30–0.56) \\
\bottomrule
\end{tabular}
}

\subfloat[Diabetes]{\scriptsize
\begin{tabular}{l|c|cc|cc}
\toprule
     &  & \multicolumn{2}{c|}{ test-time attack  $\ell_2$ } & \multicolumn{2}{c}{ test-time attack $\ell_\infty$ } \\ 
     training method &    no attack &   $\{\|\Delta \x\|_2 \le 0.01\}$ &    $\{\|\Delta \x\|_2 \le 0.1\}$ &   $\{\|\Delta \x\|_{\infty} \le 0.01\}$ &   $\{\|\Delta \x\|_{\infty} \le 0.1\}$  \\
\midrule
\textbf{Adv Kern} ($\delta\propto n^{-1/2}$) & \textbf{0.38} (0.29–0.44) & 0.37 (0.26–0.43) & \textbf{0.31} (0.21–0.38) & 0.36 (0.27–0.43) & 0.20 (0.09–0.28) \\
Ridge Kernel  ($\lambda$ cross-validated)  & \textbf{0.38} (0.27–0.44) & 0.37 (0.26–0.43) & 0.29 (0.18–0.38) & 0.35 (0.25–0.43) & 0.15 (0.02–0.24) \\
\midrule
\textbf{Adv Kern} $\{\|\dx\|_\RKHS \le 0.01\}$ & \textbf{0.38} (0.27–0.44) & \textbf{0.38} (0.28–0.44) & \textbf{0.31} (0.19–0.37) & \textbf{0.37} (0.26–0.43) & \textbf{0.19} (0.09–0.27) \\
\textbf{Adv Kern} $\{\|\dx\|_\RKHS \le 0.1\}$ & 0.32 (0.22–0.38) & 0.32 (0.23–0.38) & 0.27 (0.18–0.34) & 0.31 (0.22–0.37) & 0.17 (0.06–0.23) \\
Adv Input $\{\|\Delta \x\|_2 \le 0.1\}$ & 0.37 (0.27–0.45) & 0.37 (0.25–0.43) & 0.29 (0.18–0.36) & 0.35 (0.24–0.43) & 0.15 (0.02–0.24) \\
Adv Input $\{\|\Delta \x\|_\infty \le 0.1\}$ & 0.37 (0.26–0.45) & 0.37 (0.25–0.44) & 0.29 (0.19–0.38) & 0.35 (0.24–0.42) & 0.16 (0.04–0.22) \\
\bottomrule
\end{tabular}
}

\subfloat[US Crime]{\scriptsize
\begin{tabular}{l|c|cc|cc}
\toprule
     &  & \multicolumn{2}{c|}{ test-time attack  $\ell_2$ } & \multicolumn{2}{c}{ test-time attack $\ell_\infty$ } \\ 
     training method &    no attack &   $\{\|\Delta \x\|_2 \le 0.01\}$ &    $\{\|\Delta \x\|_2 \le 0.1\}$ &   $\{\|\Delta \x\|_{\infty} \le 0.01\}$ &   $\{\|\Delta \x\|_{\infty} \le 0.1\}$  \\
    \midrule
\textbf{Adv Kern} ($\delta\propto n^{-1/2}$) & \textbf{0.65} (0.63–0.67) & \textbf{0.65} (0.63–0.67) & \textbf{0.63} (0.61–0.65) & \textbf{0.63} (0.61–0.65) & \textbf{0.44} (0.41–0.47) \\
Ridge Kernel  ($\lambda$ cross-validated)  & \textbf{0.65} (0.63–0.67) & \textbf{0.65} (0.62–0.67) & 0.61 (0.59–0.63) & 0.62 (0.59–0.64) & 0.21 (0.16–0.24) \\
\midrule
\textbf{Adv Kern} $\{\|\dx\|_\RKHS \le 0.01\}$ & \textbf{0.65} (0.63–0.67) & \textbf{0.65} (0.63–0.67) & \textbf{0.63} (0.61–0.65) & \textbf{ 0.63} (0.61–0.66) & \textbf{0.43} (0.40–0.46) \\
\textbf{Adv Kern} $\{\|\dx\|_\RKHS \le 0.1\}$ & 0.45 (0.44–0.46) & 0.45 (0.43–0.46) & 0.44 (0.42–0.45) & 0.44 (0.43–0.45) & 0.36 (0.35–0.37) \\
Adv Input $\{\|\Delta \x\|_2 \le 0.1\}$ & \textbf{0.65} (0.63–0.67) & \textbf{0.65} (0.62–0.67) & 0.61 (0.59–0.63) & 0.62 (0.60–0.64) & 0.21 (0.17–0.24) \\
Adv Input $\{\|\Delta \x\|_\infty \le 0.1\}$ & \textbf{0.65} (0.63–0.67) & \textbf{0.65} (0.63–0.67) & 0.61 (0.59–0.63) & 0.62 (0.59–0.64) & 0.21 (0.17–0.25) \\
\bottomrule
\end{tabular}
}

\subfloat[Wine]{\scriptsize
\begin{tabular}{l|c|cc|cc}
\toprule
     &  & \multicolumn{2}{c|}{ test-time attack  $\ell_2$ } & \multicolumn{2}{c}{ test-time attack $\ell_\infty$ } \\ 
     training method &    no attack &   $\{\|\Delta \x\|_2 \le 0.01\}$ &    $\{\|\Delta \x\|_2 \le 0.1\}$ &   $\{\|\Delta \x\|_{\infty} \le 0.01\}$ &   $\{\|\Delta \x\|_{\infty} \le 0.1\}$  \\
    \midrule
\textbf{Adv Kern} ($\delta\propto n^{-1/2}$) & 0.39 (0.37–0.40) & \textbf{0.38} (0.36–0.39) & \textbf{0.28} (0.26–0.29) & \textbf{0.36} (0.35–0.37) & 0.06 (0.04–0.07) \\
Ridge Kernel  ($\lambda$ cross-validated)  & 0.38 (0.36–0.39) & 0.36 (0.34–0.37) & 0.20 (0.18–0.21) & 0.33 (0.32–0.35) & -0.17 (-0.20–-0.15) \\
\midrule
\textbf{Adv Kern} $\{\|\dx\|_\RKHS \le 0.01\}$ & \textbf{0.40} (0.39–0.41) & \textbf{0.38} (0.37–0.40) & 0.25 (0.23–0.26) & 0.36 (0.34–0.37) & -0.07 (-0.09–-0.06) \\
\textbf{Adv Kern} $\{\|\dx\|_\RKHS \le 0.1\}$ & 0.16 (0.16–0.17) & 0.16 (0.15–0.16) & 0.14 (0.14–0.15) & 0.16 (0.15–0.16) & \textbf{0.11} (0.10–0.11) \\
Adv Input $\{\|\Delta \x\|_2 \le 0.1\}$ & 0.38 (0.36–0.39) & 0.36 (0.35–0.38) & 0.20 (0.19–0.22) & 0.33 (0.32–0.35) & -0.16 (-0.18–-0.14) \\
Adv Input $\{\|\Delta \x\|_\infty \le 0.1\}$ & 0.38 (0.36–0.39) & 0.36 (0.35–0.37) & 0.20 (0.19–0.21) & 0.33 (0.32–0.35) & -0.15 (-0.18–-0.13) \\
\bottomrule
\end{tabular}
}

\subfloat[Abalone]{\scriptsize
\begin{tabular}{l|c|cc|cc}
\toprule
     &  & \multicolumn{2}{c|}{ test-time attack  $\ell_2$ } & \multicolumn{2}{c}{ test-time attack $\ell_\infty$ } \\ 
     training method &    no attack &   $\{\|\Delta \x\|_2 \le 0.01\}$ &    $\{\|\Delta \x\|_2 \le 0.1\}$ &   $\{\|\Delta \x\|_{\infty} \le 0.01\}$ &   $\{\|\Delta \x\|_{\infty} \le 0.1\}$  \\
    \midrule
\textbf{Adv Kern} ($\delta\propto n^{-1/2}$) & \textbf{0.57} (0.56–0.58) & \textbf{0.55} (0.54–0.57) & 0.39 (0.37–0.40) & \textbf{0.54} (0.52–0.55) & 0.13 (0.11–0.16) \\
Ridge Kernel  ($\lambda$ cross-validated)  & \textbf{0.57}(0.56–0.59) & \textbf{0.55} (0.53–0.56) & 0.26 (0.24–0.28) & 0.52 (0.51–0.54) & -0.16 (-0.20–-0.13) \\
\midrule
\textbf{Adv Kern} $\{\|\dx\|_\RKHS \le 0.01\}$ & 0.56 (0.55–0.58) & \textbf{0.55} (0.53–0.56) & \textbf{0.40} (0.39–0.42) & 0.53 (0.52–0.55) & 0.18 (0.16–0.20) \\
\textbf{Adv Kern} $\{\|\dx\|_\RKHS \le 0.1\}$ & 0.38 (0.37–0.39) & 0.38 (0.37–0.39) & 0.35 (0.34–0.36) & 0.37 (0.36–0.38) & \textbf{0.30} (0.29–0.31) \\
Adv Input $\{\|\Delta \x\|_2 \le 0.1\}$ & \textbf{0.57} (0.56–0.59) & \textbf{0.55} (0.53–0.56) & 0.27 (0.25–0.29) & 0.52 (0.51–0.54) & -0.14 (-0.17–-0.11) \\
Adv Input $\{\|\Delta \x\|_\infty \le 0.1\}$ & \textbf{0.57} (0.55–0.58) & 0.54 (0.53–0.56) & 0.27 (0.25–0.29) & 0.52 (0.51–0.53) & -0.11 (-0.14–-0.08) \\
\bottomrule
\end{tabular}
}

\caption{R\textsuperscript{2} scores (higher is better) across datasets for varying adversarial training perturbation norms $p$ and adversarial radius $\delta$. We show the interquartile range obtained through bootstrap.}
\label{tab:multi-dataset-results}

\end{table}

\newpage
\section*{NeurIPs Paper Checklist}

\addcontentsline{toc}{section}{NeurIPs Paper Checklist}

\begin{enumerate}

\item {\bf Claims}
    \item[] Question: Do the main claims made in the abstract and introduction accurately reflect the paper's contributions and scope?
    \item[] Answer: \answerYes{} 
    \item[] Justification: \emph{Yes, the claims reflect the contributions and scope of the paper. The main contributions—problem equivalences, convexity, generalization bounds, and a practical algorithm—are clearly stated and developed, with experiments supporting the claims.}
    \item[] Guidelines:
    \begin{itemize}
        \item The answer NA means that the abstract and introduction do not include the claims made in the paper.
        \item The abstract and/or introduction should clearly state the claims made, including the contributions made in the paper and important assumptions and limitations. A No or NA answer to this question will not be perceived well by the reviewers. 
        \item The claims made should match theoretical and experimental results, and reflect how much the results can be expected to generalize to other settings. 
        \item It is fine to include aspirational goals as motivation as long as it is clear that these goals are not attained by the paper. 
    \end{itemize}

\item {\bf Limitations}
    \item[] Question: Does the paper discuss the limitations of the work performed by the authors?
    \item[] Answer: \answerYes{} 
    \item[] Justification: \emph{The limitations are discussed throughout the paper, especially in \Cref{sec:conclusion}, where we also point further directions for improvement.}
    \item[] Guidelines:
    \begin{itemize}
        \item The answer NA means that the paper has no limitation while the answer No means that the paper has limitations, but those are not discussed in the paper. 
        \item The authors are encouraged to create a separate "Limitations" section in their paper.
        \item The paper should point out any strong assumptions and how robust the results are to violations of these assumptions (e.g., independence assumptions, noiseless settings, model well-specification, asymptotic approximations only holding locally). The authors should reflect on how these assumptions might be violated in practice and what the implications would be.
        \item The authors should reflect on the scope of the claims made, e.g., if the approach was only tested on a few datasets or with a few runs. In general, empirical results often depend on implicit assumptions, which should be articulated.
        \item The authors should reflect on the factors that influence the performance of the approach. For example, a facial recognition algorithm may perform poorly when image resolution is low or images are taken in low lighting. Or a speech-to-text system might not be used reliably to provide closed captions for online lectures because it fails to handle technical jargon.
        \item The authors should discuss the computational efficiency of the proposed algorithms and how they scale with dataset size.
        \item If applicable, the authors should discuss possible limitations of their approach to address problems of privacy and fairness.
        \item While the authors might fear that complete honesty about limitations might be used by reviewers as grounds for rejection, a worse outcome might be that reviewers discover limitations that aren't acknowledged in the paper. The authors should use their best judgment and recognize that individual actions in favor of transparency play an important role in developing norms that preserve the integrity of the community. Reviewers will be specifically instructed to not penalize honesty concerning limitations.
    \end{itemize}

\item {\bf Theory assumptions and proofs}
    \item[] Question: For each theoretical result, does the paper provide the full set of assumptions and a complete (and correct) proof?
    \item[] Answer: \answerYes{} 
    \item[] Justification: \emph{All theoretical results have clearly stated assumptions and are accompanied by proofs in the text or suplementary material.}
    \item[] Guidelines:
    \begin{itemize}
        \item The answer NA means that the paper does not include theoretical results. 
        \item All the theorems, formulas, and proofs in the paper should be numbered and cross-referenced.
        \item All assumptions should be clearly stated or referenced in the statement of any theorems.
        \item The proofs can either appear in the main paper or the supplemental material, but if they appear in the supplemental material, the authors are encouraged to provide a short proof sketch to provide intuition. 
        \item Inversely, any informal proof provided in the core of the paper should be complemented by formal proofs provided in appendix or supplemental material.
        \item Theorems and Lemmas that the proof relies upon should be properly referenced. 
    \end{itemize}

    \item {\bf Experimental result reproducibility}
    \item[] Question: Does the paper fully disclose all the information needed to reproduce the main experimental results of the paper to the extent that it affects the main claims and/or conclusions of the paper (regardless of whether the code and data are provided or not)?
    \item[] Answer: \answerYes{} 
    \item[] Justification: \emph{The main contributions of the paper is a new formulation for adversarial training and its analysis. The optimization algorithm for it is described clearly and with pseudo-code provided. The numerical experiments provide details and the code is also provided.}
    \item[] Guidelines:
    \begin{itemize}
        \item The answer NA means that the paper does not include experiments.
        \item If the paper includes experiments, a No answer to this question will not be perceived well by the reviewers: Making the paper reproducible is important, regardless of whether the code and data are provided or not.
        \item If the contribution is a dataset and/or model, the authors should describe the steps taken to make their results reproducible or verifiable. 
        \item Depending on the contribution, reproducibility can be accomplished in various ways. For example, if the contribution is a novel architecture, describing the architecture fully might suffice, or if the contribution is a specific model and empirical evaluation, it may be necessary to either make it possible for others to replicate the model with the same dataset, or provide access to the model. In general. releasing code and data is often one good way to accomplish this, but reproducibility can also be provided via detailed instructions for how to replicate the results, access to a hosted model (e.g., in the case of a large language model), releasing of a model checkpoint, or other means that are appropriate to the research performed.
        \item While NeurIPS does not require releasing code, the conference does require all submissions to provide some reasonable avenue for reproducibility, which may depend on the nature of the contribution. For example
        \begin{enumerate}
            \item If the contribution is primarily a new algorithm, the paper should make it clear how to reproduce that algorithm.
            \item If the contribution is primarily a new model architecture, the paper should describe the architecture clearly and fully.
            \item If the contribution is a new model (e.g., a large language model), then there should either be a way to access this model for reproducing the results or a way to reproduce the model (e.g., with an open-source dataset or instructions for how to construct the dataset).
            \item We recognize that reproducibility may be tricky in some cases, in which case authors are welcome to describe the particular way they provide for reproducibility. In the case of closed-source models, it may be that access to the model is limited in some way (e.g., to registered users), but it should be possible for other researchers to have some path to reproducing or verifying the results.
        \end{enumerate}
    \end{itemize}

\item {\bf Open access to data and code}
    \item[] Question: Does the paper provide open access to the data and code, with sufficient instructions to faithfully reproduce the main experimental results, as described in supplemental material?
    \item[] Answer: \answerYes{} 
    \item[] Justification: \emph{Yes, we provide access to the code to reproduce the numerical experiments (\thelink).}
    \item[] Guidelines:
    \begin{itemize}
        \item The answer NA means that paper does not include experiments requiring code.
        \item Please see the NeurIPS code and data submission guidelines (\url{https://nips.cc/public/guides/CodeSubmissionPolicy}) for more details.
        \item While we encourage the release of code and data, we understand that this might not be possible, so “No” is an acceptable answer. Papers cannot be rejected simply for not including code, unless this is central to the contribution (e.g., for a new open-source benchmark).
        \item The instructions should contain the exact command and environment needed to run to reproduce the results. See the NeurIPS code and data submission guidelines (\url{https://nips.cc/public/guides/CodeSubmissionPolicy}) for more details.
        \item The authors should provide instructions on data access and preparation, including how to access the raw data, preprocessed data, intermediate data, and generated data, etc.
        \item The authors should provide scripts to reproduce all experimental results for the new proposed method and baselines. If only a subset of experiments are reproducible, they should state which ones are omitted from the script and why.
        \item At submission time, to preserve anonymity, the authors should release anonymized versions (if applicable).
        \item Providing as much information as possible in supplemental material (appended to the paper) is recommended, but including URLs to data and code is permitted.
    \end{itemize}

\item {\bf Experimental setting/details}
    \item[] Question: Does the paper specify all the training and test details (e.g., data splits, hyperparameters, how they were chosen, type of optimizer, etc.) necessary to understand the results?
    \item[] Answer: \answerYes{} 
    \item[] Justification: \emph{The paper specifies the most relevant training and test details in~\Cref{sec:numerical experiments}. Additional implementation details not explicitly mentioned are provided in Appendix and can be verified in the accompanying code.}
    \item[] Guidelines:
    \begin{itemize}
        \item The answer NA means that the paper does not include experiments.
        \item The experimental setting should be presented in the core of the paper to a level of detail that is necessary to appreciate the results and make sense of them.
        \item The full details can be provided either with the code, in appendix, or as supplemental material.
    \end{itemize}

\item {\bf Experiment statistical significance}
    \item[] Question: Does the paper report error bars suitably and correctly defined or other appropriate information about the statistical significance of the experiments?
    \item[] Answer: \answerYes{} 
    \item[] Justification: \emph{We use bootstrap sampling to compute the error bars, which represent the interquartile range.}
    \item[] Guidelines:
    \begin{itemize}
        \item The answer NA means that the paper does not include experiments.
        \item The authors should answer "Yes" if the results are accompanied by error bars, confidence intervals, or statistical significance tests, at least for the experiments that support the main claims of the paper.
        \item The factors of variability that the error bars are capturing should be clearly stated (for example, train/test split, initialization, random drawing of some parameter, or overall run with given experimental conditions).
        \item The method for calculating the error bars should be explained (closed form formula, call to a library function, bootstrap, etc.)
        \item The assumptions made should be given (e.g., Normally distributed errors).
        \item It should be clear whether the error bar is the standard deviation or the standard error of the mean.
        \item It is OK to report 1-sigma error bars, but one should state it. The authors should preferably report a 2-sigma error bar than state that they have a 96\% CI, if the hypothesis of Normality of errors is not verified.
        \item For asymmetric distributions, the authors should be careful not to show in tables or figures symmetric error bars that would yield results that are out of range (e.g. negative error rates).
        \item If error bars are reported in tables or plots, The authors should explain in the text how they were calculated and reference the corresponding figures or tables in the text.
    \end{itemize}

\item {\bf Experiments compute resources}
    \item[] Question: For each experiment, does the paper provide sufficient information on the computer resources (type of compute workers, memory, time of execution) needed to reproduce the experiments?
    \item[] Answer: \answerNo{} 
    \item[] Justification: \emph{This is not a very computationally intensive paper. All the experiments can run on a personal computer, hence we don't put a lot of emphasis in this aspect.}
    \item[] Guidelines:
    \begin{itemize}
        \item The answer NA means that the paper does not include experiments.
        \item The paper should indicate the type of compute workers CPU or GPU, internal cluster, or cloud provider, including relevant memory and storage.
        \item The paper should provide the amount of compute required for each of the individual experimental runs as well as estimate the total compute. 
        \item The paper should disclose whether the full research project required more compute than the experiments reported in the paper (e.g., preliminary or failed experiments that didn't make it into the paper). 
    \end{itemize}
    
\item {\bf Code of ethics}
    \item[] Question: Does the research conducted in the paper conform, in every respect, with the NeurIPS Code of Ethics \url{https://neurips.cc/public/EthicsGuidelines}?
    \item[] Answer: \answerYes{} 
    \item[] Justification: We comply with  NeurIPS Code of Ethics.
    \item[] Guidelines:
    \begin{itemize}
        \item The answer NA means that the authors have not reviewed the NeurIPS Code of Ethics.
        \item If the authors answer No, they should explain the special circumstances that require a deviation from the Code of Ethics.
        \item The authors should make sure to preserve anonymity (e.g., if there is a special consideration due to laws or regulations in their jurisdiction).
    \end{itemize}

\item {\bf Broader impacts}
    \item[] Question: Does the paper discuss both potential positive societal impacts and negative societal impacts of the work performed?
    \item[] Answer: \answerNA{} 
    \item[] Justification: \emph{This is a methods paper there is no immediate societal impact that deserves discussion.}
    \item[] Guidelines:
    \begin{itemize}
        \item The answer NA means that there is no societal impact of the work performed.
        \item If the authors answer NA or No, they should explain why their work has no societal impact or why the paper does not address societal impact.
        \item Examples of negative societal impacts include potential malicious or unintended uses (e.g., disinformation, generating fake profiles, surveillance), fairness considerations (e.g., deployment of technologies that could make decisions that unfairly impact specific groups), privacy considerations, and security considerations.
        \item The conference expects that many papers will be foundational research and not tied to particular applications, let alone deployments. However, if there is a direct path to any negative applications, the authors should point it out. For example, it is legitimate to point out that an improvement in the quality of generative models could be used to generate deepfakes for disinformation. On the other hand, it is not needed to point out that a generic algorithm for optimizing neural networks could enable people to train models that generate Deepfakes faster.
        \item The authors should consider possible harms that could arise when the technology is being used as intended and functioning correctly, harms that could arise when the technology is being used as intended but gives incorrect results, and harms following from (intentional or unintentional) misuse of the technology.
        \item If there are negative societal impacts, the authors could also discuss possible mitigation strategies (e.g., gated release of models, providing defenses in addition to attacks, mechanisms for monitoring misuse, mechanisms to monitor how a system learns from feedback over time, improving the efficiency and accessibility of ML).
    \end{itemize}
    
\item {\bf Safeguards}
    \item[] Question: Does the paper describe safeguards that have been put in place for responsible release of data or models that have a high risk for misuse (e.g., pretrained language models, image generators, or scraped datasets)?
    \item[] Answer: \answerNA{} 
    \item[] Justification: \emph{The paper poses no such risks. We propose new methods that are mostly benchmarked in simple examples.}
    \item[] Guidelines:
    \begin{itemize}
        \item The answer NA means that the paper poses no such risks.
        \item Released models that have a high risk for misuse or dual-use should be released with necessary safeguards to allow for controlled use of the model, for example by requiring that users adhere to usage guidelines or restrictions to access the model or implementing safety filters. 
        \item Datasets that have been scraped from the Internet could pose safety risks. The authors should describe how they avoided releasing unsafe images.
        \item We recognize that providing effective safeguards is challenging, and many papers do not require this, but we encourage authors to take this into account and make a best faith effort.
    \end{itemize}

\item {\bf Licenses for existing assets}
    \item[] Question: Are the creators or original owners of assets (e.g., code, data, models), used in the paper, properly credited and are the license and terms of use explicitly mentioned and properly respected?
    \item[] Answer: \answerYes{} 
    \item[] Justification: \emph{We use open datasets in our experiments. All datasets include corresponding citations and descriptions in the Appendix.}
    \item[] Guidelines:
    \begin{itemize}
        \item The answer NA means that the paper does not use existing assets.
        \item The authors should cite the original paper that produced the code package or dataset.
        \item The authors should state which version of the asset is used and, if possible, include a URL.
        \item The name of the license (e.g., CC-BY 4.0) should be included for each asset.
        \item For scraped data from a particular source (e.g., website), the copyright and terms of service of that source should be provided.
        \item If assets are released, the license, copyright information, and terms of use in the package should be provided. For popular datasets, \url{paperswithcode.com/datasets} has curated licenses for some datasets. Their licensing guide can help determine the license of a dataset.
        \item For existing datasets that are re-packaged, both the original license and the license of the derived asset (if it has changed) should be provided.
        \item If this information is not available online, the authors are encouraged to reach out to the asset's creators.
    \end{itemize}

\item {\bf New assets}
    \item[] Question: Are new assets introduced in the paper well documented and is the documentation provided alongside the assets?
    \item[] Answer: \answerNA{} 
    \item[] Justification: \emph{We do not release new assets, but we release the code for reproducing the results in the paper and for using the proposed method (see the answer to question “Open access to data and code”).}
    \item[] Guidelines:
    \begin{itemize}
        \item The answer NA means that the paper does not release new assets.
        \item Researchers should communicate the details of the dataset/code/model as part of their submissions via structured templates. This includes details about training, license, limitations, etc. 
        \item The paper should discuss whether and how consent was obtained from people whose asset is used.
        \item At submission time, remember to anonymize your assets (if applicable). You can either create an anonymized URL or include an anonymized zip file.
    \end{itemize}

\item {\bf Crowdsourcing and research with human subjects}
    \item[] Question: For crowdsourcing experiments and research with human subjects, does the paper include the full text of instructions given to participants and screenshots, if applicable, as well as detils about compensation (if any)? 
    \item[] Answer: \answerNA{} 
    \item[] Justification: \emph{The paper does not involve crowdsourcing nor research with human subjects.}
    \item[] Guidelines:
    \begin{itemize}
        \item The answer NA means that the paper does not involve crowdsourcing nor research with human subjects.
        \item Including this information in the supplemental material is fine, but if the main contribution of the paper involves human subjects, then as much detail as possible should be included in the main paper. 
        \item According to the NeurIPS Code of Ethics, workers involved in data collection, curation, or other labor should be paid at least the minimum wage in the country of the data collector. 
    \end{itemize}

\item {\bf Institutional review board (IRB) approvals or equivalent for research with human subjects}
    \item[] Question: Does the paper describe potential risks incurred by study participants, whether such risks were disclosed to the subjects, and whether Institutional Review Board (IRB) approvals (or an equivalent approval/review based on the requirements of your country or institution) were obtained?
    \item[] Answer: \answerNA{} 
    \item[] Justification: \emph{The paper does not involve crowdsourcing nor research with human subjects.}
    \item[] Guidelines:
    \begin{itemize}
        \item The answer NA means that the paper does not involve crowdsourcing nor research with human subjects.
        \item Depending on the country in which research is conducted, IRB approval (or equivalent) may be required for any human subjects research. If you obtained IRB approval, you should clearly state this in the paper. 
        \item We recognize that the procedures for this may vary significantly between institutions and locations, and we expect authors to adhere to the NeurIPS Code of Ethics and the guidelines for their institution. 
        \item For initial submissions, do not include any information that would break anonymity (if applicable), such as the institution conducting the review.
    \end{itemize}

\item {\bf Declaration of LLM usage}
    \item[] Question: Does the paper describe the usage of LLMs if it is an important, original, or non-standard component of the core methods in this research? Note that if the LLM is used only for writing, editing, or formatting purposes and does not impact the core methodology, scientific rigorousness, or originality of the research, declaration is not required.
    \item[] Answer: \answerNA{} 
    \item[] Justification: \emph{LLM is used only for small improvements in writing and formatting purposes. It does not impact the core methodology of the research.}
    \item[] Guidelines: 
    \begin{itemize}
        \item The answer NA means that the core method development in this research does not involve LLMs as any important, original, or non-standard components.
        \item Please refer to our LLM policy (\url{https://neurips.cc/Conferences/2025/LLM}) for what should or should not be described.
    \end{itemize}

\end{enumerate}
\end{document}